\newcommand{\GM}{G_{G_2\rightarrow G_1}}
\newcommand{\GMs}{G_{\boldsymbol{\sigma}(G_2)\rightarrow G_1}}
\newcommand{\fo}{f^{\tau}}
\newcommand{\eo}{\epsilon^{\tau}}
\newcommand{\sG}{\mathcal{G}}
\newcommand{\po}{P_{\tau}}
\newcommand{\pt}{P_{\tau}}
\newcommand{\p}{\mathbb{P}}
\newcommand{\ps}{\mathbb{P}_{\boldsymbol{\sigma}} }
\newcommand{\e}{\mathbb{E}}
\newtheorem{theorem}{Theorem}
\newtheorem{claim}[theorem]{Claim}
\newtheorem{proposition}[theorem]{Proposition}
\newtheorem{lemma}[theorem]{Lemma}
\newtheorem{defn}[theorem]{Definition}
\newtheorem{remark}[theorem]{Remark}
\newtheorem{conjecture}[theorem]{Conjecture}
\begin{document}
\title{Information Recovery in Shuffled Graphs\\ via Graph Matching}
\author{Vince Lyzinski\\
\small{University of Massachusetts Amherst}} 

\maketitle

\begin{abstract}
While many multiple graph inference methodologies operate under the implicit assumption that an explicit vertex correspondence is known across the vertex sets of the graphs, in practice these correspondences may only be partially or errorfully known. 
Herein, we provide an information theoretic foundation for understanding the practical impact that errorfully observed vertex correspondences can have on subsequent inference, and the capacity of graph matching methods to recover the lost vertex alignment and inferential performance.
%
Working in the correlated stochastic blockmodel setting, 
we establish a duality between the loss of mutual information due to an errorfully observed vertex correspondence and the ability of graph matching algorithms to recover the true correspondence across graphs.
In the process, we establish a phase transition for graph matchability in terms of the correlation across graphs, and we conjecture the analogous phase
transition for the relative information loss due to shuffling vertex labels.
We demonstrate the practical effect that graph shuffling---and matching---can have on subsequent inference, with examples from two sample graph hypothesis testing and joint spectral graph clustering.

\smallskip
\noindent \textbf{Keywords.} Graph matching, information theory, stochastic blockmodel
\end{abstract}
\maketitle

\section{Introduction}
Graphs are an increasingly popular data modality in scientific research and statistical inference, with diverse applications in connectomics \cite{bullmore2009complex}, social network analysis \cite{carrington2005models}, and pattern recognition \cite{kandel2007applied}, to name a few.  
Many joint graph inference methodologies (see, for example, \cite{MT2,gray2012magnetic,bullmore2009complex,richiardi2011decoding}), joint graph embedding algorithms (see, for example, \cite{MMCV2,JOFC,sunpriebe2013,shen2014manifold}) and graph-valued time-series methodologies (see, for example, \cite{NL2,priebeTS,tangTS,wang2014locality}) operate under the implicit assumption that an explicit vertex correspondence is {\it a priori} known across the vertex sets of the graphs. 
While this assumption is natural in a host of real data settings, in many applications these correspondences may be unobserved and/or errorfully observed \cite{vogelstein2011shuffled}.
Connectomics offers a striking example of this continuum.  
Indeed, while for some simple organisms (e.g., the C. elegans roundworm \cite{white1986structure}) explicit neuron labels are known across specimen, and in human DTMRI connectomes, the vertices are often regions of the brain registered to a common template (see \cite{gray2012magnetic}), explicit cross-subject neuron labels are often unknown for more complex organisms.

How can we quantify the effect of the added uncertainty due to an errorfully observed vertex correspondence? 
Heuristically, if $(G_1,G_2)$ is a realization from a bivariate random graph model with the property that vertices that are aligned across graphs behave similarly in their respective networks, then the uncertainty in $G_2$ is greatly reduced by observing $G_1$ and the latent alignment.  
Indeed, in the extreme case of $G_1$ and $G_2$ being isomorphic, observing the latent alignment function and $G_1$ completely determines $G_2$.
However, as the vertex labels are shuffled uncertainty is introduced into the bivariate model.
In order to formalize this heuristic, we adopt an information theoretic perspective (see \cite{cover2012elements} for the necessary background).
We develop a bivariate graph model, the $\rho$-correlated stochastic blockmodel (Section \ref{S:rhocorr}), in which we are able to
formally address the information loss/increase in uncertainty due to an errorful labeling across graphs, and we further explore the impact this lost information has on subsequent inference (see Section \ref{S:infoloss}).

In the presence of a latent vertex correspondence that is errorfully observed across graphs, graph matching methodologies can be applied to recover the latent vertex alignment before performing subsequent inference. 
Consequently, as multiple graph inference has surged in popularity, so has graph matching; see \cite{ConteReview} and \cite{foggia2014graph} for an excellent review of the graph matching literature.
Formally, given two graphs with respective adjacency matrices $A$ and $B$, the graph matching problem (GMP) seeks to minimize $\|A-PBP^T\|_F$ over permutation matrices $P$--- i.e., the GMP seeks a relabeling of the vertices of $B$ that minimizes the number of induced edge disagreements between $A$ and $PBP^T$; see Section \ref{sec:GM} for more detail. 
While the related graph isomorphism problem has recently been shown to be of sub-exponential complexity \cite{babai2016graph},
there are no efficient algorithms known for the more general problem of graph matching.
Due to its practical utility and computational difficulty, myriad heuristics have been proposed in the literature for approximately solving the GMP; see, for example, \cite{ConteReview} and \cite{FAQ} and the references contained therein.

Working in the aforementioned correlated stochastic blockmodel setting, 
we uncover a duality between graph matchability (see Definition \ref{defn:matchable}) and information loss.
We show that in the regime where graph matching can recover the latent vertex alignment after label shuffling, relatively little information is lost in the shuffle.  
We conjecture the inverse statement to be true as well:
In the regime where graph matching cannot recover the latent vertex alignment after shuffling, a relatively nontrivial amount of information is lost in the shuffle.
Formalizing graph matching as the antithetical operation to label shuffling allows us to better understand the utility of graph matching as a data preprocessing tool.  
Indeed, while in the presence of modest correlation relatively little information is lost due to shuffling, this lost information can have a dramatic negative effect on subsequent inference.
While this may seem like an indictment against joint inference in the errorful correspondence setting, we demonstrate that graph matching can effectively recover almost all of the lost information (see Theorem \ref{thm3}) and, consequently, the lost inferential performance.

\noindent{\bf{Note:}} Throughout, for real-valued function 
$f(\cdot):\mathbb{R}\mapsto\mathbb{R}$ and $g(\cdot):\mathbb{R}\mapsto\mathbb{R}$, we shall write $f(n)\sim g(n)$ if $\lim_{n\rightarrow \infty} f(n)/g(n)=1.$
We will also make use of the abbreviation a.a.s. (for asymptotically almost surely) which will be used as follows.  A sequence of events $E_n$ occurs a.a.s. if $\p(E_n^c)\rightarrow 0$ at a rate fast enough to ensure $\sum_n\p(E_n^c)<\infty$.

\section{Background and Definitions}
\label{sec:background}

We seek to understand the information lost due to the vertex correspondence across graphs being errorfully known, as well as the capacity of graph matching to recover this lost information.
In this section we provide a statistical framework and the necessary definitions amenable to pursuing these problems further.

\subsection{Correlated Stochastic Blockmodels}
\label{S:rhocorr}

The random graph framework in which we will anchor our analysis is the correlated stochastic blockmodel (SBM) random graph model of \cite{lyzinski_spectral}.
SBM's are widely used to model networks exhibiting an underlying community structure \cite{sbm,sbm2}, and provide a simple model family which has been effectively used to approximate the behavior of complex network data \cite{airoldi13:_stoch,wolfe13:_nonpar,choi2014co}.  Letting $\mathcal{G}_n$ denote the set of labeled, $n$-vertex, simple, undirected graphs, we define:
\begin{defn}
\label{def:rcsbm}
$(G_1,G_2)\in\mathcal{G}_n\times\mathcal{G}_n$ are $\rho$-correlated SBM($K,\vec{n},b,\Lambda$) random graphs (abbreviated $\rho$-SBM) 
if:

\vspace{2mm}

\noindent1.  $G_1=(V,E_1)$ and $G_2=(V,E_2)$ are marginally SBM($K,\vec{n},b,\Lambda$); i.e., for each $i=1,2$, 
\begin{itemize}
\item[i.] The vertex set $V$ is the union of $K$
 blocks $V_1$, $V_2$, \ldots, $V_K$, which are  
disjoint sets with respective cardinalities 
$n_1$, $n_2$, \ldots, $n_K$;
\item[ii.] The block membership function $b:V\mapsto[K]=\{1,2,\ldots,K\}$ is such that for each $v \in V$, $b(v)$ denotes the 
block of $v$; i.e., $v \in V_{b(v)}$;
\item[iii.] The block adjacency probabilities are given by the symmetric matrix $\Lambda\in [0,1]^{K\times K}$; i.e., for each 
 pair of vertices $\{j,\ell \} \in \binom{V}{2}$, the 
adjacency of $j$ and $\ell$ is an independent Bernoulli trial 
with probability of success $\Lambda_{b(j),b(\ell)}$.
\end{itemize}
2.  The random variables
$\left\{\mathds{1}[\{j,k\}\in E_i]\right\}_{i=1,2; \{j,k\}\in\binom{V}{2}}$
are collectively independent except that for each $\{j,k\}\in\binom{V}{2},$ the correlation between $\mathds{1}[\{j,k\}\in E_1]$ and $\mathds{1}[\{j,k\}\in E_2]$ is $\rho\geq0$.
\end{defn}

One of the keys to the theoretical tractability of the $\rho$-SBM model is that we can construct $\rho$-SBM($K,\vec{n},b,\Lambda$) random graphs $(G_1,G_2)$ as follows.  First draw $G_1$ from the underlying SBM($K,\vec{n},b,\Lambda$) model.  Conditioning on $G_1=(V,E_1)$, for each $\{j,\ell \} \in \binom{V}{2}$, if $\{j,\ell\}\in E_1$ then
$\mathds{1}[\{j,\ell\}\in E_2]$ is an independent 
Bernoulli trial with parameter 
$\Lambda_{b(j),b(\ell)}+\rho (1-\Lambda_{b(j),b(\ell)})$; if $\{j,\ell\}\notin E_1$ then
$\mathds{1}[\{j,\ell\}\in E_2]$ is an independent 
Bernoulli trial with parameter 
$\Lambda_{b(j),b(\ell)}(1-\rho)$. 
If $\rho\in(0,1)$ so that $G_1$ and $G_2$ are a.a.s.\@ \emph{not} isomorphic, this construction highlights a natural alignment between the vertex sets of $G_1$ and $G_2$: namely the identity function $\text{id}_n:[n]\mapsto[n]$. 
Indeed, for modest $\rho$ the identity function is (with high probability) the permutation of the vertex set of $G_2$ that best preserves the shared structure between $G_1$ and $G_2$;
see Theorem \ref{thm:no_match}. 
As, in practice, this alignment is often errorfully observed, we shall refer to $\text{id}_n$ as the {\it latent alignment} between $G_1$ and $G_2$.

\begin{remark}
\label{rem:ER}
\emph{Note that $\rho$-correlated Erd\H os-R\'enyi$(n,p)$, abbreviated $\rho$-ER$(n,p)$, random graphs (resp., $\rho$-correlated heterogeneous Erd\H os-R\'enyi$(P)$ random graphs) are easily realized by letting $K=1$ (resp., $K=n$) in Definition \ref{def:rcsbm}}
\end{remark}

\subsection{Shuffled $\rho$-correlated SBM random graphs}
\label{S:shuffle}

To understand the effect of an errorfully observed latent alignment function, we first need to define the action of errorfully aligning two $\rho$-SBM random graphs.  
Slightly abusing notation, we let $\Pi(n)$ denote both the set of permutation matrices and the set of permutations of $[n]$; to avoid confusion in the sequel, we will use the Greek letters $\phi,\tau,$ and $\sigma$ to denote permutations of $[n]$ and capital Roman letters $P$ and $Q$ to denote permutation matrices. 
For $x=(V,E_x)\in\sG_n,$ and $\phi\in \Pi(n),$ we define the {\it $\phi$-shuffled graph} $\phi(x)=(V,E_{\phi(x)})\in \sG_n$ via
$\{i,j\}\in E_x\text{ iff }\{\phi(i),\phi(j)\}\in E_{\phi(x)}$.
Equivalently, if the adjacency matrix of $x$ is $A_x$ and the permutation matrix associated with $\phi$ is $P_\phi$, then the adjacency matrix of $\phi(x)$ is $P_\phi A_x P_\phi^T.$

For a deterministic permutation $\phi$, the act of shuffling $(G_1,G_2)\sim\rho$-SBM is realized as follows. 
For all $x,y\in\mathcal{G}_n$,
$\p(G_1=x,\phi(G_2)=y):=\p(G_1=x,G_2=\phi^{-1}(y))$.
The action of randomly shuffling the vertices of $\rho$-SBM random graphs can then be defined via:
\begin{defn}
\label{def:shuffled}
Let $\boldsymbol{\sigma}$ be an $\Pi(n)$-valued random variable.
$(G_1,\boldsymbol{\sigma}(G_2))\in \mathcal{G}_n\times\mathcal{G}_n$ are $\boldsymbol{\sigma}$-shuffled, $\rho$-correlated SBM($K,\vec{n},b,\Lambda$) random graphs (abbreviated $\boldsymbol{\sigma},\rho$-SBM) if
\begin{itemize}
\item[i.] $(G_1,G_2)\sim\rho$-SBM($K,\vec{n},b,\Lambda$);
\item[ii.] For any $x,y\in\sG_n\times\sG_n$, we have 
\[\p(G_1=x,\boldsymbol{\sigma}(G_2)=y)=\sum_{\phi\in \Pi(n)}\p(\boldsymbol{\sigma}=\phi)\p\left(G_1=x,G_2=\phi^{-1}(y)\right).\]
\end{itemize}
Simply stated, we first realize $\boldsymbol{\sigma}$; conditioned on $\boldsymbol{\sigma}=\phi$, we then independently realize 
$(G_1,\phi(G_2))$.
\end{defn}

\noindent{\bf Note:}  In the sequel, we shall use $\phi$ and $\tau$ to denote  deterministic permutations, and $\boldsymbol{\sigma}$ to denote a permutation-valued random variable. 


\subsection{Graph matching and graph matchability}
\label{sec:GM}

If the latent alignment between $A$ and $B$ is errorfully known, graph matching methods can applied to approximately recover the true alignment. 
We formally define the graph matching problem as follows:
\begin{defn}
\noindent Given two graphs $n$-vertex graphs $G_1$ and $G_2$ with respective adjacency matrices $A$ and $B$, the graph matching problem (GMP) is defined as 
$\min_{P\in \Pi(n)}\|A-PBP^T\|_F.$
\end{defn}
\noindent Note that the GMP objective function $\|A-PBP^T\|_F$ is equal to $\|AP-PB\|_F$ and solving the GMP is equivalent to solving $\max_{P\in \Pi(n)}\text{trace}(APBP^T)$.  
Intuitively, solving the GMP is equivalent to relabeling the vertices of $G_2$ so as to minimize the number of induced edge disagreements between $G_1$ and $G_2$.

While solving the graph matching problem
is NP-hard in general, there are a bevy of approximation algorithms and heuristics in the literature that perform well in practice \cite{Zaslavskiy2009,FAQ,FAP,jovo,JMLR:v15:lyzinski14a} (in addition, see the excellent survey papers \cite{ConteReview,foggia2014graph} for a thorough review of the prescient literature and discussion of numerous alternate formulations of the GMP).
Note that in Section \ref{S:infoloss}, to approximately match the shuffled graphs in our synthetic and real data applications, we use FAQ algorithm of \cite{FAQ} and, when {\em seeded} vertices are present, the SGM algorithm of \cite{FAP}.
\emph{Seeded vertices}, or seeds, are those vertices whose latent alignments are known a priori and are not subjected to any label shuffling.

Note also that the graph matching problem is closely related to the problem of entity resolution/record linkage (see, for example, \cite{dedup1,dedup2}), especially in the setting of highly attributed networks.
In the present $\rho$-SBM setting, there is a key difference between the paradigms highlighted by the non-recoverability of vertex correspondences in the presence of general edge-shuffling; see Section \ref{s:nograph} for detail.

In the $\rho$-SBM setting the correlation structure across $G_1$ and $G_2$ highlights the natural alignment, namely $\text{id}_n$, between the two graphs.
In Theorems \ref{them:GMSBM} and \ref{thm:no_match}, we establish a phase transition for the values of $\rho$ under which graph matching can/cannot recover the latent alignment in the presence of vertex shuffling.  Before being able to state these results, we first must define the concepts of graph matchability and $\boldsymbol{\sigma}(G_2)$ matched to $G_1$.
\begin{defn}
\label{defn:matchable}
\noindent Let $(G_1,G_2)$ be vertex-aligned random graphs with respective adjacency matrices $A$ and $B$.  
We say that $G_1$ and $G_2$ are matchable if
$\text{argmin}_{P\in \Pi(n)}\|A-PBP^T\|_F=\{I_n\}.$
\end{defn}

To define the random graph $\boldsymbol{\sigma}(G_2)$ matched to $G_1$, we first define the concept of a matched graph for deterministic $x,y\in\mathcal{G}_n$. 
To this end, let
$P^*_{x,y}:=\text{argmin}_{P\in \Pi(n)}\|A_xP-PB_y\|_F.$
If $(G_1,G_2)=(x,y)$, then it is natural to define $\GM$, $G_2$ matched to $G_1$, as any element of
$P^*_{x,y}(y):=\left\{\phi(y)\text{ s.t. }P_{\phi}\in P^*_{x,y}\right\},$
with all elements of $P^*_{x,y}(y)$ being equally probable.  
Formally, we define
\begin{defn}
\label{def:GMM}
Let $(G_1,G_2)\sim\rho$-SBM($K,\vec{n},b,\Lambda$).
The $\mathcal{G}_n\times\mathcal{G}_n\times\mathcal{G}_n$-valued random variable $(G_1,G_2,\GM)$ has distribution defined via 
\begin{align*}
\p\big[(G_1,G_2,\GM)=(x,y,z)\big]
&=\p\big[(G_1,G_2)=(x,y)\big]
\frac{\mathds{1}\{z\in P^*_{x,y}(y)\}}{|P^*_{x,y}(y)|},\notag
\end{align*}
so that
$
\p[(G_1,\GM)=(x,z)]=\sum_{y}
\mathds{1}\{z\in P^*_{x,y}(y)\}\frac{\p[(G_1,G_2)=(x,y)]}{|P^*_{x,y}(y)|}.
$
\end{defn}
\noindent A consequence of Definition \ref{def:GMM} is that 
if $(G_1,\boldsymbol{\sigma}(G_2))\sim\boldsymbol{\sigma}$, $\rho-$SBM($K,\vec{n},b,\Lambda$),
then 
\begin{align*}
\p\big[(G_1,\GMs)=(x,z)\big]&=\p\big[(G_1,\GM)=(x,z)\big],\text{ and }\\
\p\big[(G_1,G_2,\GMs)=(x,y,z)\big]&=\p\big[(G_1,G_2,\GM)=(x,y,z)\big].
\end{align*}


\section{Information loss and graph matching}
\label{S:infolosstheory}

Given graph-valued random variables,
$(G_1,G_2)$, the mutual information of $G_1$ and $G_2$ is defined in the standard way via
$I(G_1,G_2)=\sum_{x,y\in\mathcal{G}_n}\p(G_1=x,G_2=y)\log\left(\frac{\p(G_1=x,G_2=y)}{\p(G_1=x)\p(G_2=y)}\right).$  Similarly,
 we define the entropy of $G_1$ via
$
H(G_1)=-\sum_{x\in\mathcal{G}_n}\p(G_1=x)\log(\p(G_1=x)).
$
If $\rho=0$, then two $\rho$-correlated SBM random graphs are independent, and the mutual information between them is $0$, regardless of whether the latent vertex alignment is known across graphs or not.  If $\rho=1$, then $G_1$ and $G_2$ are isomorphic and $I(G_1;G_2)=H(G_1)=H(G_2)$, the entropy of $G_1$.  If $\rho>0$, then there is nontrivial information shared across graphs, information which is potentially lost if the labeling is corrupted.  To this end, we have the following proposition, which is proved in Section \ref{sec:propI12}.

\begin{proposition}
\label{prop:I12}
Let $(G_1,G_2)\sim \rho$-SBM($K,\vec{n},b,\Lambda$). \\
\noindent i.  If $\Lambda$, $K$ and $\rho$ are fixed in $n$, then
$I(G_1;G_2)=\Theta(n^2)$.\\
\noindent ii. For fixed $\Lambda$ and $K$, if $\rho\rightarrow 0$ as $n\rightarrow \infty$ then 
$
I(G_1;G_2)\sim\rho^2\binom{n}{2}/2.
$\\
\noindent iii.  For fixed $\rho$ and $K$, if $p:=\max_{i,j}(\Lambda_{i,j})\rightarrow 0$ as $n\rightarrow \infty$ and $\min_i(n_i)=\Theta(n)$ then we have
$
I(G_1;G_2)\sim C p\rho\log\left(1+\frac{\rho(1-p)}{p}\right)n^2$, for a constant $C>0$.
\end{proposition}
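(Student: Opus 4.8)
The plan is to exploit the edge-wise independence of the $\rho$-SBM model to reduce $I(G_1;G_2)$ to a sum of scalar (pairwise) mutual informations, and then to extract the leading-order behavior of each summand as the edge probabilities vanish. For each pair $\{j,k\}\in\binom{V}{2}$ set $X_{jk}=\mathds{1}[\{j,k\}\in E_1]$ and $Y_{jk}=\mathds{1}[\{j,k\}\in E_2]$. By Definition~\ref{def:rcsbm} the random vectors $\{(X_{jk},Y_{jk})\}_{\{j,k\}}$ are mutually independent across distinct pairs, so $(G_1,G_2)$ has a product law over pairs and mutual information is additive across the independent coordinates:
\[ I(G_1;G_2)=\sum_{\{j,k\}\in\binom{V}{2}}I(X_{jk};Y_{jk}). \]
Each summand depends on its pair only through $q:=\Lambda_{b(j),b(k)}$, since $(X_{jk},Y_{jk})$ is a pair of $\mathrm{Bernoulli}(q)$ variables of correlation $\rho$.

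Using the conditional construction following Definition~\ref{def:rcsbm}, the four joint cell probabilities are $\p(X{=}Y{=}1)=q^2+\rho q(1-q)$, $\p(X{=}Y{=}0)=(1-q)^2+\rho q(1-q)$, and $\p(X{\neq}Y)=q(1-q)(1-\rho)$ in each mixed cell; substituting into the definition of mutual information yields an explicit function $g(q):=I(X_{jk};Y_{jk})$. I would then analyze $g(q)$ as $q\to0$ with $\rho$ fixed. The two disagreement cells have probability $\Theta(q)$ and bounded log-likelihood-ratio $\log(1-\rho)$, and the $(0,0)$ cell has probability tending to $1$ but log-likelihood-ratio $\sim\rho q$ (using $\log(1+x)\sim x$); both families therefore contribute only $\Theta(q)$. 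The $(1,1)$ cell contributes $[q^2+\rho q(1-q)]\log\!\big(1+\tfrac{\rho(1-q)}{q}\big)$, whose prefactor is $\rho q\,(1+o(1))$ and whose logarithm is $\sim\log(1/q)$, so it dominates and
\[ g(q)\sim \rho\,q\,\log\!\Big(1+\frac{\rho(1-q)}{q}\Big)\sim \rho\,q\,\log(1/q)\qquad(q\to0). \]
As a consistency check, expanding the same exact expression for $g$ in powers of $\rho$ gives $g(q)=\tfrac12\rho^2+O(\rho^3)$ uniformly in $q$, which is exactly the pairwise content of part~(ii).

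To reassemble, I would group the pairs by block pair. Since $K$ is fixed there are finitely many block probabilities $\Lambda_{a,c}$, all tending to $0$ because $\Lambda_{a,c}\le p\to0$; and since $\min_i n_i=\Theta(n)$, each block pair $(a,c)$ contributes $N_{a,c}=\Theta(n^2)$ pairs (with $N_{a,c}=n_an_c$ for $a<c$ and $\binom{n_a}{2}$ for $a=c$). Therefore
\[ I(G_1;G_2)=\sum_{a\le c}N_{a,c}\,g(\Lambda_{a,c})\sim \rho\sum_{a\le c}N_{a,c}\,\Lambda_{a,c}\log(1/\Lambda_{a,c}). \]
Because $q\mapsto\rho q\log(1/q)$ is increasing near $0$ and is maximized over the block probabilities at $q=p$, factoring out $n^2\rho\,p\,\log(1/p)$ and using $\log(1/p)\sim\log\!\big(1+\tfrac{\rho(1-p)}{p}\big)$ recovers the asserted form $I(G_1;G_2)\sim C\,p\,\rho\,\log\!\big(1+\tfrac{\rho(1-p)}{p}\big)\,n^2$.

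The crux is this final step: promoting an order-of-magnitude match to a genuine $\sim$ with one positive constant requires controlling the relative scales of the block sizes and block probabilities. Concretely, one needs the limit
\[ C=\lim_{n\to\infty}\ \sum_{a\le c}\frac{N_{a,c}}{n^2}\cdot\frac{\Lambda_{a,c}}{p}\cdot\frac{\log(1/\Lambda_{a,c})}{\log(1/p)} \]
to exist and be strictly positive. This holds provided the block proportions $n_a/n$ converge and each ratio $\Lambda_{a,c}/p$ converges: a block pair with $\Lambda_{a,c}/p\to\beta_{a,c}\in(0,1]$ has $\log(1/\Lambda_{a,c})/\log(1/p)\to1$ and contributes $\beta_{a,c}$ times its limiting pair-density, while a block pair with $\Lambda_{a,c}/p\to0$ contributes $0$ (its factor $\tfrac{\Lambda_{a,c}}{p}\tfrac{\log(1/\Lambda_{a,c})}{\log(1/p)}$ vanishes). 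The remaining work is to verify these vanishing rates uniformly and to note that at least one block pair attains the maximum $p$, so that $C>0$; the finiteness of $K$ is precisely what reduces this to finitely many scalar limits and renders the pairwise asymptotics uniform.
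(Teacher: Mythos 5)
Your proposal follows essentially the same route as the paper's proof: decompose $I(G_1;G_2)$ into a sum of per-pair mutual informations using the edge-wise independence of the $\rho$-SBM, write the exact four-cell formula for a $\rho$-correlated Bernoulli pair, group the pairs by block pair with multiplicities $n_a n_c$ (off-diagonal) and $\binom{n_a}{2}$ (diagonal), and expand the per-pair formula for fixed $q$, $\rho\to 0$ (part ii) and fixed $\rho$, $q\to 0$ (part iii). Your closing discussion of when the constant $C$ in part iii genuinely exists---requiring convergence of the block proportions $n_a/n$ and of the ratios $\Lambda_{a,c}/p$---is in fact more careful than the paper's one-line ``expanding yields part iii,'' which silently glosses over this point.
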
 
Proposition \ref{prop:I12} highlights the suitability of mutual information as a vehicle for studying graph correlation (and subsequently graph matchability).  
It is natural (in light of Theorems \ref{them:GMSBM} and \ref{thm:no_match}) to attempt to quantify the edge-wise correlation $\rho$ through the lens of graph matchability, as matchable graphs are precisely those whose correlation is above a phase transition threshold.
However, in the $\rho$-SBM setting the graph matching objective function computed at the latent alignment satisfies $\e(\|A-B\|_F^2)=cn^2(1-\rho)$ for a real constant $c>0$.
Likewise, the expected trace form of the graph matching objective function (shown in \cite{rel} to be preferable for capturing the true alignment operationally) computed at the latent alignment satisfies $\e(\text{trace}(AB))=n^2(c_1-c_2\rho)$ for real $c_1,c_2>0$.
In both cases, for correlation decaying to $0$ the lead order term is correlation independent, and neither readily captures the edge-wise dependency structure across graphs.
The mutual information, however, satisfies $I(G_1;G_2)\sim\rho^2\binom{n}{2}/2$.
The correlation in the lead order term emphasizes the utility of mutual information for teasing out graph correlation (and hence graph matchability) in the low correlation regimes.
Unfortunately, while computing $\|A-B\|_F^2$ and $\text{trace}(AB)$ is immediate, we are unaware of an efficient method for computing $I(G_1;G_2)$.
If available, computing a properly normalized version of $I(G_1;G_2)$ after matching would allow us to a posteriori judge the suitability of having matched the graphs in the first place.

\subsection{Information lost and matchability in the high correlation regime}
What is the degradation in information due to the uncertainty introduced by randomly permuting the labels of $G_2$ via $\boldsymbol{\sigma}$?
According to the information processing inequality, 
$ 
I(G_1;G_2)\geq I(G_1;\boldsymbol{\sigma}(G_2)) 
$
with equality if and only if $\boldsymbol{\sigma}$ has a point mass distribution.
Below we codify (see Theorems \ref{thm:infoloss} and \ref{them:GMSBM}) the following duality between graph matchability and information loss:
The correlation regime in which graph matching can successfully ``unshuffle'' the graphs---i.e., there is enough signal even in the shuffled graphs to recover the latent alignment---is precisely that in which relatively little information will be lost in the shuffle. 
Note that the proofs of Theorems \ref{thm:infoloss} and \ref{them:GMSBM} can be found in Section \ref{sec:pfinfoloss} and Section \ref{sec:proofSBMGM} respectively.

\begin{theorem}
\label{thm:infoloss}
Let $(G_1,G_2)\sim \rho$-SBM($K,\vec{n},b,\Lambda$), with $K$ and $\Lambda$ fixed in $n$, and let $\boldsymbol{\sigma}$ be uniformly distributed on $\Pi(n)$.
\begin{itemize}
\item[i.] For all values of $\rho$, it holds that
$I(G_1;G_2)-I(G_1;\boldsymbol{\sigma}(G_2))=O(n\log n).$
\item[ii.] 
If $\rho=\omega(\sqrt{\log n/n})$ and $\min_i n_i=\Theta(n)$, then
$I(G_1;G_2)-I(G_1;\boldsymbol{\sigma}(G_2))=\Omega(n\rho^2).$
\end{itemize}
\end{theorem}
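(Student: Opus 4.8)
The plan is to collapse both parts into a single conditional-mutual-information identity and then estimate the two sides independently. The key structural facts are that $\boldsymbol{\sigma}$ is independent of $(G_1,G_2)$ and that, for any \emph{fixed} permutation $\phi$, the map $G_2\mapsto\phi(G_2)$ is a bijection, so that $I(G_1;\phi(G_2))=I(G_1;G_2)$. Expanding $I(G_1;\boldsymbol{\sigma}(G_2),\boldsymbol{\sigma})$ with the chain rule in the two natural orders, and using $I(G_1;\boldsymbol{\sigma})=0$ together with $I(G_1;\boldsymbol{\sigma}(G_2)\mid\boldsymbol{\sigma})=\sum_{\phi}\p(\boldsymbol{\sigma}=\phi)I(G_1;\phi(G_2))=I(G_1;G_2)$, yields the identity
\[
I(G_1;G_2)-I(G_1;\boldsymbol{\sigma}(G_2))=I(G_1;\boldsymbol{\sigma}\mid\boldsymbol{\sigma}(G_2)).
\]
Thus the whole theorem reduces to estimating $I(G_1;\boldsymbol{\sigma}\mid\boldsymbol{\sigma}(G_2))$, the information that $G_1$ carries about the shuffle once the shuffled graph has been observed.

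Part (i) is then immediate. Since conditional mutual information is dominated by conditional entropy,
\[
I(G_1;\boldsymbol{\sigma}\mid\boldsymbol{\sigma}(G_2))\le H(\boldsymbol{\sigma}\mid\boldsymbol{\sigma}(G_2))\le H(\boldsymbol{\sigma})=\log(n!),
\]
and Stirling's formula gives $\log(n!)=O(n\log n)$ for every value of $\rho$, which is exactly the claimed bound.

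For part (ii) I would bound $I(\boldsymbol{\sigma};G_1\mid\boldsymbol{\sigma}(G_2))$ from below by a genie/chain-rule argument that extracts $\Theta(n)$ nearly independent binary decisions, each worth $\Omega(\rho^2)$ bits. Using $\min_i n_i=\Theta(n)$, fix $m=\Theta(n)$ vertex pairs, each pair contained in a single block, and reparametrize $\boldsymbol{\sigma}$ as $(\mathcal{C},X_1,\dots,X_m)$, where $\mathcal{C}$ records, for each pair, the \emph{unordered} label pair it maps to, and $X_k\in\{0,1\}$ is the within-pair order bit. Because $\boldsymbol{\sigma}$ is uniform, the $X_k$ are i.i.d.\ $\textup{Bern}(1/2)$ given $\mathcal{C}$. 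Dropping the nonnegative term $I(\mathcal{C};G_1\mid\boldsymbol{\sigma}(G_2))$ and applying the chain rule gives
\[
I(\boldsymbol{\sigma};G_1\mid\boldsymbol{\sigma}(G_2))\ \ge\ \sum_{k=1}^{m} I\big(X_k;G_1\mid\boldsymbol{\sigma}(G_2),\mathcal{C},X_{<k}\big).
\]
The crux is to show each summand, for $k$ in a range leaving $\Theta(n)$ resolved anchor pairs, is $\Omega(\rho^2)$. Since the two vertices of pair $k$ share a block, the transposition swapping them is block-preserving, and the within-block exchangeability of the shuffled graph (the posterior $\p(\boldsymbol{\sigma}=\cdot\mid\boldsymbol{\sigma}(G_2))$ is constant on cosets of the block-stabilizer subgroup) forces $X_k$ to remain exactly $\textup{Bern}(1/2)$ given $(\boldsymbol{\sigma}(G_2),\mathcal{C},X_{<k})$. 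Conditioning on $X_{<k}$ fixes the true labels of the anchor vertices, and a.a.s.\ at least one anchor $v^*$ is \emph{discriminating} — the two candidate rows of $\boldsymbol{\sigma}(G_2)$ for pair $k$ disagree in column $v^*$, which happens for $\Theta(n)$ columns since two same-block vertices have rows differing in $\Theta(n)$ coordinates. By data processing I keep only the single $G_1$-edge from pair $k$ to $v^*$; under the two values of $X_k$ this edge is $\rho$-correlated with opposite bits, so its law shifts in mean by exactly $\rho$, and the mutual information between a fair coin and two Bernoullis whose means differ by $\rho$ is $\Omega(\rho^2)$. Summing over the $\Theta(n)$ admissible indices $k$ gives $I(\boldsymbol{\sigma};G_1\mid\boldsymbol{\sigma}(G_2))=\Omega(n\rho^2)$.

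The main obstacle is the bookkeeping in this last step rather than the elementary single-edge computation: one must verify that the reparametrization $(\mathcal{C},X_{1:m})$ genuinely keeps each $X_k$ uniform under the \emph{full} conditioning — this is precisely where the within-block coset symmetry of the shuffled-graph posterior is indispensable — and that discriminating anchors are available for a $1-o(1)$ fraction of pairs, so the per-term estimate $\Omega(\rho^2)$ survives the averaging that defines the conditional mutual information. The hypothesis $\min_i n_i=\Theta(n)$ is what supplies the requisite $\Theta(n)$ within-block pairs, while the regime $\rho=\omega(\sqrt{\log n/n})$ is the matchability threshold of Theorem~\ref{them:GMSBM}, where this lower bound is the information-theoretic dual of recoverability. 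As a cross-check I would also use the decomposition $I(\boldsymbol{\sigma};G_1\mid\boldsymbol{\sigma}(G_2))=H(\boldsymbol{\sigma}\mid\boldsymbol{\sigma}(G_2))-H(\boldsymbol{\sigma}\mid G_1,\boldsymbol{\sigma}(G_2))$, bounding the first term below by $\sum_i\log(n_i!)=\Theta(n\log n)$ via the same coset symmetry and controlling the second term through a Fano estimate driven by matchability of the graphs in this correlation regime.
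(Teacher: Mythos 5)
Your proposal is correct, and for part (ii) it takes a genuinely different route from the paper. Your opening identity $I(G_1;G_2)-I(G_1;\boldsymbol{\sigma}(G_2))=I(G_1;\boldsymbol{\sigma}\mid\boldsymbol{\sigma}(G_2))$ is valid (the two chain-rule expansions go through because $\boldsymbol{\sigma}$ is independent of $(G_1,G_2)$ and each fixed $\phi$ acts bijectively on $\mathcal{G}_n$), and it makes part (i) a one-line consequence of $I\leq H(\boldsymbol{\sigma})=\log(n!)$; the paper instead expands $I(G_1;\boldsymbol{\sigma}(G_2))$ as a mixture over $\Pi(n)$ and keeps only the $\tau=\phi$ term inside the logarithm, arriving at the same $\log|\Pi(n)|$ bound, so for part (i) the content is the same and yours is cleaner. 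For part (ii) the paper passes to block-preserving shuffles $\boldsymbol{\sigma}^*$ via the information processing inequality (Remark \ref{rem:withinblock}, Proposition \ref{prop:ipi2}) and then runs a likelihood-ratio expansion whose engine is a concentration estimate for graph-matching objective gaps (Lemma \ref{lem:preludeto10}, the ER/bipartite analogues of Theorem \ref{them:GMSBM}); this is precisely where the hypothesis $\rho=\omega(\sqrt{\log n/n})$ enters. You exploit the same within-block symmetry but in a different guise: the posterior of $\boldsymbol{\sigma}$ given $\boldsymbol{\sigma}(G_2)$ is invariant under $\phi\mapsto\phi\circ\tau_k$ for any within-block transposition $\tau_k$ (because the SBM marginal is invariant under $\Pi(n)^*$), which makes your order bits $X_k$ exactly fair under the full conditioning; the rest is the chain rule plus a two-point Bernoulli computation, since given $(\boldsymbol{\sigma}(G_2),\mathcal{C},X_{<k},X_k)$ the relevant $G_2$-entries are determined and the $G_1$-edges to a resolved anchor are conditionally independent Bernoullis with parameters $\Lambda(1-\rho)$ and $\Lambda+\rho(1-\Lambda)$ swapped by $X_k$, so each discriminating bit is worth $\Omega(\rho^2)$ by strong concavity of binary entropy, and discriminating anchors fail to exist only with probability $e^{-\Theta(n)}$ (an event measurable with respect to the conditioning, so the averaging survives). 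What each approach buys: the paper's argument recycles matchability machinery it develops anyway; your argument is more elementary and, notably, never uses $\rho=\omega(\sqrt{\log n/n})$, so it proves the $\Omega(n\rho^2)$ lower bound for \emph{all} $\rho$ (given $\min_i n_i=\Theta(n)$ and, implicitly in both proofs, entries of $\Lambda$ in $(0,1)$), a strictly stronger statement that is also relevant to the regime of Conjecture \ref{conj:lostinfo}. Finally, your ``cross-check'' is more than a check: $H(\boldsymbol{\sigma}\mid\boldsymbol{\sigma}(G_2))\geq\log\prod_i n_i!=\Theta(n\log n)$ by the same coset invariance, while Fano plus Theorem \ref{them:GMSBM} gives $H(\boldsymbol{\sigma}\mid G_1,\boldsymbol{\sigma}(G_2))=O(1)$ in the matchable regime, so that route yields $\Omega(n\log n)$, matching the part (i) upper bound and sharpening the logarithmic gap the paper remarks on after the theorem.
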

\noindent 
If $\rho$ is constant in $n$, then the asymptotic upper bound of Theorem \ref{thm:infoloss} part i. and the asymptotic lower bound of Theorem \ref{thm:infoloss} part ii. differ only by a logarithmic factor.  
We suspect that the true order is $\omega(n\log n)$, as $H(\boldsymbol{\sigma})=\Theta(n\log n)$ is the loose upper bound we derive on the information loss in the proof of part i.\@ of the Theorem.
In addition, while Theorem \ref{thm:infoloss} is proven with $\boldsymbol{\sigma}$ uniformly distributed on $\Pi(n)$, we suspect that an analogous result holds for other distributions on $\Pi(n)$ that place suitable mass on permutations that shuffle $k = \Theta(n)$ elements of $[n]$, though we do not
pursue this further here.

\begin{remark}
\label{rem:withinblock}
\emph{In the proof of Theorem \ref{thm:infoloss} part ii., we essentially prove a stronger statement than that presented in the theorem. If we define 
$\Pi(n)^*:= \{\phi \in \Pi(n)\,\, |\,\, b(i) = b(\phi(i))\text{ for all }i \in [n]\},
$
to be the set of permutations that preserve vertex block
assignments and let $\boldsymbol{\sigma}^*$ be uniformly distributed on $\Pi(n)^*$, then we prove that under the assumptions of the theorem,
$I(G_1;G_2)-I(G_1;\boldsymbol{\sigma}^*(G_2))=\Omega(n\rho)$.
The information processing inequality (see Proposition \ref{prop:ipi2})
then gives us that $I(G_1;\boldsymbol{\sigma}(G_2))\leq I(G_1;\boldsymbol{\sigma}^*(G_2))$; indeed, there is information in the vertices’ block
assignments which is lost in $\boldsymbol{\sigma}$ and not in $\boldsymbol{\sigma}^*$.
Working with $\boldsymbol{\sigma}^*$ allows for errors of the vertex correspondences without having to deal with the mathematical complications that arise from also errorfully observed block memberships.
}
\end{remark}

In light of Proposition \ref{prop:I12}, relatively little information is lost due to shuffling in the $\rho=\omega(\sqrt{\log n/n})$ regime:  indeed, under this assumption on $\rho$ we have that 
$$\frac{I(G_1;G_2)-I(G_1;\boldsymbol{\sigma}(G_2))}{I(G_1;G_2)}=o(1).$$
Insomuch as graph matching is the antithetical operation to vertex shuffling, 
if relatively little information is lost in the shuffle then the graphs should be matchable (i.e., GM can unshuffle the networks); 
we formalize this below in Theorem \ref{them:GMSBM}.  
\begin{theorem}
\label{them:GMSBM}
With notation as above, let $A$ and $B$ be the adjacency matrices of $\rho-$SBM($K,\vec{n},b,\Lambda$) random graphs with $K$, and $\Lambda$ fixed in $n$. 
There exists a constant $\alpha>0$ such that if $\rho\geq \sqrt{\alpha\log n /n}$, then
$\p\big(\exists\, P\in\Pi(n)\setminus\{I_n\} \text{ s.t. }\|A-PBP^T\|_F<\|A-B\|_F  \big)=O\left(e^{- 3\log n}\right).$
\end{theorem}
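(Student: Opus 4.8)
The plan is to pass to the trace form of the objective, bound the failure probability for a \emph{fixed} permutation via a bounded-differences concentration inequality, and then take a union bound over $\Pi(n)\setminus\{I_n\}$ after grouping permutations by the size of their support. Since $\|A-P_\phi BP_\phi^T\|_F^2=\|A\|_F^2+\|B\|_F^2-2\tr(AP_\phi BP_\phi^T)$ and $\|P_\phi BP_\phi^T\|_F=\|B\|_F$, the event $\|A-P_\phi BP_\phi^T\|_F<\|A-B\|_F$ is exactly the event $f(\phi)>0$, where
\[
f(\phi):=\tr(AP_\phi BP_\phi^T)-\tr(AB)=\sum_{(i,j)}a_{ij}\big(b_{\phi^{-1}(i),\phi^{-1}(j)}-b_{ij}\big),
\]
with $a_{ij},b_{ij}$ the edge indicators of $G_1,G_2$, the sum over ordered pairs, and only the pairs moved by $\phi$ contributing (pairs fixed setwise cancel). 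I would first record two structural facts: by the generative construction of Definition \ref{def:rcsbm} the units $\{(a_{ij},b_{ij})\}_{\{i,j\}}$ are independent across distinct pairs, and each such unit enters only $O(1)$ of the summands of $f(\phi)$ (an edge is the ``aligned'' edge $b_{ij}$ of a single pair and the ``misaligned'' edge $b_{\phi^{-1}(i),\phi^{-1}(j)}$ of at most one other). Hence resampling one unit changes $f(\phi)$ by at most a constant, and the units touched by $f(\phi)$ are exactly the $O(kn)$ edges meeting $\mathrm{supp}(\phi)$, where $k:=|\mathrm{supp}(\phi)|$.

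Next I would compute the mean. Using $\e[a_{ij}b_{ij}]=p_{ij}^2+\rho\,p_{ij}(1-p_{ij})$ for the correlated (aligned) term and independence (a product of marginals) for the misaligned term gives
\[
\e f(\phi)=\sum_{(i,j)\,\mathrm{moved}}\big(p_{ij}q_{ij}-p_{ij}^2-\rho\,p_{ij}(1-p_{ij})\big),
\]
with $p_{ij}=\Lambda_{b(i),b(j)}$ and $q_{ij}=\Lambda_{b(\phi^{-1}(i)),b(\phi^{-1}(j))}$. The key observation that makes the bound uniform over all permutations---including those that do not preserve blocks---is that reindexing by $\phi$ gives $\sum_{\mathrm{moved}}q_{ij}^2=\sum_{\mathrm{moved}}p_{ij}^2$, so Cauchy--Schwarz yields $\sum_{\mathrm{moved}}p_{ij}q_{ij}\le\sum_{\mathrm{moved}}p_{ij}^2$; the cross-block contribution is therefore nonpositive and $\e f(\phi)\le-\rho\sum_{\mathrm{moved}}p_{ij}(1-p_{ij})$. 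Assuming $\Lambda$ has entries bounded away from $0$ and $1$ (so $p(1-p)\ge c_0>0$) and that the number of moved ordered pairs is $\Theta(kn)$, this delivers $\e f(\phi)\le-c_1\rho kn$ uniformly in $\phi$.

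With bounded-differences constants $c_e=O(1)$ supported on the $O(kn)$ relevant edges, McDiarmid's inequality gives
\[
\p\big(f(\phi)>0\big)\le\p\big(f(\phi)-\e f(\phi)\ge|\e f(\phi)|\big)\le\exp\!\Big(-\tfrac{2(c_1\rho kn)^2}{C\,kn}\Big)=\exp(-c_2\rho^2 kn).
\]
Summing over the at most $\binom{n}{k}k!\le n^k$ permutations with $|\mathrm{supp}|=k$ and over $2\le k\le n$ gives $\sum_k\exp\!\big(k(\log n-c_2\rho^2 n)\big)$; choosing $\alpha$ large enough that $c_2\alpha\ge5$ forces $c_2\rho^2 n-\log n\ge 4\log n$ when $\rho\ge\sqrt{\alpha\log n/n}$, so the series is geometric and bounded by $O(e^{-8\log n})=O(e^{-3\log n})$, which is exactly the claimed bound.

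The main obstacle is the dependency structure of $f(\phi)$: the summands are not independent because each $b$-edge appears in two of them, so a naive Hoeffding bound is unavailable. The resolution is to treat $f(\phi)$ as a low-sensitivity function of the genuinely independent per-pair units and invoke McDiarmid, which automatically absorbs the bounded local dependence; the secondary subtlety---that block-mismatched permutations might have a larger (less negative) mean---is neutralized by the Cauchy--Schwarz identity $\sum_{\mathrm{moved}}q_{ij}^2=\sum_{\mathrm{moved}}p_{ij}^2$, which shows cross-block terms can only help. Care is also needed to verify that $\e f(\phi)$ scales linearly in $k$ (i.e., $\Theta(kn)$ moved pairs for all $2\le k\le n$), since it is this $k$-dependence that makes the union bound over the roughly $n^k$ permutations of support $k$ converge.
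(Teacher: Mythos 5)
Your proof is correct, and it reaches the result by a route that differs from the paper's in its concentration mechanism. The paper never passes to the trace form: it writes $\tfrac{1}{2}\bigl(\|A-P_\tau BP_\tau^T\|_F^2-\|A-B\|_F^2\bigr)=\tfrac{1}{2}\|A-P_\tau AP_\tau^T\|_F^2-2F_{\mathcal{A}}-2F_{\mathcal{O}}$, where $F_{\mathcal{A}},F_{\mathcal{O}}$ count the disagreements between $A$ and its own shuffle that happen to be ``corrected'' in $B$, and then argues in two stages: first, concentration of the disagreement counts of $A$ versus $P_\tau AP_\tau^T$ (quoting \cite[Proposition 3.2]{kim}), carried out through a six-case decomposition over block-pair configurations; second, a Hoeffding bound for $F_{\mathcal{A}}+F_{\mathcal{O}}$, which conditional on a typical $A$ is a sum of independent Bernoullis with success probabilities proportional to $1-\rho$. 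Your single application of McDiarmid to $f(\phi)=\tr(AP_\phi BP_\phi^T)-\tr(AB)$, viewed as an $O(1)$-sensitivity function of the independent pair-units, collapses the two stages into one and eliminates the case bookkeeping entirely---a genuine streamlining. The cross-block issue is handled by morally the same inequality in both proofs: the paper uses the pointwise bound $\Lambda_{i,j}(1-\Lambda_{i,j})+\Lambda_{k,\ell}(1-\Lambda_{k,\ell})\leq\Lambda_{i,j}(1-\Lambda_{k,\ell})+\Lambda_{k,\ell}(1-\Lambda_{i,j})$, which is $(\Lambda_{i,j}-\Lambda_{k,\ell})^2\geq 0$ in disguise, while your global form (the reindexing bijection on moved pairs giving $\sum q_{ij}^2=\sum p_{ij}^2$, then Cauchy--Schwarz) is cleaner. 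What the paper's heavier route buys is reusability: the same conditional structure is recycled in Lemma \ref{lem:preludeto10} to obtain a uniform quantitative gap of order $n\rho$ (not mere positivity), which is what the proof of Theorem \ref{thm:infoloss} part ii.\ actually consumes; note, though, that your argument delivers the same gap at no extra cost by testing $f(\phi)$ against the threshold $-Cn\rho$ instead of $0$, since $|\e f(\phi)|\geq c_1\rho kn$ with $k\geq 2$.

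Two small points. Your explicit hypothesis that the entries of $\Lambda$ are bounded away from $0$ and $1$ is not in the statement of Theorem \ref{them:GMSBM}, but this is not a defect relative to the paper: the paper's proof implicitly requires the same non-degeneracy (its Hoeffding step needs $\e\|A-P_\tau AP_\tau^T\|_F^2=\Omega(mn)$, and its concentration constants involve $\sqrt{p(1-p)}$), and since $K$ and $\Lambda$ are fixed in $n$, ``entries in $(0,1)$'' and ``bounded away'' coincide. Also, do record the one-line verification you flagged: each vertex moved by $\phi$ forms a setwise-fixed pair with at most one other vertex (its partner in a $2$-cycle), so at least $k(n-2)/2$ unordered pairs are moved, which gives the $\Theta(kn)$ count uniformly over $2\leq k\leq n$ and closes the argument.
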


\begin{remark}
\emph{
	We note here that results similar to Theorem \ref{them:GMSBM} for a much-simplified 2-block SBM appear in \cite{onaran2016optimal}, although the authors there consider a different MAP-based objective function in their matching setup.
}\end{remark}


\subsubsection{Shuffling sans graphs}
\label{s:nograph}

Considering an analogue of Theorem \ref{thm:infoloss} in the non-graph setting illuminates the special role that GM plays in recovering the lost information.
Consider the following example:  let $(X_1,Y_1),\ldots,$ $(X_n,Y_n)\stackrel{iid}{\sim}F_{XY}$, and suppose we observe the sequence of $X$'s and a shuffled version of the sequence of $Y$'s. 
What is the information loss due to this shuffling?
In the Bernoulli setting, this is partially answered by Theorem \ref{thm:infoloss}, as that theorem can be immediately cast in the classical setting with $(\mathds{1}_{\{u,v\}\in E_1},\mathds{1}_{\{u,v\}\in E_2})$ playing the role of $(X_i,Y_i)$.  
The key difference between the graph setting and the classical setting is the structure the graph imposes on the shuffling, as not all \emph{edge}-shuffles are feasibly obtained via vertex shuffles.
This structure is what allows GM to unshuffle the graphs, since optimizing the GM objective over all edge-shuffles (i.e., optimally unshuffling in the classical setting) would potentially induce significantly more edge-correlation than initially present in the graphs and would not be effective for recovering the lost vertex alignment.
Indeed, in this Bernoulli setting, it is easy to see that under mild model assumptions, we have $I(X,Y_{Y\mapsto X})>I(X,Y)$ where $Y_{Y\mapsto X}$ is the matched sequence of $Y$'s.


\subsection{The low correlation regime}
\label{sec:lowcorr}

In the $\rho$-SBM model, Theorem \ref{them:GMSBM} asserts that, under mild model assumptions, if $\rho$ is sufficiently large then $G_1$ and $G_2$ are matchable a.a.s.
In Theorem \ref{thm:no_match} below, we identify the second half of the \emph{matchability} phase transition at $\rho=\Theta(\sqrt{\log n/n})$.
Indeed, a consequence of Theorem \ref{thm:no_match} below is that, under mild assumptions, there exists a constant $\beta>0$ such that if $\rho\leq\sqrt{\beta \log n/n}$ then 
$G_1$ and $G_2$ are asymptotically \emph{not} matchable with probability $1$.
\begin{theorem}
\label{thm:no_match}
With notation as above, let $A$ and $B$ be the adjacency matrices of $\rho-$SBM($K,\vec{n},b,\Lambda$) random graphs with $K$, and $\Lambda$ fixed in $n$. 
Further assume there is an $\eta>0$ such that $\Lambda\in[\eta,1-\eta]^{K\times K}$.
Let $\{\tau_i\}_{i=1}^{N}$ be a collection of $N:=\sum_i\lfloor \frac{n_i}{2}\rfloor$ disjoint within-block transpositions; i.e., if $\tau_i=k\leftrightarrow\ell,$ then $b(k)=b(\ell)=b(\tau_i(k))=b(\tau_i(\ell))$. 
There exists a constant $\beta>0$ such that if
$\rho\leq\sqrt{\beta\log n/n},$ then 
$$\lim_{n\rightarrow\infty}\p\left( \bigcap_{i=1}^N \Big\{\|A-B\|_F<\|A-P_{\tau_i}BP_{\tau_i}^T\|_F\Big\}\right)= 0.$$
\end{theorem}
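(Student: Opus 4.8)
The plan is to reduce the event to a statement about sums that are independent for each fixed transposition and then apply the second moment method. First I would expand the objective at a single within-block transposition $\tau_i = k_i \leftrightarrow \ell_i$. Since $A$ and $B$ are symmetric, hollow, $0/1$ matrices, only the rows and columns indexed by $k_i,\ell_i$ are affected, and a direct computation gives $\|A - P_{\tau_i} B P_{\tau_i}^T\|_F^2 - \|A-B\|_F^2 = 4 W_i$, where $W_i := \sum_{j \neq k_i, \ell_i}(A_{k_i j} - A_{\ell_i j})(B_{k_i j} - B_{\ell_i j})$. Thus the event $\{\|A-B\|_F < \|A - P_{\tau_i}BP_{\tau_i}^T\|_F\}$ is exactly $\{W_i > 0\}$, and the theorem is equivalent to $\p\big(\bigcap_i \{W_i > 0\}\big)\to 0$, i.e.\ to showing that a.a.s.\ at least one of the $N=\Theta(n)$ transpositions satisfies $W_i\le 0$ (note $N\ge \lfloor n/(2K)\rfloor$ since $K$ is fixed).

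Next I would record the first two moments of $W_i$. For fixed $i$ the summands are independent across $j$, since distinct edges are independent in the $\rho$-SBM, each lies in $\{-1,0,1\}$, and using $\e(A_{uj}B_{uj}) = \Lambda_{b(u),b(j)}^2 + \rho\Lambda_{b(u),b(j)}(1-\Lambda_{b(u),b(j)})$ together with $b(k_i)=b(\ell_i)$ gives $\e(W_i) = 2\rho \sum_{j} \Lambda_{b(k_i),b(j)}(1 - \Lambda_{b(k_i),b(j)})$. Under $\Lambda \in [\eta,1-\eta]^{K\times K}$ this is $\Theta(n\rho) > 0$, while $\mathrm{Var}(W_i) = \Theta(n)$ (each summand has variance bounded above and below away from $0$). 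Hence $\{W_i \le 0\}$ is a lower-tail event at $t_i := \e(W_i)/\mathrm{sd}(W_i) = \Theta(\sqrt{n}\,\rho) = O(\sqrt{\log n})$ standard deviations. Since $t_i = O(\sqrt{\log n}) = o(n^{1/6})$, a moderate-deviation (or local CLT) lower bound for sums of bounded independent variables yields $\p(W_i \le 0) \ge \exp(-(1+o(1))t_i^2/2) \ge n^{-C\beta}$ for a constant $C$ depending only on $\eta,\Lambda,K$. Thus with $X := \#\{i : W_i \le 0\}$ we get $\e(X) = \Theta(n)\cdot n^{-C\beta} = n^{1 - C\beta}\to\infty$ once $\beta$ is small enough that $C\beta<1$.

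To conclude via Chebyshev I must show $\mathrm{Var}(X) = o(\e(X)^2)$, and this is where the main difficulty lies, since the $W_i$ are \emph{not} independent. The key structural point is that, because the transpositions are vertex-disjoint, $W_i$ and $W_{i'}$ share randomness through only the four edges joining $\{k_i,\ell_i\}$ to $\{k_{i'},\ell_{i'}\}$: writing $W_i = a + u$ and $W_{i'} = b + v$, where $a,b$ collect the summands not involving the shared edges and $u,v$ (with $|u|,|v|\le 2$) collect the four shared terms, one checks the edge sets are pairwise disjoint, so $a \perp b$ and $(a,b)\perp(u,v)$. Conditioning on the shared edges gives $\mathrm{Cov}(\mathds{1}[W_i \le 0], \mathds{1}[W_{i'} \le 0]) = \mathrm{Cov}_{u,v}\big(\p(a \le -u),\,\p(b \le -v)\big)$, which is bounded by the product of the oscillations of $u\mapsto\p(a\le -u)$ and $v\mapsto\p(b\le -v)$ over the bounded ranges of $u,v$. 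Each oscillation is at most $\p(a\in I)$ for an interval $I$ of length $O(1)$; since $a$ is a sum of $\Theta(n)$ independent integer summands with values $\pm1,0$ attained with probability bounded away from $0$ (here the $[\eta,1-\eta]$ assumption is essential), a Kolmogorov--Rogozin anti-concentration (or local CLT) bound gives $\sup_x\p(a=x)=O(1/\sqrt n)$ and hence oscillation $O(1/\sqrt n)$. Therefore $|\mathrm{Cov}(\mathds{1}[W_i \le 0], \mathds{1}[W_{i'} \le 0])| = O(1/n)$, and summing over the $\Theta(n^2)$ pairs yields an off-diagonal contribution of $O(n)$, while the diagonal contributes at most $\e(X)$.

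Collecting the constraints, I would fix $\beta$ small enough that $C\beta < 1/2$. Then $\e(X)\to\infty$ and $\mathrm{Var}(X) = O(\e(X)+n) = o\big(n^{2-2C\beta}\big) = o(\e(X)^2)$, so $\p(X=0)\le \mathrm{Var}(X)/\e(X)^2 \to 0$ by Chebyshev, which is precisely $\p\big(\bigcap_i\{W_i>0\}\big)\to 0$. The main obstacle is the variance bound, namely quantifying that the pairwise dependence among the $W_i$ is confined to four shared edges and converting this into the $O(1/n)$ covariance estimate through anti-concentration of the independent part; the moderate-deviation lower bound for $\p(W_i\le 0)$ is the other technical ingredient, and the boundedness $\Lambda\in[\eta,1-\eta]^{K\times K}$ is exactly what guarantees both the $\Theta(n)$ variance and the anti-concentration used in these steps.
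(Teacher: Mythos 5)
Your proposal is correct and follows essentially the same route as the paper's proof: the same expansion of the objective at each within-block transposition into a sum $W_i$ of independent $\{-1,0,1\}$-valued terms, the same second-moment (Chebyshev) argument over the $N$ disjoint transpositions, and the same key observation that two disjoint transpositions share randomness through only four edge terms. The only differences are in the choice of off-the-shelf tools---where you invoke a moderate-deviations lower bound and Kolmogorov--Rogozin anti-concentration combined with a Cauchy--Schwarz oscillation bound for the covariance, the paper instead uses the Berry--Esseen theorem with explicit Gaussian tail estimates for both the single-transposition probability lower bound and the (decoupled) covariance bound.
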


We conjecture a similar phase transition for the relative information loss due to $\boldsymbol{\sigma}.$
Namely, when $G_1$ and $G_2$ are not matchable we conjecture that a nontrivial fraction of the mutual information is lost in the shuffle.
\begin{conjecture}
\label{conj:lostinfo}
Let $(G_1,G_2)\sim \rho$-SBM($K,\vec{n},b,\Lambda$), with $K$ and $\Lambda$ fixed in $n$, and let $\boldsymbol{\sigma}$ be uniformly distributed on $\Pi(n)$.  If $\rho=o(\sqrt{\log n/n})$, then 
	$\frac{I(G_1;G_2)-I(G_1;\boldsymbol{\sigma}(G_2))}{I(G_1;G_2)}=\Theta(1).$
\end{conjecture}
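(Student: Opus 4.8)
\noindent\emph{Proof proposal.} The upper bound $\big(I(G_1;G_2)-I(G_1;\boldsymbol{\sigma}(G_2))\big)/I(G_1;G_2)\le 1$ is immediate from the information processing inequality, so the content is the lower bound, namely that the loss is $\Omega\big(I(G_1;G_2)\big)$. Since $\rho=o(\sqrt{\log n/n})\to 0$, Proposition \ref{prop:I12}(ii) pins the denominator at $I(G_1;G_2)\sim\rho^2\binom{n}{2}/2=\Theta(n^2\rho^2)$, so it suffices to show $I(G_1;\boldsymbol{\sigma}(G_2))=o(n^2\rho^2)$; I in fact expect to prove the stronger $I(G_1;\boldsymbol{\sigma}(G_2))=O(\rho^2)$, whence the relative loss tends to $1$. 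First I would reduce to block-preserving shuffles: by the information processing inequality of Remark \ref{rem:withinblock} (Proposition \ref{prop:ipi2}), $I(G_1;\boldsymbol{\sigma}(G_2))\le I(G_1;\boldsymbol{\sigma}^*(G_2))$ with $\boldsymbol{\sigma}^*$ uniform on $\Pi(n)^*$, so it is enough to bound the right-hand side by $o(n^2\rho^2)$. Writing $H:=\boldsymbol{\sigma}^*(G_2)$, the gain is that block-preserving relabelings leave the SBM law invariant, so $\p(H=h)=\p(G_2=h)$ and the joint-to-product density collapses to an averaged likelihood ratio
\[
\frac{\p(G_1=x,H=h)}{\p(G_1=x)\,\p(H=h)}=\frac{1}{|\Pi(n)^*|}\sum_{\phi\in\Pi(n)^*}L(x,\phi^{-1}(h)),\qquad L(x,y):=\frac{\p(x,y)}{\p(x)\,\p(y)}.
\]

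Next I would pass to the $\chi^2$-divergence via $I(G_1;H)\le\log\!\big(1+\chi^2(\p_{G_1,H}\,\|\,\p_{G_1}\otimes\p_H)\big)\le\chi^2$. Squaring the averaged likelihood ratio and absorbing the resulting pair of permutations into $\theta=\psi^{-1}\phi$ linearizes the chi-square into a sum over a single permutation,
\[
1+\chi^2=\frac{1}{|\Pi(n)^*|}\sum_{\theta\in\Pi(n)^*}S(\theta),\qquad S(\theta):=\sum_{x,y}\frac{\p(x,y)\,\p(x,\theta(y))}{\p(x)\,\p(y)}.
\]
A transfer-matrix computation then evaluates $S(\theta)$ exactly. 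Because the edge-wise correlation equals $\rho$ for every pair regardless of its block, the normalized per-edge operator has eigenvalues $1$ and $\rho$; and because a block-preserving $\theta$ induces an edge-permutation $\bar\theta$ preserving block-pairs — so that all edge marginals within a cycle of $\bar\theta$ coincide — summing over $x$ and then $y$ yields $S(\theta)=\prod_{c}\big(1+\rho^{2\ell_c}\big)$, the product running over the cycles $c$ of $\bar\theta$ of length $\ell_c$. Fixed edges ($\ell_c=1$) contribute $1+\rho^2$, while longer cycles contribute the negligible $1+\rho^{2\ell_c}$.

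It then remains to bound the average $\frac{1}{|\Pi(n)^*|}\sum_\theta S(\theta)$. Its dominant factor is $(1+\rho^2)^{F(\theta)}$, where $F(\theta)$ is the number of edge-positions fixed by $\theta$. Organizing the sum by the number of fixed vertices of $\theta$, the generic permutations (with $O(1)$ fixed vertices, hence $F(\theta)=O(1)$) contribute $1+O(\rho^2)$, whereas the near-identity permutations carry the huge factor $(1+\rho^2)^{\Theta(n^2)}\approx e^{\Theta(\rho^2 n^2)}$ but are suppressed by the weight $1/|\Pi(n)^*|=e^{-\Theta(n\log n)}$ (using that $K$ is fixed, so $\log|\Pi(n)^*|=\log\prod_k n_k!=(1+o(1))\,n\log n$). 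These two effects balance precisely at $\rho^2 n^2\sim n\log n$, i.e. at $\rho\sim\sqrt{\log n/n}$; for $\rho=o(\sqrt{\log n/n})$ the near-identity contributions vanish and $1+\chi^2=1+O(\rho^2)$. Hence $I(G_1;H)=O(\rho^2)=o(n^2\rho^2)$, and the relative information loss tends to $1=\Theta(1)$, as claimed.

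I expect the main obstacle to lie in controlling the short-cycle correction $\prod_{\ell\ge 2}(1+\rho^{2\ell})^{c_\ell(\bar\theta)}$ uniformly in $\theta$ while simultaneously tracking $F(\theta)$. The induced edge-permutation can carry many short cycles — an involution with $\Theta(n)$ transpositions already forces $\Theta(n^2)$ two-cycles in $\bar\theta$ — so the crude estimate $\sum_{\ell\ge 2}c_\ell(\bar\theta)\rho^{2\ell}\le\rho^4\binom{n}{2}$ is far too lossy and must be replaced by a cycle-type–refined bound showing that these corrections are either genuinely small (for $\theta$ far from the identity, whose $\bar\theta$ is dominated by long cycles on which $\rho^{2\ell_c}$ is minuscule) or absorbed into the weight $1/|\Pi(n)^*|$ (for $\theta$ near the identity). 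Pinning down the exact threshold constant and making the fixed-vertex bookkeeping uniform in $\rho$ is the delicate step; the remainder is either standard information theory or a direct linear-algebra computation.
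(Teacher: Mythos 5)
This statement is Conjecture \ref{conj:lostinfo}: the paper does not prove it, and in fact explicitly leaves it open (the Discussion notes that ``the difficulty in proving the conjecture lies in \ldots{} the mutual information in the mixture model $(G_1,\boldsymbol{\sigma}(G_2))$ with low correlation''), so there is no proof of record to compare yours against; I can only assess your proposal on its own merits. The parts you actually carry out are correct: the reduction to $\boldsymbol{\sigma}^*$ via Proposition \ref{prop:ipi2}, the identification of the denominator via Proposition \ref{prop:I12}(ii), the averaged-likelihood-ratio form of the joint-to-product density, the bound $I\le\log(1+\chi^2)\le\chi^2$, and---I checked this by expanding $L(x,y)=\prod_e\bigl(1+\rho\,\phi_e(x_e)\phi_e(y_e)\bigr)$ with $\phi_e$ the standardized Bernoulli and noting that only $\bar\theta$-invariant edge subsets survive the expectation---the exact formula $S(\theta)=\prod_c\bigl(1+\rho^{2\ell_c}\bigr)$. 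Your identification of the entropy-versus-energy balance at $\rho^2 n^2\asymp n\log n$ is also the right heuristic, and it correctly predicts the conjectured threshold.

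The genuine gap is the step you yourself flag, and it is not a matter of ``delicate bookkeeping'' that standard tools will finish: it is the entire difficulty of the conjecture. What you must prove is that $\frac{1}{|\Pi(n)^*|}\sum_{\theta}\prod_c\bigl(1+\rho^{2\ell_c}\bigr)=1+O(\rho^2)$ (or at least has logarithm $o(n^2\rho^2)$, which suffices and gives you useful slack near the threshold). By Burnside's lemma this average equals $\sum_{m\ge 0}u_m(n)\,\rho^{2m}$, where $u_m(n)$ is the number of isomorphism classes (block-colored, when $K>1$) of $m$-edge graphs on $n$ vertices; so your claim is precisely a statement about the generating function of unlabeled sparse graphs evaluated at $\rho^2$. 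In the critical range $m\asymp n\log n$ one has $u_m(n)=e^{\Theta(m\log m)}$, and---worse---in this sparse range the count is dominated by graphs with nontrivial symmetry, so the ``asymmetric'' approximation $u_m(n)\approx\binom{\binom{n}{2}}{m}/n!$ that makes the balance computation come out cleanly is exactly the approximation that fails; your fallback bounds ($(1+\rho^2)^{F(\theta)}$ times a crude short-cycle correction, or counting permutations by fixed vertices) are, as you note, too lossy, since the contribution of $\theta$ is not monotone in its distance from the identity and every intermediate cycle type must be controlled uniformly. Closing this requires genuinely new asymptotic enumeration (or an equivalently sharp large-deviations argument over cycle types of $\bar\theta$), not ``standard information theory or a direct linear-algebra computation.'' So the proposal is a promising and, in my view, correctly aimed program---but it is a program, not a proof, and its missing step coincides exactly with the obstruction the paper identifies.
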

\noindent While the matchability phase transition in Theorems \ref{them:GMSBM} and \ref{thm:no_match} is tighter than the conjectured phase transition in Conjecture \ref{conj:lostinfo}, if true, Conjecture \ref{conj:lostinfo} would imply a duality between information loss and matchability:  
\begin{align*}
1.&\text{ if }\rho=\omega(\sqrt{\log n/n})\text{ then }\frac{I(G_1;G_2)-I(G_1;\boldsymbol{\sigma}(G_2))}{I(G_1;G_2)}=o(1),\text{ and }G_1\text{ and }G_2\text{ are matchable}\\
2.&\text{ if }\rho=o(\sqrt{\log n/n})\text{ then }\frac{I(G_1;G_2)-I(G_1;\boldsymbol{\sigma}(G_2))}{I(G_1;G_2)}=\Theta(1),\text{ and }G_1\text{ and }G_2\text{ are not matchable}.
\end{align*}

In Section \ref{S:matching}, we show that in the high correlation regime---where the graphs are matchable and relatively little information is lost due to shuffling---graph matching can effectively recover the information lost due to shuffling a.a.s.; see Theorem \ref{thm3}.
This provides a theoretical foundation for understanding the utility of graph matching as a pre-processing step for a host of inference tasks:  often when the lost information due to shuffling has a negative effect on inference, graph matching can recover the lost information and improve the performance in subsequent inference.
In the $\rho=o(\sqrt{\log n/n})$ regime, while the graphs are no longer matchable, we conjecture (and experiments bear out) that the alignment found by graph matching still recovers much of the lost information.  
However, theoretically working in this regime will require new proof techniques, and we do not pursue this further here.


\subsection{Graph matching: Recovering the lost information}
\label{S:matching}

In Section \ref{S:infoloss} we show that the information lost, even when relatively small (see Theorem \ref{thm:infoloss}), can have a deleterious effect on subsequent inference, and
we demonstrate the potential of graph matching to recover the lost inference performance.
Theorem \ref{thm3} below provides a major step towards formalizing this intuition, proving that in the $\rho=\omega(\sqrt{\log n/n})$ regime, graph matching recovers almost all of the lost information.
The practical effect of this is the recovery of the lost inferential performance.
Note that the proof of Theorem \ref{thm3} can be found in Section \ref{sec:recover}.
\begin{theorem}\label{thm3}
Let $(G_1,\boldsymbol{\sigma}(G_2))\in \mathcal{G}_n\times\mathcal{G}_n\sim\boldsymbol{\sigma},\rho$-SBM($K,\vec{n},b,\Lambda$) with $\Lambda$ and $K$ fixed in $n$.  
If $\rho=\omega(\sqrt{\log n /n})$, then
$I(G_1;G_2)-I(G_1;\GMs)=o(1).$
\end{theorem}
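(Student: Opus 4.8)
The plan is to first collapse the shuffled quantity onto its unshuffled analogue, then show that in the high-correlation regime graph matching returns $G_2$ \emph{exactly} with overwhelming probability, so that $\GMs$ and $G_2$ differ only on an event whose probability decays fast enough to dominate the (very large) entropy of a graph. First I would invoke the distributional identity recorded immediately after Definition \ref{def:GMM}: since $(G_1,\GMs)$ and $(G_1,\GM)$ are equal in distribution, we have $I(G_1;\GMs)=I(G_1;\GM)$, and it suffices to prove
\[
I(G_1;G_2)-I(G_1;\GM)=o(1).
\]

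Next I would exploit matchability. Because $\rho=\omega(\sqrt{\log n/n})$, for all large $n$ we have $\rho\geq\sqrt{\alpha\log n/n}$ with $\alpha$ as in Theorem \ref{them:GMSBM}, so that the event $M:=\{\,I_n\text{ is the unique minimizer of }\|A-PBP^T\|_F\,\}$ satisfies $\p(M^c)=O(e^{-3\log n})=O(n^{-3})$. On $M$ we have $P^*_{G_1,G_2}=\{I_n\}$ and hence $\GM=G_2$. The only way a minimizing permutation $\phi\neq I_n$ can change the matched graph is when $P_\phi B P_\phi^T\neq B$ (an automorphism of $G_2$ leaves $\GM=G_2$ unchanged); the same union-bound structure underlying Theorem \ref{them:GMSBM} controls, at the same $O(n^{-3})$ rate, the probability that such a $\phi$ attains objective value at most $\|A-B\|_F$. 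I would therefore fold all these cases into a single estimate
\[
\delta:=\p(\GM\neq G_2)=O(n^{-3}).
\]

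The final step is a quantitative continuity-of-entropy estimate. Writing
\[
I(G_1;G_2)-I(G_1;\GM)=\big(H(G_2)-H(\GM)\big)-\big(H(G_2\mid G_1)-H(\GM\mid G_1)\big),
\]
I would bound each difference with the standard Fano-type inequality: for $\mathcal{G}_n$-valued $X,X'$ with $\p(X\neq X')=\delta\leq 1/2$,
\[
|H(X)-H(X')|\leq \delta\log(|\mathcal{G}_n|-1)+h(\delta),\qquad h(\delta):=-\delta\log\delta-(1-\delta)\log(1-\delta),
\]
together with its conditional version, obtained by conditioning on the disagreement indicator $\mathds{1}[X\neq X']$ and on $G_1$. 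Since $|\mathcal{G}_n|=2^{\binom{n}{2}}$ gives $\log|\mathcal{G}_n|=\Theta(n^2)$, the bound on $\delta$ yields $\delta\log|\mathcal{G}_n|=O(n^{-3})\cdot\Theta(n^2)=O(n^{-1})=o(1)$ and $h(\delta)=O(n^{-3}\log n)=o(1)$. Summing these two $o(1)$ contributions gives $I(G_1;G_2)-I(G_1;\GM)=o(1)$, which combined with the first step proves the theorem.

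The hard part will be the interplay of scales in the last step: the entropy of a graph is $\Theta(n^2)$, so no crude triangle or union bound can succeed, and the argument closes only because Theorem \ref{them:GMSBM} supplies a matchability-failure probability of order $n^{-3}$, beating $\log|\mathcal{G}_n|=\Theta(n^2)$ with room to spare. A secondary subtlety, which I would address within the same $n^{-3}$ budget, is ruling out nontrivial ties in the graph-matching objective that produce a genuinely different graph (i.e.\@ $P_\phi B P_\phi^T\neq B$), since these — and not automorphism ties — are the only degeneracies that can force $\GM\neq G_2$ while no permutation strictly beats the identity.
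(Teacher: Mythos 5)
Your proposal is correct and follows essentially the same route as the paper: both hinge on Theorem \ref{them:GMSBM} giving $\p(\GMs\neq G_2)=O(e^{-3\log n})$ (the paper packages this as a standalone proposition via the definition of $P^*_{x,y}(y)$), followed by a Fano-type argument in which the $O(n^{-3})$ disagreement probability beats $\log|\mathcal{G}_n|=\Theta(n^2)$. The only cosmetic difference is that you invoke the entropy-continuity corollary of Fano's inequality directly on $H(G_2)-H(\GM)$ and $H(G_2\mid G_1)-H(\GM\mid G_1)$, whereas the paper derives the same conclusion by applying Fano to $H[G_2\mid\GMs]$ and $H[\GMs\mid G_2]$ and then using the chain rule.
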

\noindent 
Theorem \ref{thm3} provides a sharp contrast to the information lost in the shuffled graph regime in this high correlation setting.
Indeed, recall that Theorem \ref{thm:infoloss} implies that 
$I(G_1;G_2)-I(G_1;\boldsymbol{\sigma}(G_2))=\Omega(n\rho^2)=\Omega(\log n).$

\begin{remark}
\emph{The information processing inequality states that, given random variables $X$ and $Y$ and measurable $T$, 
$I(X;Y)\geq I(X;T(Y)).$
Intuitively, we cannot transform $Y$ independently of $X$ and increase the mutual information between $Y$ and $X$.
At first glance, Theorem \ref{thm3}, which implies that $I(G_1;\boldsymbol{\sigma}(G_2))<I(G_1;\GMs)$, seems to contradict this.
However, we note that $\GMs$ is a function of {\it both} $G_1$ and $\boldsymbol{\sigma}(G_2)$.
Indeed, if $Z=T(X,Y)$ then the information processing inequality need not hold (for a simple example, let $X$ have nontrivial entropy, let $Y$ be independent of $X$, and let $T(x,y)=x$).}\end{remark}


\section{Empirically matching in the low correlation regime}
\label{S:lowcorr}

\begin{figure*}[t!]
  \centering
  \subfloat[][Match error versus correlation]{\includegraphics[width=0.45\textwidth]{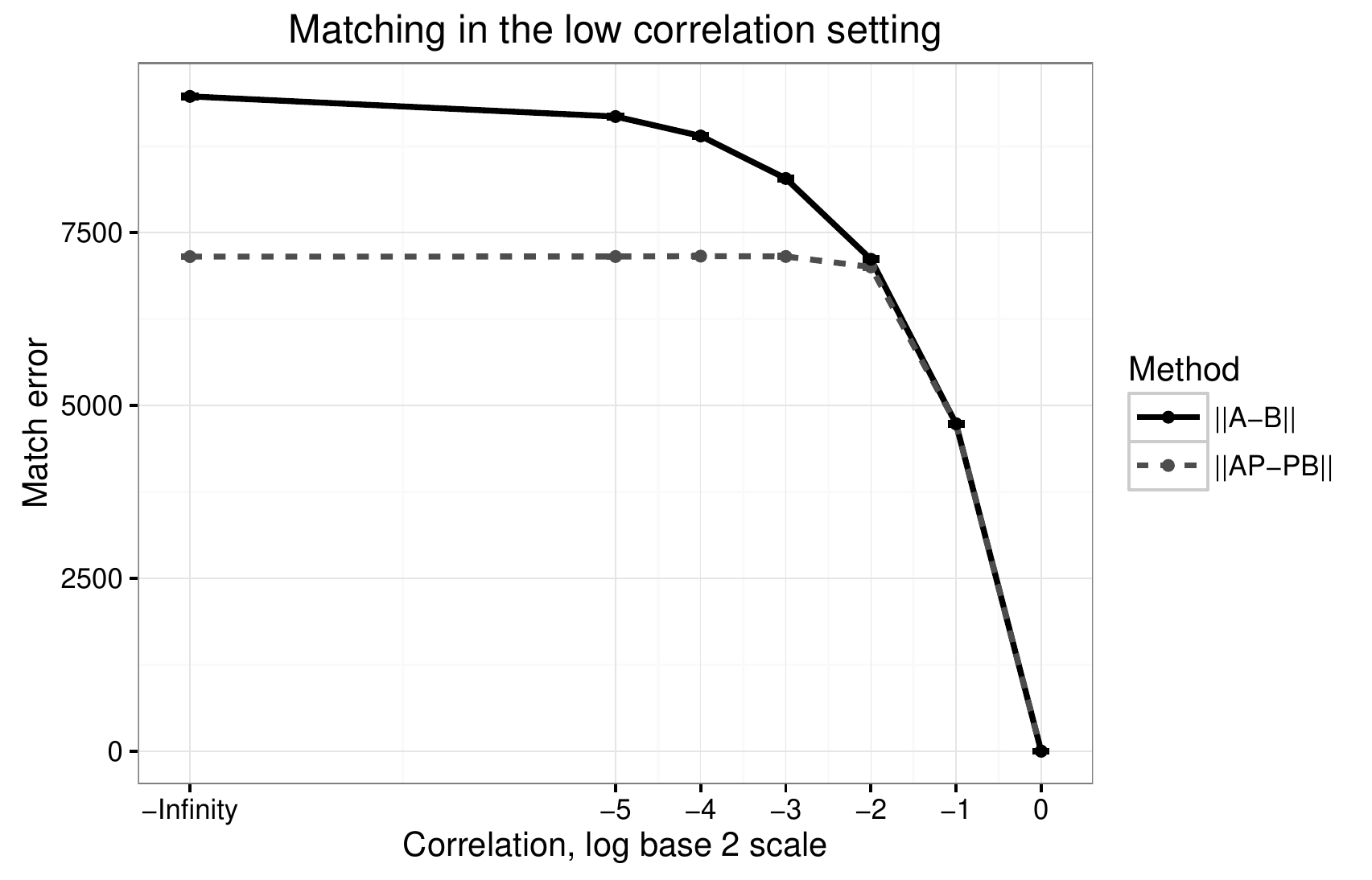}}
  \subfloat[][Sample edge correlation vs. correlation]{\includegraphics[width=0.45\textwidth]{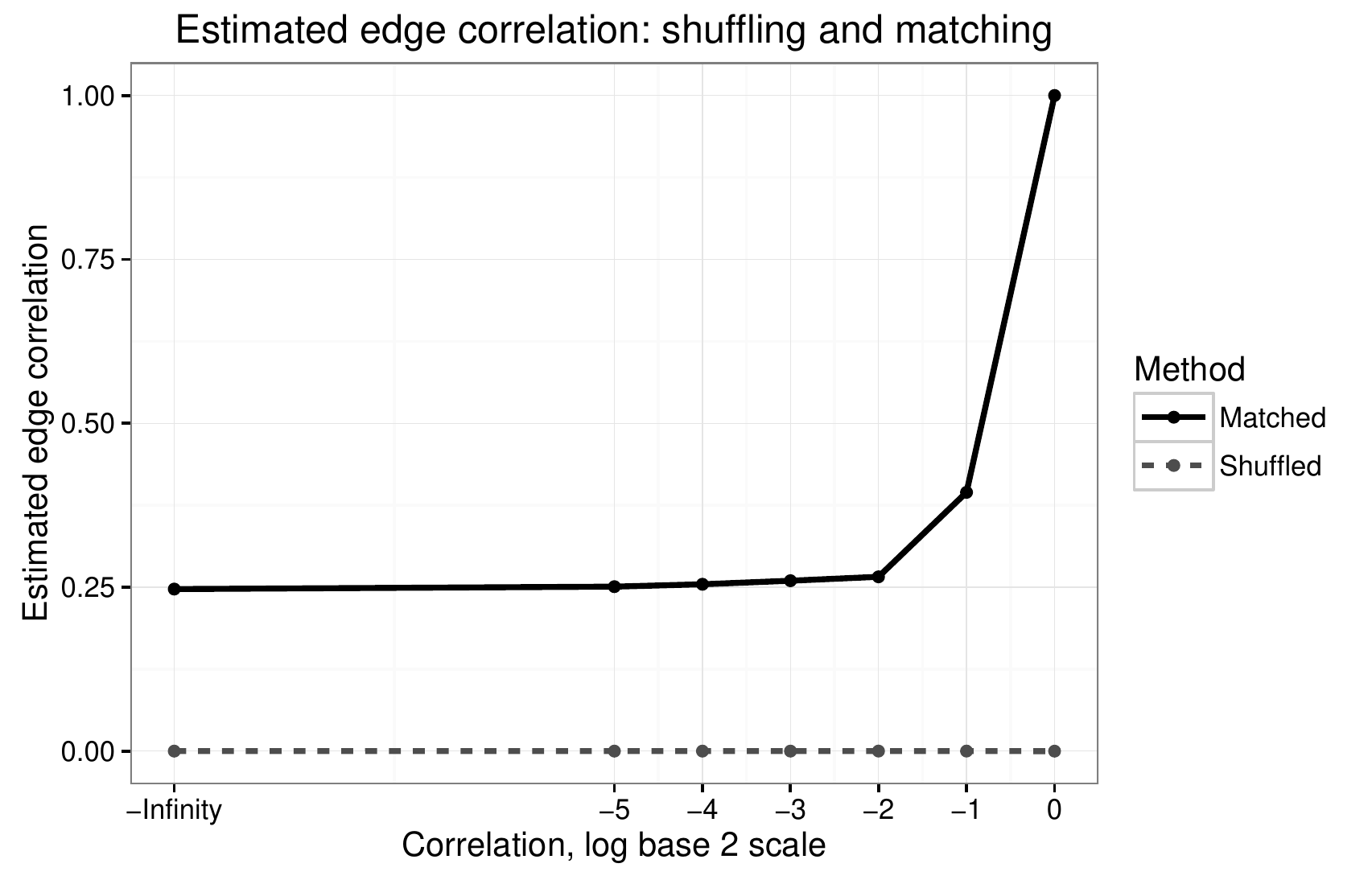}}
\caption{
For $(G_1,G_2)\sim\rho$-SBM$(3,(50,50,50),b,\Lambda)$,  
we plot the mean number of edge disagreements $\pm2$ s.e. against correlation $\rho$ when the graphs are aligned with the true latent correspondence (``$\|A-B\|$'' in the legend), and the graphs are aligned by initializing the FAQ graph matching algorithm at the true correspondence (``$\|AP-PB\|$'' in the legend).
In the right panel
we plot the mean sample edge correlation against $\rho$ for the matched graphs $(G_1,\GM)$ (``Matched'' in the legend), and for the shuffled graphs $(G_1,\boldsymbol{\sigma}(G_2))$ with $\boldsymbol{\sigma}$ uniformly distributed over $\Pi(150)$ (``Shuffled'' in the legend).
In each case, the means are computed across $200$ Monte Carlo iterates.
}
\label{fig:LC}
\end{figure*}

In the low correlation regime, graph matching with high probability cannot recover the true correspondence in the presence of vertex shuffling.
We conjecture that this is due, in part, to a nontrivial amount of the information between $G_1$ and $G_2$ being irrevocably lost due to the vertex shuffling.
To explore this further empirically, we consider the following experiment.
For $(G_1,G_2)\sim\rho$-SBM$(3,(50,50,50),b,\Lambda)$, with
$$b(i)=\mathds{1}_{[50]}(i)+2\mathds{1}_{51:100}(i)+3\mathds{1}_{101:150}(i)\text{ and }\Lambda=\begin{psmallmatrix} 0.5  &0.3& 0.2 \\ 0.3&0.5&0.3\\0.2&0.3&0.5\end{psmallmatrix},$$ 
we plot in the left panel of Figure \ref{fig:LC} the mean number of edge disagreements $\pm2$ s.e.\@ against correlation $\rho\in(0,1/32,1/16,1/8,$ $1/4,1/2,1)$ when the graphs are aligned with the true latent correspondence (``$\|A-B\|$'' in the legend), and the graphs are aligned by initializing the FAQ graph matching algorithm at the true correspondence (``$\|AP-PB\|$'' in the legend).
In each case, the means are computed across $200$ Monte Carlo iterates.
In the right panel
we plot the mean sample edge correlation against $\rho$ for the matched graphs $(G_1,\GM)$ (``Matched'' in the legend), and for the shuffled graphs $(G_1,\boldsymbol{\sigma}(G_2))$ with $\boldsymbol{\sigma}$ uniformly distributed over $\Pi(150)$ (``Shuffled'' in the legend), again averaged over $200$ Monte Carlo iterates. 
We note here that we observe similar phenomena as in Figure \ref{fig:LC} across a broad swath of parameter values as well.

From the figure we make the following observations.
First, as the FAQ algorithm is a Frank-Wolfe based approach, the agreement between the two methods in Figure \ref{fig:LC} panel (a) for $\rho\in(1/2,1)$ implies that for these large correlation levels, the latent alignment is a (local) optimum for the graph matching objective function.
Likewise, the improvement in the match error induced by matching the graphs via FAQ initialized at $I_{150}$ for $\rho\in(0,1/32,1/16,1/8)$ implies that the latent alignment is not a (global and local) optimum for the graph matching objective function for these lower correlation values.
This coincides with our intuition that in the presence (resp., absence) of enough correlation graph matching can (resp., cannot) recover the latent alignment in the presence of shuffling.
In the figure we also see that in the graph matched setting, the matched error roughly asymptotes after the matchability phase transition.  
This is due, in part, to the \emph{matchability} of $G_1$ and $\GM$---indeed, if the vertex alignment induced in $\GM$ is viewed as the true latent alignment then (absent symmetries) this alignment is clearly recoverable via graph matching.
Below the phase transition, matching the shuffled, correlated graphs artificially induces more edge-wise correlation than present in the latent alignment---see the right panel of Figure \ref{fig:LC}---in all cases bringing the edge correlation between $\GM$ and $G_1$ to the phase transition threshold.
Shuffling effectively makes the edges across graphs independent (uncorrelated equals independent in the Bernoulli setting), and 
matching the shuffled graphs induces the same local edge correlation structure as would matching independent graphs; the original edge-correlation structure which is captured in $I(G_1,G_2)$ is truly lost in the shuffle and the global structure preserved in the shuffle (subgraph counts, community structure, etc.) is not enough for graph matching to recover the latent alignment.


\section{The effect on subsequent inference}
\label{S:infoloss}

While the loss in information due to shuffling has little effect on inference tasks that are independent of vertex labels (for example, the nonparametric hypothesis testing methodologies of \cite{tang2014nonparametric,asta2014geometric}), the effect on inference that assumes an a priori known vertex alignment may be dramatic.
We demonstrate this in the context of joint graph clustering and two sample hypothesis testing for graphs.
The trend we demonstrate below is as follows: diminished performance as the alignment is shuffled and the performance loss due to shuffling being recovered via graph matching.
We expect this trend to generalize to a host of other joint inference tasks as well.
We note here that these results provide a striking contrast to those in \cite{vogelstein2011shuffled}, where it was shown that even if the graph labeling contain relevant class signal, obfuscating the labels does not necessarily decrease classification performance in a single graph setting.



\subsection{Hypothesis Testing}
\label{S:ERHT}
We first consider the simple setting of testing whether two $\rho$-correlated Erd\H os-R\'enyi graphs have the same edge probability.  Formally, given $(G_1,G_2)\sim\rho$-ER($n,p,q$)---i.e., the edgewise correlation of $G_1\sim$ER($n,p$) and $G_2\sim$ER($n,q$) is $\rho$ where, as $p$ and $q$ potentially differ here, we require $\rho\leq\min\left(\sqrt{\frac{p(1-q)}{q(1-p)}},\sqrt{\frac{p(1-q)}{q(1-p)}}\right)$---we wish to test the hypotheses
$H_0:p=q$ versus $H_1:p\neq q$.
If the edges across graphs were uncorrelated, under $H_0$ we can view the two edge sets as samples from independent Bin($\binom{n}{2},p$) random variables and a natural test statistic for testing $H_0$ versus $H_1$ would be that of the two proportion pooled $z$-test; namely
$T_1(G_1,G_2)=\frac{\hat p_1-\hat p_2}{\sqrt{2\hat p(1-\hat p)/\binom{n}{2}}}$
where $\hat p_1=|E_1|/\binom{n}{2}$, $\hat p_2=|E_2|/\binom{n}{2}$, and $\hat p=(\hat p_1+\hat p_2)/2$.
As the edges are positively correlated (with the same $\rho$) across graphs, a more powerful test of $H_0$ versus $H_1$ would be a paired two proportion $z$-test; namely that with test statistic
$T_2(G_1,G_2)=\frac{\hat p_1-\hat p_2}{\sqrt{\hat 2p(1-\hat p)(1-\hat\rho)/\binom{n}{2}}},$
where $\hat\rho$ is the empirical correlation between the edge sets of $G_1$ and $G_2$.
In this paired setting, directly applying the two proportion pooled test would yield an overly conservative (level less than $\alpha$) test.
Correcting for the type-I error in the pooled test (to make it approximately level $\alpha$) is achieved by multiplying the $z$-test critical value by $\sqrt{1-\hat\rho}$ yielding an equivalent test to the paired test using $T_2$.
A natural question in this paired setting is what level of shuffling is necessary for the pairedness of the data to become so corrupted (i.e., the information in the pairing being lost) as to render the unpaired test more powerful, and can graph matching recover the lost inferential performance in the paired regime?
Insomuch as $I(G_1;G_2)$ captures the edge-wise correlation across networks, this question is precisely addressing the effect that the reduced information (due to shuffling) has on testing power. 
\begin{figure}[t!]
  \centering
\includegraphics[width=0.7\textwidth]{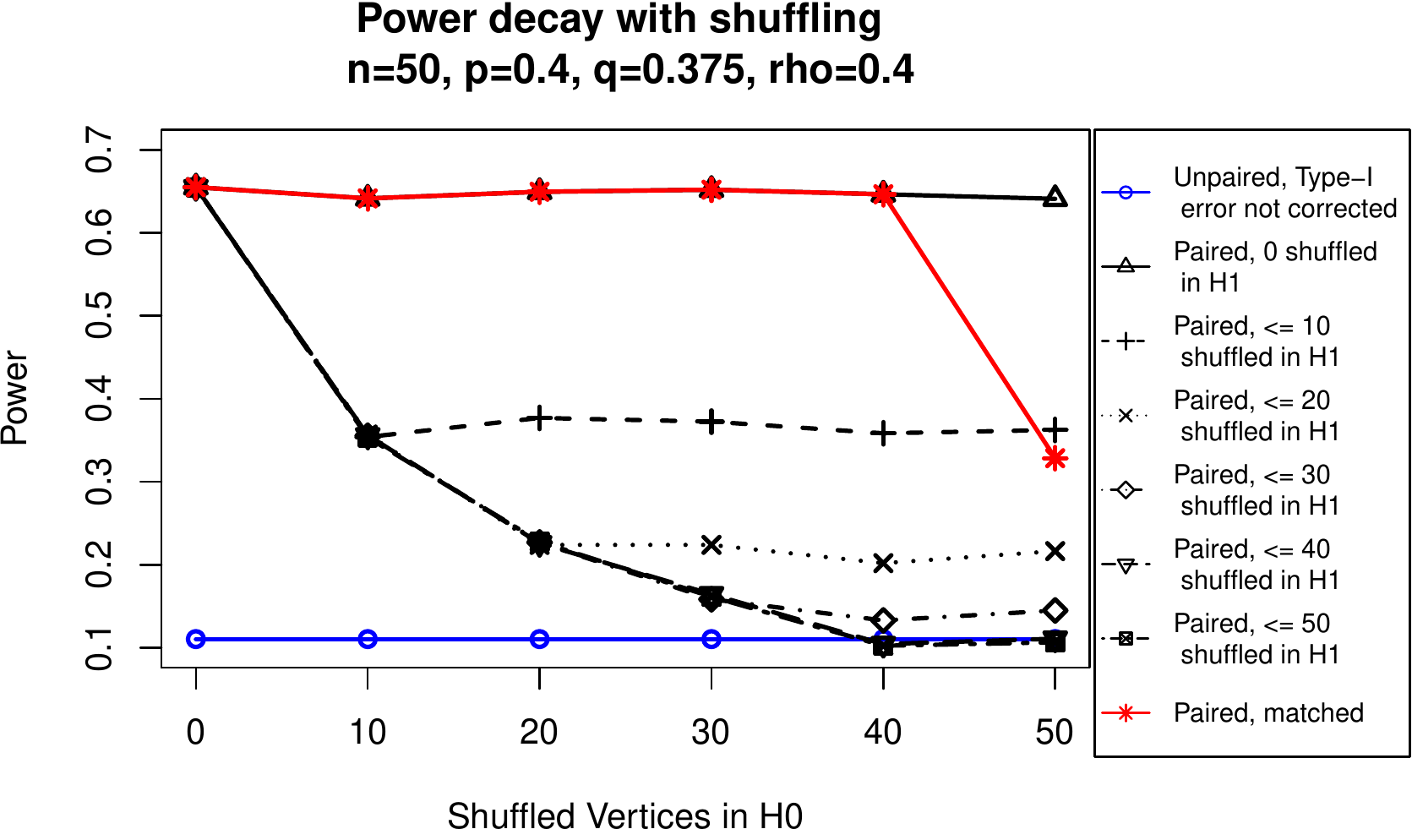}
\caption{
We plot the power (based on 2000 Monte Carlo trials) when testing $H_0:p=q$ versus $H_1:p\neq q$ against the number of unseeded (potentially shuffled) vertices, $n-s$.
In black, we plot the power of the paired test when at most $min(n-s,x)$ of the $n-s$ unseeded vertices have their labels shuffled under $H_1$ for $x\in(0,10,20,30,40,50)$.
The blue line plots the power of the unpaired test (not type-I error corrected), and the red line plots the power of the paired test when graph matching is used to align the networks before computing $T_2$.}
\label{fig:PowerLoss}
\end{figure} 

Exploring this further, we consider the above tests with $p=0.4$, $q=0.375$, $n=50$, and $\rho=0.7$ when $s$ vertex correspondences are assumed known across $G_1$ and $G_2$ for $s\in(0,10,20,30,40,50)$ (i.e., $n-s$ unseeded and potentially shuffled vertex labels), noting here that we observed similar phenomena across a broad swath of parameter choices.
As we can empirically sample from the null distribution of $T_2$ given the level of shuffling, to ensure a level $\alpha$ test the shuffled null distribution is computed under the least favorable element of the null hypothesis for $T_2$, which here corresponds to none of the $n-s$ unseeded vertex having their labels shuffled.

In Figure \ref{fig:PowerLoss}, we plot the power (based on 2000 Monte Carlo trials) when testing $H_0:p=q$ versus $H_1:p\neq q$ against the number of unseeded (potentially shuffled) vertices.
In black, we plot the power of the paired test when at most $min(n-s,x)$ of the $n-s$ unseeded vertices have their labels shuffled under $H_1$ for $x\in(0,10,20,30,40,50)$.
The blue line plots the power of the unpaired test (not type-I error corrected), and the red line plots the power of the paired test when graph matching is used to align the networks before computing $T_2$.
As expected, we see that as more vertices are shuffled under $H_1$ the power of the paired test decreases precipitously.
From the figure, we also see that the information which is lost in the the shuffle---and the subsequent lost testing power---is recovered by first graph matching the networks before computing $T_2$ (at least when $n-s<50$ and the graph matching is effective in recovering the latent correspondence).

Note that the matched test is more powerful than the unpaired alternative for all levels of shuffling in this example, and is more powerful than the shuffled test as long as the graphs are sufficiently shuffled under $H_1$.
As the exact level of shuffling amongst the unseeded vertices (in $H_0$ or $H_1$) is unknown a priori, we propose the graph matching version of the test as a conservative, more robust, version of testing $H_0$ versus $H_1$.
We view the decreased power at $n-s=50$ of the matched test as an algorithmic artifact; indeed, with no seeds the GM algorithm we employ often does not recover the true correspondence after shuffling.
With a perfect matching, we would expect the matched power to be $\approx0.67$ for all values of $n-s$.

An interesting aspect of Figure \ref{fig:PowerLoss} is that the power of the unpaired test and the paired test with $n-s=50$ are identical.  
For $T_1$ and $T_2$ to yield approximately the same power with the same critical value used (which is the case in the least favorable element of the null for $T_2$ with $0$ seeds), it is necessary that $\hat\rho\approx 0$.
As we are in a Bernoulli setting, the edge-wise correlation being effectively $0$ is indicative of the edges across graphs being effectively \emph{pairwise} independent (sample correlation has mean of order $1/n^2$), in which case the paired test we are using reduces to its unpaired alternative.
Although the edges are effectively pairwise independent, the graphs \emph{globally} are not.  
Indeed, less local structures (i.e., subgraph counts, community structures, etc.) are still correlated after shuffling.
In this high correlation setting, this global structure is also captured by $I(G_1,\boldsymbol{\sigma}(G_2))$ which, though diminished by shuffling, is still nontrivial.
It is this global structure that is able to be leveraged by graph matching to recover the lost local signal and the lost information.
This is precisely what separates the present graph setting from the more classical paired data setting, in which shuffling data labels is potentially irreversibly detrimental to subsequent inference.


\subsubsection{The effect of shuffling on embedding-based tests}
\label{S:Semipar}

In \cite{MT2}, we propose a semiparametric hypothesis testing framework for determining whether two graphs are generated from the same underlying random graph model. 
Note that while this test is not provably UMP (indeed, no such tests exist in the literature for sufficiently complex graph models), it is nonetheless one of the first provably consistent two-sample graph hypothesis tests posed in the literature.
The test proceeds as follows. Given $G_1$ distributed as heterogeneous ER($P$) and $G_2$ distributed as heterogeneous ER($Q$) with $P$ and $Q$ assumed positive semidefinite rank $d$ edge-probability matrices, $G_1$ and $G_2$ are first embedded into $\mathbb{R}^{n\times d}$ via adjacency spectral embedding.
       \begin{defn} 
  Let $G$ be an $n$-vertex graph with adjacency matrix $A$. 
  The $d$-dimensional adjacency spectral embedding (ASE$_d$) of $G_1$ is given by $\widehat X=U_{A}
 S_{A}^{1/2}$, where
$|A|=\big[U_{A}|\widetilde{U}_{A}\big]\big[{
  S}_{A} \oplus \widetilde{S}_{A}\big]\big[{
  U}_{A}|\widetilde{U}_{A}\big]$ is the spectral
decomposition of $|A| = (A^{T} A)^{1/2}$, 
$S_{A}\in\mathbb{R}^{d\times d}$ is the diagonal matrix containing the $d$ largest eigenvalues
of $|A|$ on its diagonal, and $U_{A}\in\mathbb{R}^{n\times d}$ is the matrix whose
columns are the corresponding orthonormal eigenvectors.
\end{defn}
\noindent In \cite{MT2} it is proven that, under mild assumptions, the (suitably rotated) rows of $\widehat X=ASE_d(G_1)$ concentrate tightly around the corresponding scaled eigenvectors of $P$ with high probability.
This fact is leveraged to produce a consistent hypothesis test for testing $H_0:P=Q$ versus $H_1:P\neq Q$ based on a suitably scaled version of the test statistic 
$T_1(\widehat X,\widehat Y)=    \min_{W \in \mathbb{R}^{d\times d}\text{s.t. } W^TW=I_d}\|\widehat X W-\widehat Y \|_{F},$ where $\widehat{X}=ASE_d(G_1)$ and $\widehat{Y}=ASE_d(G_2)$.

In \cite{levin2017central}, the test is further refined to more explicitly take advantage of the assumed known vertex correspondence across networks.
Inspired by \cite{jointLi} and the joint manifold embedding methodology of \cite{JOFC,fjofc} , we proceed as follows. 
Given vertex-aligned $G_1$ and $G_2$, we use ASE to first embed the \emph{omnibus adjacency matrix}
$\mathcal{O}=\begin{psmallmatrix}
       A & (A+B)/2\\
       (A+B)/2 &B\end{psmallmatrix}$.
By jointly embedding $G_1$ and $G_2$ via the omnibus matrix, the Procrustes rotation necessary in computing $T_1$ can be circumvented, which is empirically shown to increase testing power in \cite{levin2017central}.
To wit, if $ASE_d(\mathcal{O})=\begin{bsmallmatrix}\widehat X_O\\ \widehat Y_O\end{bsmallmatrix}$ with both $\widehat X_O,\widehat Y_O\in\mathbb{R}^{n\times d}$, then the omnibus test statistic is a suitably scaled version of $T_2(\widehat X_O,\widehat Y_O)=\|\widehat X_O-\widehat Y_O \|_{F}.$

As in the correlated Erd\H os-R\'enyi setting, we wish to understand the impact of vertex shuffling on testing power using $T_2$.
As before, we expect that as the labels are progressively more corrupted, tests based on graph invariants that do not utilize the latent alignment will achieve higher power than testing based on $T_2$.
Exploring this further, we consider the following experiment.  Let $P=XX^T$ be a rank $3$ positive semidefinite matrix with the rows of $X$ distributed as i.i.d. samples from a Dirichlet(1,1,1) distribution, and let $G_1$ be distributed as heterogeneous ER($P$) (so that $G_1$ can be viewed as a sample form a random dot product graph with parameter $X$ \cite{young2007random}).
Consider $Q=YY^T$ where the final $n-20$ rows of $Y$ are identical to those of $X$ and the first $20$ rows of $Y$ are realized via $Y(1\!:\!20,\cdot)=0.8*X(1\!:\!20,\cdot)+0.2*D$ where $D\in\mathbb{R}^{20\times 3}$ has i.i.d. Dirichlet(1,1,1) rows, independent of $X$.
We let $G_2$ be distributed as heterogeneous ER($Q$), with edges maximally correlated to those of $G_1$ by 
$\rho=\min\left(\sqrt{Q*(1-P)/(P*(1-Q))}, \sqrt{P*(1-Q)/(Q*(1-P))}\right),$
where the multiplication and division in the computation of $\rho$ is entry-wise for the matrices there involved.

\begin{figure}[t!]
  \centering
\includegraphics[width=0.7\textwidth]{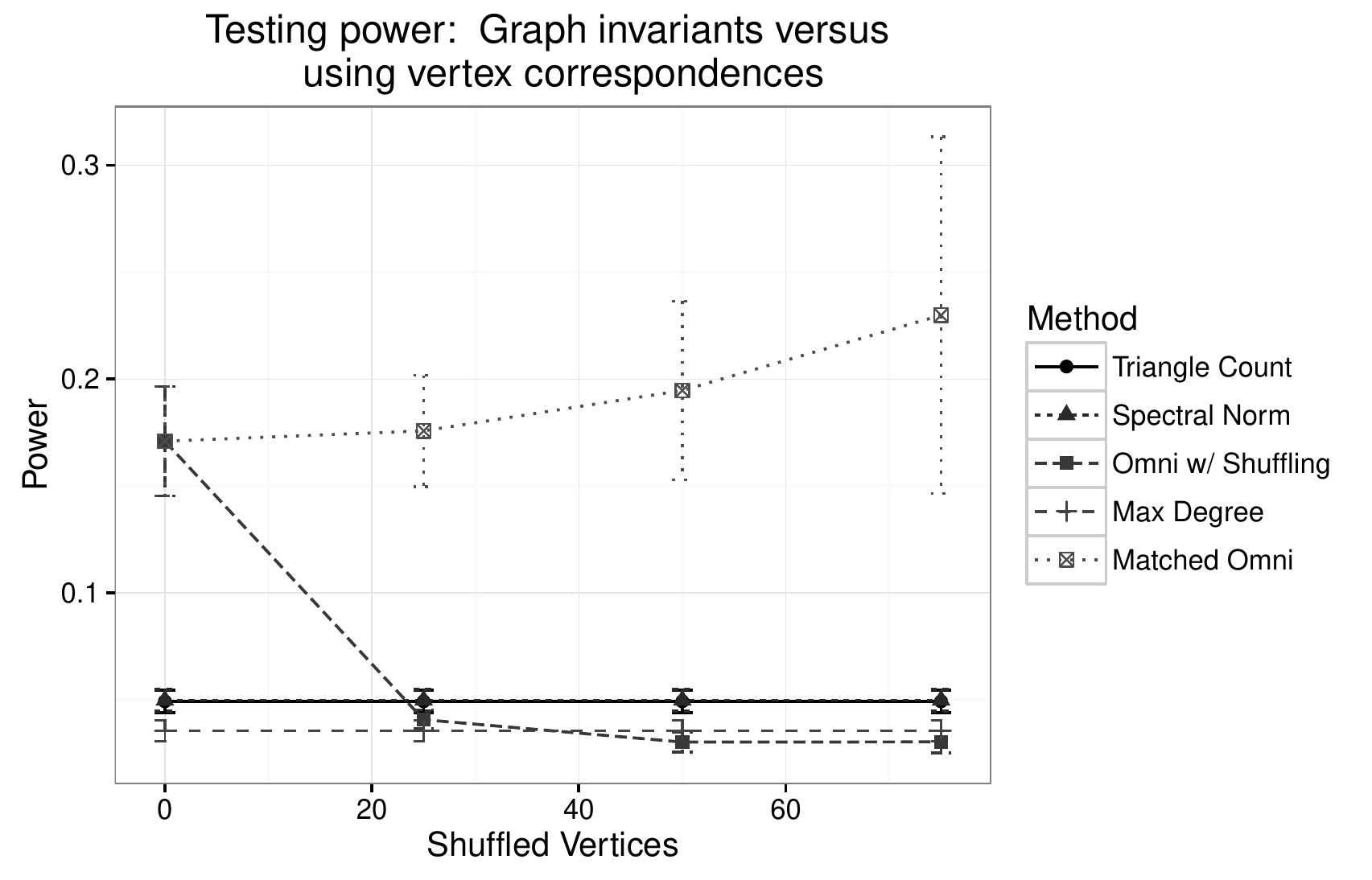}
\caption{
We plot (``Omni w/ Shuffling" in the legend) the power $\pm2$ s.e.\@ of the test of $H_0$ versus $H_1$ using $T_2$ (with appropriate Type-I error correction) when $x=(0,25,50,75)$ vertices have their labels potentially shuffled (i.e., $100-x$ seeded vertices).
We plot the power $\pm2$ s.e.\@ of the test using $T_2$ (``Matched Omni" in the legend) when the seeded vertices are used to first matched the graphs before computing $T_2$ to directly test $H_0$ versus $H_1$.
We plot the power $\pm2$ s.e.\@ of testing $H_0$ versus $H_1$ using test statistics $T_3=|M(A)-M(B)|$, $T_4=|t(A)-t(B)|$, $T_5=|s(A)-s(B)|$ (resp., ``Max Degree", ``Triangle Count", and ``Spectral Norm" in the legend).
In all cases, the power is averaged over 25 Monte Carlo replicates.}
\label{fig:versusGI}
\end{figure}

We are interested in understanding the power of the test using $T_2$ for detecting the anomalous behavior of the $20$ vertices in $G_2$. 
In Figure \ref{fig:versusGI}, we plot (``Omni w/ Shuffling" in the legend) the power $\pm2$ s.e.\@ of the test of $H_0$ versus $H_1$ using $T_2$ when $x=(0,25,50,75)$ vertices have their labels potentially shuffled (i.e., $100-x$ seeded vertices).
To control the level of the test using $T_2$ here, we sample the null distribution under the least favorable member of the composite null which corresponds to all unseeded vertices being shuffled under $H_0$.
Rather than considering different levels of shuffling under $H_1$ as in Section \ref{S:ERHT}, we plot the best possible performance for the composite alternative hypothesis; this is achieved when all unseeded vertices are also shuffled under $H_1$.
We also plot the power $\pm2$ s.e.\@ of the test using $T_2$ (``Matched Omni" in the legend) when the seeded vertices are used to first matched the graphs before computing $T_2$ to directly test $H_0$ versus $H_1$.
In each case, the power is averaged over 25 Monte Carlo replicates, which here corresponds to 25 different realizations of $G_1$ and $G_2$.

To compare the omnibus embedding based test to graph invariant based tests, we consider the following graph invariants.
For an adjacency matrix $A$, we let $M(A):=\max_i(\sum_j A_{i,j})$ be the maximum vertex degree in $A$, $t(A):=\text{trace}(A^3)/6$ be the number of triangle subgraphs present in $A$, and $s(A):=\|A\|_2$ be the spectral norm of $A$.  
In Figure \ref{fig:versusGI} we then plot the power $\pm2$ s.e.\@ of testing $H_0$ versus $H_1$ using these graph invariant test statistics: $T_3=|M(A)-M(B)|$, $T_4=|t(A)-t(B)|$, and $T_5=|s(A)-s(B)|$ (resp., ``Max Degree", ``Triangle Count", and ``Spectral Norm" in the legend).
In all cases, the power is averaged over 25 Monte Carlo replicates, which here corresponds to 25 different realizations of $G_1$ and $G_2$.

From the figure, we see that in the presence of sufficient shuffling the graph invariant based tests are more powerful than the test that leverages the errorful correspondence.  
However, graph matching successfully recovers the lost information and subsequently the lost testing power.
The increased variance in the graph matching based test at $x=75$ can be attributed to errors induced in the matching under $H_1$ with only 25 seeds.  
With this number of seeds our matching algorithm can effectively recover the latent alignment under $H_0$, while under $H_1$ the graph matching algorithm occasionally fail to recover the true correspondence.
These errors under $H_1$ result in strictly increased testing power here (and hence the increased standard error).  
While at first glance this would suggest using an imperfect matching to optimize power, the red curve demonstrates the dangers of having an imperfect matching in both $H_0$ and $H_1$.
As practically there is no way to know whether the matching is perfect or not or whether we are in $H_0$ or $H_1$, we view the matched test (always matching under both $H_0$ and $H_1$) as a conservative, robust alternative to its unmatched paired alternative.

We lastly note that while the graph invariant methods perform poorly in this heterogeneous ER anomaly setting, under alternate testing regimes we expect these tests to outperform the embedding based test here presented (for example, when testing in certain non-edge independent models).
Further understanding the properties of the underlying model that dictate this performance is paramount in practice, and we are presently pursuing this line of research.

\begin{remark}\emph{Note that while it is perhaps more natural to use an appropriately centered and scaled version of $\|A-B\|_F$ as our test statistic, 
as noted in \cite{MT2}, $\|A-B\|_F$ yields a test that is inconsistent for a large class of alternatives (e.g., if $G_2\sim ER(n,0.5)$ in the $G_1$ and $G_2$ independent setting), whereas the test based on $T_1$ is provably level-$\alpha$ consistent over the entire range of (fixed) alternative distributions.
In \cite{levin2017central}, we posit the same level-$\alpha$ consistency for testing based on $T_2$.}
\end{remark}

\subsection{Joint versus single graph clustering}
\label{Sec:clust}
We next explore the impact that label shuffling has on spectral graph clustering.  
Spectral graph clustering has become an important and widely-used machine learning method, with a sizable literature devoted to various spectral clustering algorithms under several model assumptions; see, for example, \cite{von2007tutorial,qin2013dcsbm,rohe2011spectral,sussman2012consistent,fishkind2013consistent,lyzinski2014perfect}.
We focus here on a variant of the methodology of \cite{sussman2012consistent,jointLi}, which embeds (a pair of) graphs into an appropriate Euclidean space and subsequently employs the $k$-means algorithm to cluster the data.  Here, rather than using $k$-means clustering to cluster the data, we will employ the model-based clustering algorithm \texttt{Mclust} \cite{fraley1999mclust}.

\begin{figure}[t!]
  \centering
  \includegraphics[width=0.5\textwidth]{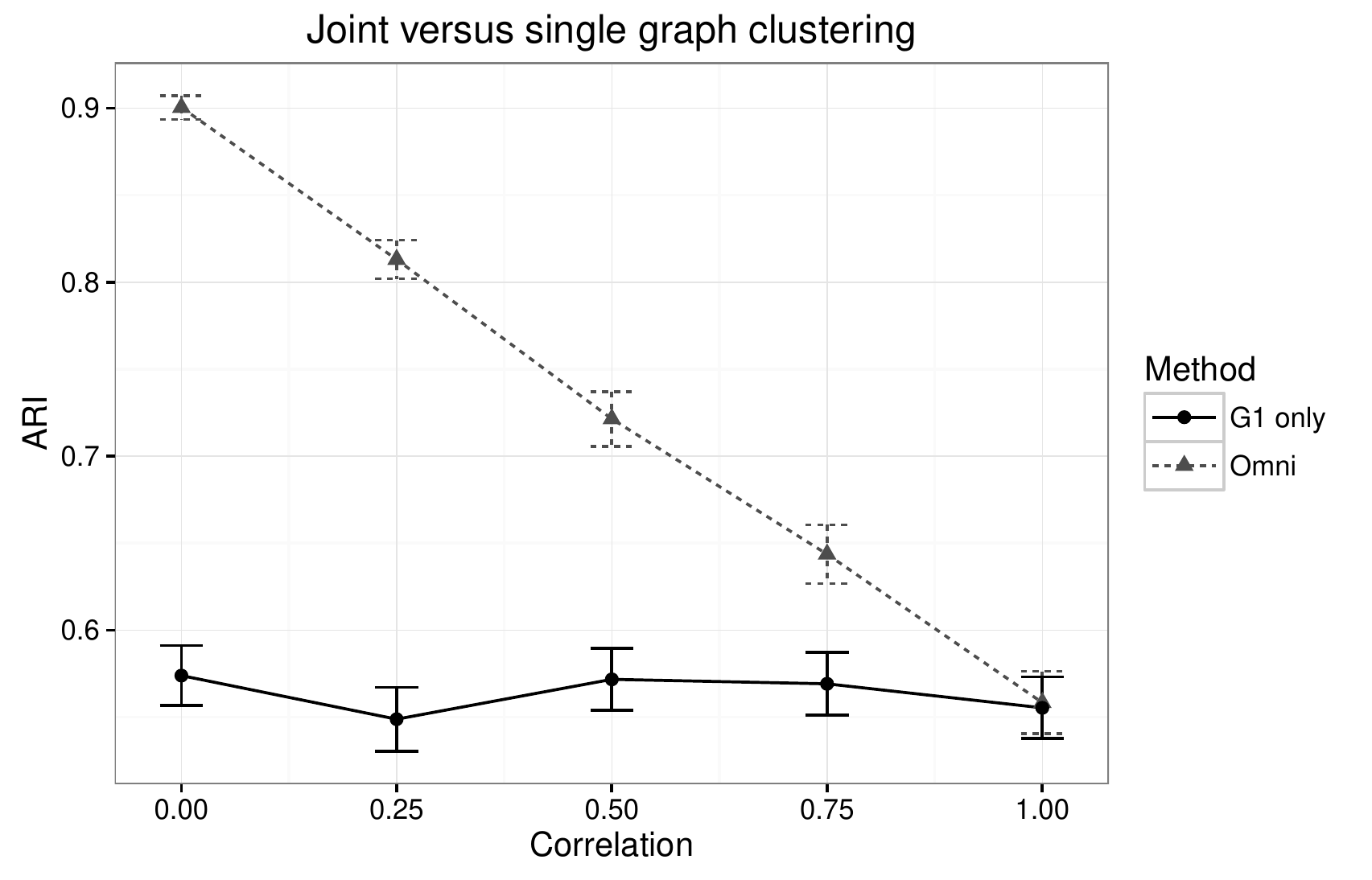}
  \caption{Joint versus single graph clustering of two $\rho$-correlated SBM.
Over a range of $\rho$, we embed $\mathcal{O}$ and cluster the jointly embedded vertices.
The dashed line plots the mean Adjusted Rand Index (ARI) $\pm$ 2 s.e. for the clustering of $G_1$ against its true block assignments when embedding $\mathcal{O}$ and jointly clustering the vertices.
The solid line plots the ARI $\pm$ 2 s.e. of clustering $ASE(G_1)$ against the true block assignments for single graph clustering.
In each case the number of Monte Carlo trials was 500.  }
  \label{fig:jvs}
\end{figure}
When we have multiple graph valued observations of the same data, can we efficiently utilize the information between the graphs to increase clustering performance?
In the manifold matching literature, there are numerous examples of this heuristic:
leveraging the signal across multiple data sets can increases inference performance within each of the data sets (see, for example, \cite{JOFC,fjofc,sun2013generalized,shen2014manifold}).
Inspired by this, given vertex-aligned $G_1$ and $G_2$, we use ASE to embed the Omnibus adjacency matrix $\mathcal{O}$
and use \texttt{Mclust} to cluster the embedded vertices.

\begin{figure*}[t!]
  \centering
  \subfloat[][$\rho=0.5$]{\includegraphics[width=0.45\textwidth]{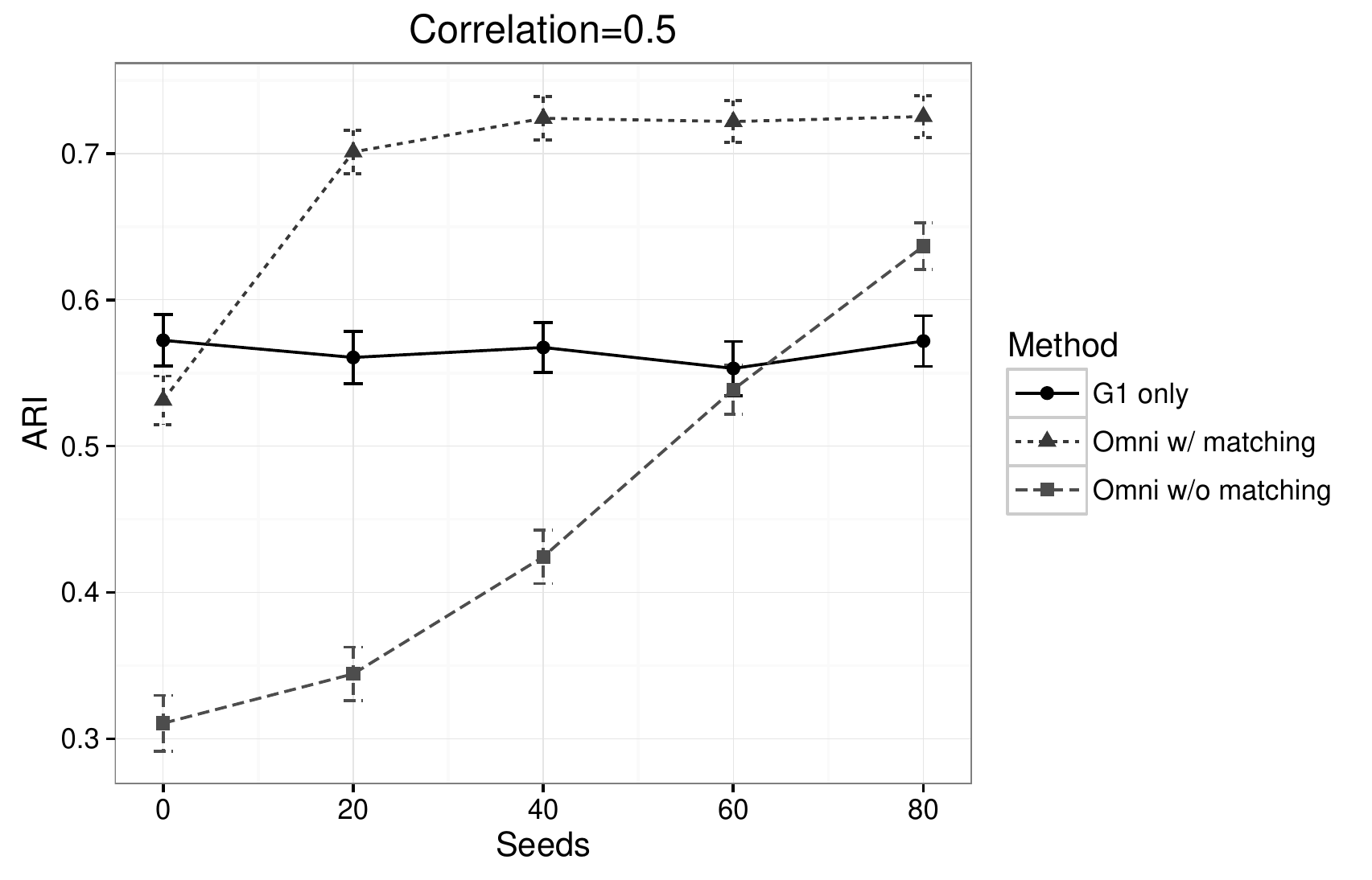}}
  \subfloat[][$\rho=0.7$]{\includegraphics[width=0.45\textwidth]{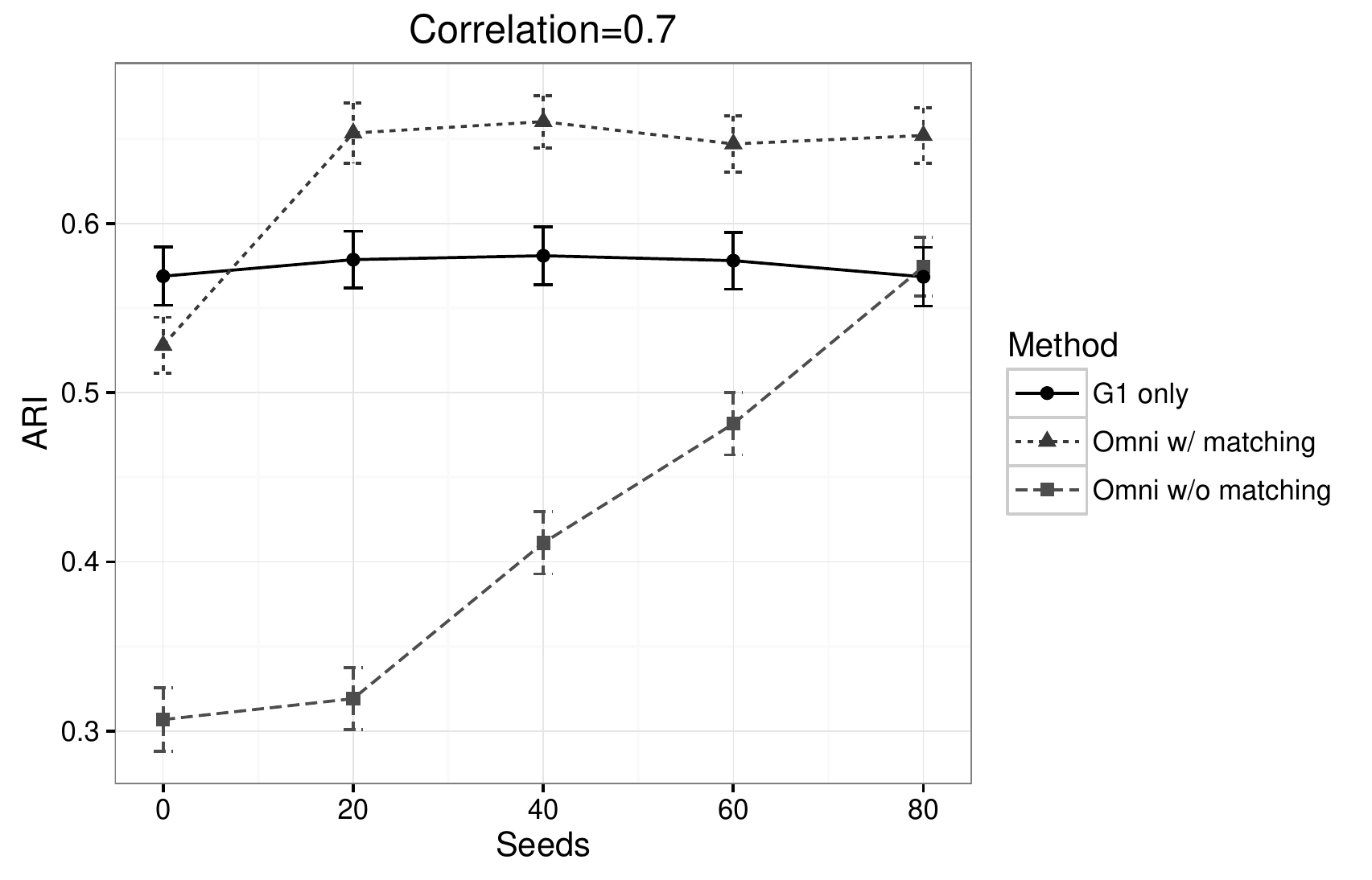}}
\caption{
We plot the mean ARI $\pm 2$ s.e.\@ of: i) the clustering of $G_1$ obtained via jointly embedding/clustering when $100-s$ of the vertices in $G_2$ have their labels randomly permuted (labeled ``Omni w/o matching'' in the legend); ii) embedding and clustering applied to $G_1$ only (labeled ``G1 only'' in the legend); iii) the clustering of $G_1$ obtained via jointly embedding/clustering after matching the shuffled $B$ back to $A$ (labeled ``Omni w/ matching'' in the legend).  In the left plot, the across graph correlation is $0.5$, and is $0.7$ in the rightmost plot.  }
\label{fig:gmvs}
\end{figure*} 


To demonstrate the potential performance increase achievable via jointly embedding $G_1$ and $G_2$ versus a separate embedding, 
we consider 
$(G_1,G_2)\sim\rho-\text{SBM}\left(2, \vec{n}=(50,50), b, \binom{0.1\,\,\,\,0.05}{0.05\,\,\,\,0.2}\right)$.
      Results are displayed in Figure \ref{fig:jvs}.
Over a range of $\rho$, we jointly embed and cluster $G_1$ and $G_2$ via $ASE(\mathcal{O})$ and \texttt{Mclust} with the dashed line plotting the mean Adjusted Rand Index \cite{rand1971objective} (ARI) $\pm$ 2 s.e. for the clustering obtained for $G_1$ against its true block assignments.
We also embed and cluster $G_1$ alone, and the solid line plots the mean ARI $\pm$ 2 s.e. of the obtained clustering of $G_1$ against its true block assignments.
In each case the number of Monte Carlo trials was 500.  
Across these synthetic experiments, we used the true $d=k=2$. 
In Figure \ref{fig:jvs}, we see 
significantly improved clustering accuracy achieved by joint inference
for modest to lowly correlated $(G_1,G_2)$.
Note that as the correlation increases, the increased performance due to the borrowed strength of joint inference diminishes.  
This is unsurprising as the amount of additional information added by $G_2$ is less for larger $\rho$ (indeed if $\rho=0$ then $H(G_2|G_1)=H(G_2)=O(n^2)$ while if $\rho=1$ then $H(G_2|G_1)=0$). 

Does this increased performance due to joint inference degrade in the presence of an errorfully observed vertex correspondence?
To explore this further, we randomly permute the labels of $100-s$ vertices in $B$, so that there are $s\in\{0,20,40,60,80\}$ seeded vertices whose labels are kept true (note that 
these seeded vertices are randomly chosen from the 100 total vertices). 
We plot the performance of joint clustering pre- and post-graph matching for $\rho=0.5$ and $\rho=0.7$ in Figure \ref{fig:gmvs}.
In light of Theorem \ref{thm:infoloss}, we see that the modest information lost due to the shuffling dramatically decreases the performance of our paired graph inference.
This can be readily explained: the information lost due to shuffling is precisely the information leveraged by the joint embedding, namely the vertex correspondences across graphs.

\begin{figure*}[t!]
  \centering
  \subfloat[][Chemical connectome]{\includegraphics[width=0.5\textwidth]{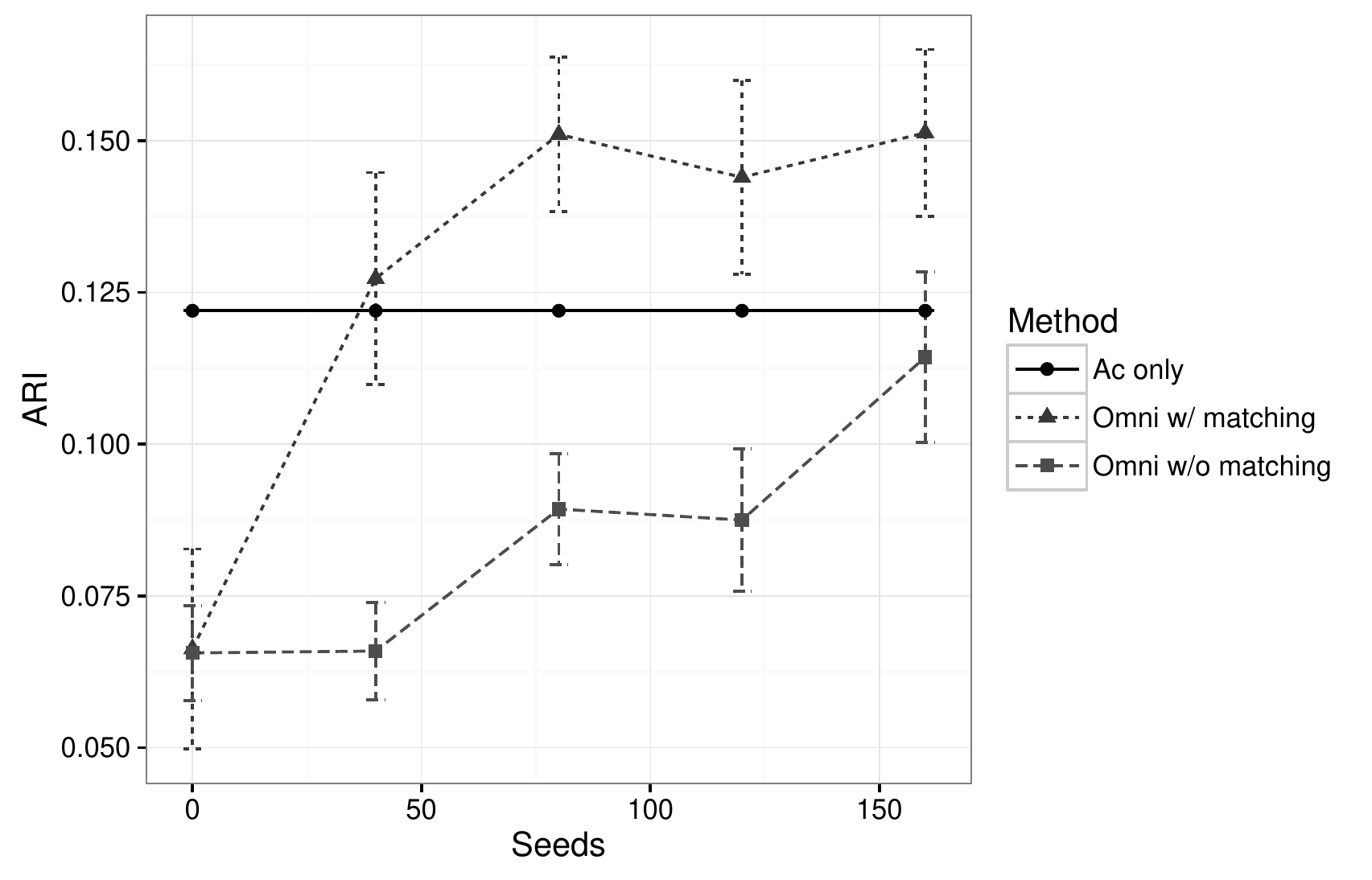}}
  \subfloat[][Electrical connectome]{\includegraphics[width=0.5\textwidth]{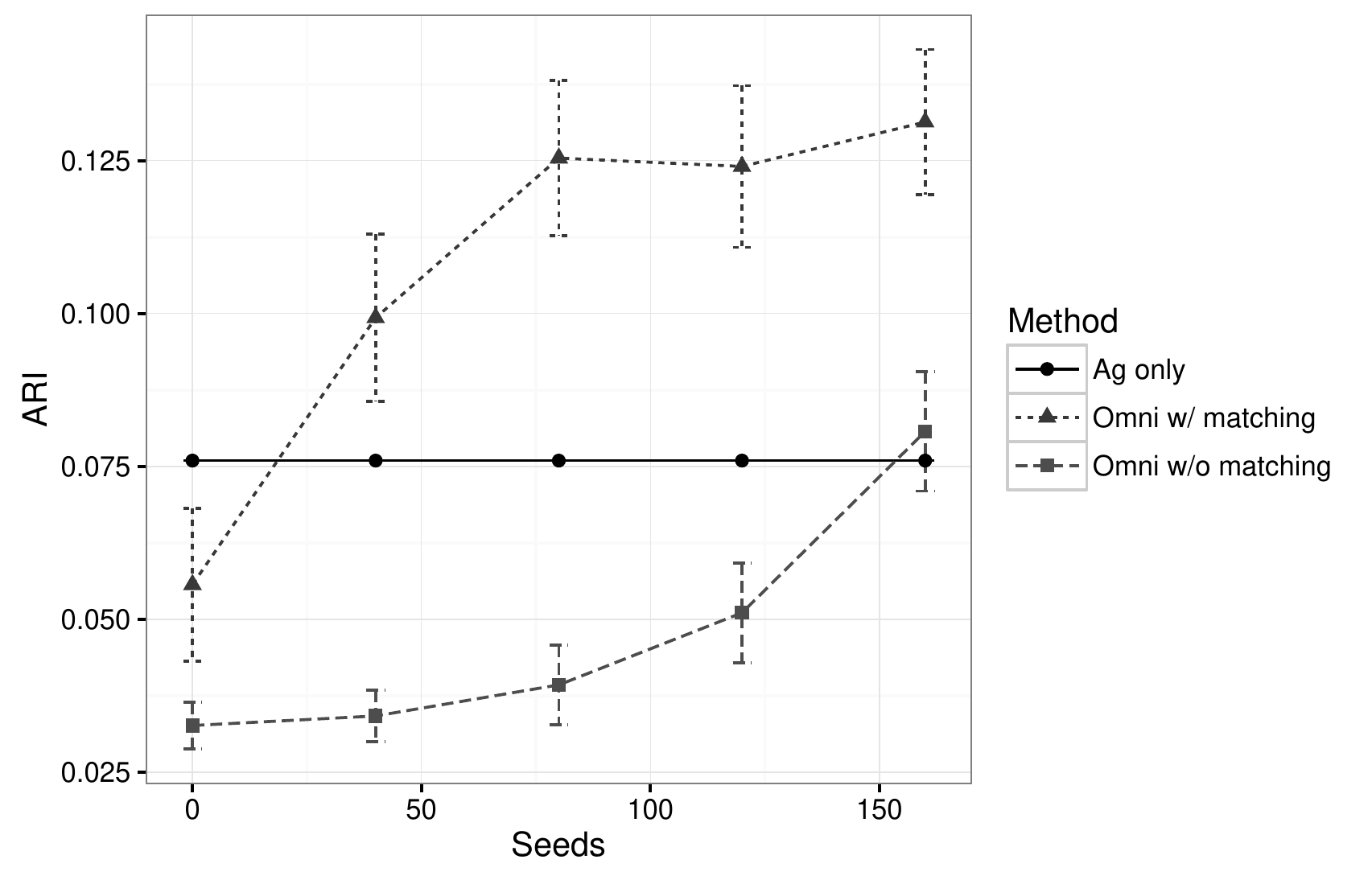}}
\caption{We plot the mean ARI $\pm 2$ s.e.\@ (against the true clustering of the connectome into inter, motor, and sensory neurons) of: i) jointly embedding/clustering when $247-s$ of the vertices in $G_2$ (chosen uniformly at random) have their labels randomly permuted (labeled ``Omni w/o matching'' in the legend); ii) embedding and clustering only a the single graph (labeled ``Ac only'' and ``Ag only'' in the legend); iii) jointly embedding/clustering after matching the shuffled $B$ back to $A$ (labeled ``Omni w/ matching'' in the legend).  In the left plot, we plot the performance on the chemical connectome, and on the right the electrical connectome.
Each plot is computed with 50 Monte Carlo trials.}
\label{fig:worms}
\end{figure*}
In Figure \ref{fig:gmvs}, we also explore the inferential impact of graph matching in recovering this lost performance.  
In light of Theorem \ref{thm3}, we see that graph matching recovers the information lost in shuffling, and therefore recovers much of lost performance.
It is notable that when $s=0$, GM using FAQ performs little better than chance in recovering the true across graph labeling (recall, exact GM is NP-hard and note that seeding was shown in \cite{FAP} to dramatically increase GM performance).  
We note that FAQ does (errorfully) recover the across-graph block assignments, and so aligns the graphs in a way that preserves some of the necessary structure leveraged by the joint embedding/clustering procedure. 
This suggests an extension of Theorems \ref{thm3}, in which perfect matching is not needed for a significant portion of the information lost by shuffling to be fully recovered. 

Note also that in the lower correlation setting, the performance after matching is significantly better than the corresponding performance in the higher correlation setting (relative to the base single graph inference level).  
This confirms the intuition proposed by Figure \ref{fig:jvs}:  With higher correlation, there is less additional information to be recovered by matching (although fewer seeds are needed to recover this lost information), and therefore, there is a comparatively smaller increase in performance achieved by the joint inference even post matching.

As a second example, we consider jointly clustering the {\it C. elegans} connectomes \cite{varshney2011structural}.
The connectome of the {\it C. elegans} roundworm has been completely mapped and neurons interact with each other in two distinct modes: via electrical gap junctions and via chemical synapses.
In \cite{lyzinski2014seeded,jointLi}, the authors showed that the electrical and chemical connectomes contain complimentary signal, and both papers suggest that inference should proceed in the joint graph space.  
To explore this further, as in \cite{jointLi}, we pre-process the data by removing the isolated neurons (under either modality) and symmetrizing each connectome.
The resulting connectomes each have 253 vertices which are classified into 3 neural types:  motor neurons, sensory neurons, and inter neurons.
In the left (resp., right) panel of Figure \ref{fig:worms}, we plot the performance of jointly embedding/clustering the vertices in the chemical (resp., electrical) connectome versus embedding and clustering the single graphs.
Throughout, we estimate $d$ via computing an elbow in the SCREE plot and use \texttt{Mclust} to cluster the data into $k=3$ clusters.

In both modalities, significantly better clustering performance is achieved by working in the joint graph space; indeed, in the case where the correspondence is perfectly observed across graphs, the ARI for the joint embedding/clustering of the chemical (resp., electrical) connectome is $0.15$ (resp., $0.18$) compared to $0.12$ (resp., $0.08$) ARI via single graph regime.  
The larger increase in the performance of jointly clustering the electrical connectome versus the chemical connectome can be attributed to the stronger cluster structure present in the chemical connectome.
Indeed, the electrical connectome can borrow the strength of that signal when the graphs are jointly embedded, whereas there is not much auxiliary signal present in the electrical connectome for the chemical connectome to leverage.

We lastly note that these graphs are particularly challenging to match, with only $\approx 10\%$ of the vertices correctly matched by the state-of-the-art SGM algorithm even with 150 seeds (see \cite[Figure 4]{FAP}).
Nonetheless, the structure that is uncovered by graph matching (namely the recovery of vertex classes across graphs) is enough to recover a large portion of the performance increase seen in the perfectly matched omnibus embedding setting.
Again, this suggests an extension of Theorems \ref{thm3}, in which perfect matching is not needed for a significant portion of the information lost by shuffling to be fully recovered.


\section{Discussion and future work}
Many joint graph inference procedures assume that the vertex correspondence between graphs is known a priori.  
However, in practice the correspondence may only be partially known or errorfully known, and we seek to understand the effect that errors in the labeling have on subsequent inference. 
To this end, we provide an information theoretical foundation for answering the following questions: What is the increase in uncertainty (i.e., loss of the mutual information) between two graphs when the labeling across graphs is errorfully observed, and can this lost information be recovered via graph matching?
Working in the correlated stochastic blockmodel setting, we prove that when graph matching can perfectly recover an errorfully observed correspondence (Theorem \ref{them:GMSBM}), relatively little information is lost due to shuffling (Theorem \ref{thm:infoloss}).
However, we demonstrate that this lost information can have a dramatic effect on the performance of subsequent inference.  
We also show that asymptotically almost all of the lost information can be recovered via graph matching (Theorem \ref{thm3}), which has the effect of recovering much of the lost inferential performance.

In the process, we are able to establish a phase transition for graph matchability at $\rho=\Theta(\sqrt{\log n/n})$ for $(G_1,G_2)\sim\rho$-SBM.  
We prove in Theorems \ref{them:GMSBM} and \ref{thm:no_match} that under mild assumptions there exists constants $0<\beta<\alpha$ such that $\rho\geq\sqrt{\alpha\log n/n}$ implies $G_1$ and $G_2$ are matchable and $\rho\leq\sqrt{\beta\log n/n}$ implies $G_1$ and $G_2$ are unmatchable.
We conjecture the analogous phase transition at $\rho=\Theta(\sqrt{\log n/n})$ for the relative information loss due to shuffling in Conjecture \ref{conj:lostinfo}.
Establishing Conjecture \ref{conj:lostinfo} would cement a 
duality between the information lost due to shuffling and matchability (i.e., the ability to undo the shuffling via graph matching): if $\rho=\omega(\sqrt{\log n/n})$ then $\frac{I(G_1;G_2)-I(G_1;\boldsymbol{\sigma}(G_2))}{I(G_1;G_2)}=o(1)$, and $G_1$ and $G_2$ are matchable; 
if $\rho=o(\sqrt{\log n/n})$ then $\frac{I(G_1;G_2)-I(G_1;\boldsymbol{\sigma}(G_2))}{I(G_1;G_2)}=\Theta(1)$, and $G_1$ and $G_2$ are not matchable.
The difficulty in proving the conjecture lies in lower bounding the mutual information in the mixture model $(G_1,\boldsymbol{\sigma}(G_2))$ with low correlation.

While graph matching cannot correctly recover the lost correspondence in the low correlation setting, $\rho=o(\sqrt{\log n/n})$, we suspect---and the experiments of Section \ref{S:infoloss}, demonstrate---that a significant portion of the lost information is recovered even by an imperfect matching.
This would have at least two immediate consequences. 
First, the estimated correlation across different real data networks---even on the same vertex sets---is often very small.
A theorem proving that GM can recover much of the lost information with an imperfect matching in these low correlation regimes would further highlight the applicability of GM across a broad class of data sets.
Second, as graph matching is NP-hard in general and no algorithm exists that can perfectly match even modestly sized graphs, extending Theorem \ref{thm3} to the case of an imperfect matching would serve to further highlight the practical utility of graph matching {\it algorithms}.

Moreover, across correlation levels we see the dramatic algorithmic impact of seeds on our ability to successfully unshuffle networks.
It is natural to ask what the theoretical impact of seeding is in terms of lost information.
For seeds chosen uniformly at random, the impact on information loss is marginal:  If $cn$ of the vertices are seeded for constant $c\in(0,1)$, then the bounds of Theorem \ref{thm:infoloss} are affected only by a constant multiplicative factor and not in their order of magnitude.
This is a stark contrast to the algorithmic impact obtained by seeds, as in \cite{JMLR:v15:lyzinski14a} it is shown that only logarithmically many random seeds are needed to asymptotically perfectly match graphs.
While this suggests that randomly chosen seeds have a more practical impact on algorithmic performance than in mitigating information loss, we suspect that well-chosen seeds will have a measurable impact on the bounds in Theorem \ref{thm:infoloss}.
While this is outside the scope of the present manuscript, we are actively pursuing seed selection strategies both from the theoretical and practical perspectives.

Lastly, can we extend this theory to a broader class of random graph models? 
Note that while the results of Proposition \ref{prop:I12} translates \emph{mutatis mutandis} to the more general $\rho$-correlated heterogeneous Erd\H os-R\'enyi model considered in \cite{rel}, the methods for proving Theorems \ref{them:GMSBM} and \ref{thm:no_match} do not, hence the present focus on the $\rho$-SBM model.
Also note that deriving the analogous results to Theorems \ref{them:GMSBM}--\ref{thm3} for directed, weighted graphs with a block weighting structure (i.e., the edge-weights depend only on the block membership function $b$ of the in and out vertices) is immediate given almost surely nonnegative and bounded weight distributions.  
While stochastic blockmodels (weighted and unweighted) are widely used to model data with latent community structure, they are an overly simplistic model for many real data applications.
We are working to extend the theory and results to non-edge-independent random graph models (for example, power law graphs, bounded degree graphs, etc.).
However, these non-edge-independent graphs require 
a novel correlation structure and new graph matching theory to be developed.

\section{Acknowledgments}
We wish to thank Donniell Fishkind, Carey Priebe, Daniel Sussman, Minh Tang, Avanti Athreya and Joshua T. Vogelstein for their comments and suggestions which greatly helped the exposition and ideas in this paper.

\appendix
\section{Proofs and supporting results}
\label{sec:proofs}

Herein, we collect the proofs of the main theorems and supporting results. 
Note that for $\phi\in \Pi(n)$ we define 
$s(\phi):=\{i\in[n]\text{ s.t. }\phi(i)\neq i\}$
to be the number of vertices shuffled by $\phi$.
To ease notation, for $g_1,g_2\in\mathcal{G}_n$ we will adopt the following shorthand: 
\begin{align*}
\p(g_1,g_2):=\p(G_1=g_1,G_2=g_2),&\hspace{5mm}\p(g_1):=\p(G_1=g_1)=\p(G_2=g1)\\
\ps(g_1,g_2):=\p(G_1=g_1,\boldsymbol{\sigma}(G_2)=g_2),&\hspace{5mm}\ps(g_2):=\p(\boldsymbol{\sigma}(G_2)=g_2).
\end{align*}


\subsection{Proof of Proposition \ref{prop:I12}}
\label{sec:propI12}

In this section, we will provide a proof of Proposition \ref{prop:I12}.  
Recall that the random variables
$\left\{\mathds{1}[\{j,k\}\in E(G_i)]\right\}_{i=1,2; \{j,k\}\in\binom{V}{2}}$
are collectively independent except that for each $\{j,k\}\in\binom{V}{2},$ the correlation between $\mathds{1}[\{j,k\}\in E(G_1)]$ and $\mathds{1}[\{j,k\}\in E(G_2)]$ is $\rho\geq0$.  Next note that if $(X,Y)\sim\rho$-correlated Bern$(p)$ then
\begin{align}
\label{eq:expandthis}
I(X;Y)&=p(p+\rho(1-p))\log\left(1+\rho\frac{(1-p)}{p} \right)+2p(1-p)(1-\rho)\log(1-\rho)\notag\\
&\hspace{25mm}+(1-p)(1-p+p\rho)\log\left(1+\rho\frac{p}{1-p}\right).
\end{align}
Together this yields
\begin{align}
\label{eq:infoexpand}
I(G_1;G_2)&=\sum_{g_1,g_2\in\mathcal{G}_n}\p(g_1,g_2)\log\left(\frac{\p(g_1,g_2)}{\p(g_1)\p(g_2)} \right)\notag\\
&=\sum_{\substack{i,j\in[K]\\i< j}}^Kn_{i,j}\bigg[\Lambda_{i,j}(\Lambda_{i,j}\!+\!\rho(1-\Lambda_{i,j}))\log\!\left(\!1\!+\rho\frac{1-\Lambda_{i,j}}{\Lambda_{i,j}} \right)\!+\!2\Lambda_{i,j}(1-\Lambda_{i,j})(1-\rho)\log(1-\rho)\notag\\
&\hspace{15mm}+(1-\Lambda_{i,j})(1-\Lambda_{i,j}+\Lambda_{i,j}\rho)\log\left(1+\rho\frac{\Lambda_{i,j}}{1-\Lambda_{i,j}}\right)\bigg],
\end{align}
where 
$n_{i,j}=n_in_j$ if $i\neq j$  and $n_{i,j}=\binom{n_i}{2}\text{ if }i=j$.
Expanding Eq. (\ref{eq:expandthis}) for $p$ fixed and $\rho\rightarrow0$ and applying this to Eq. (\ref{eq:infoexpand}) yields part ii. of the proposition, and expanding Eq. (\ref{eq:expandthis}) for $\rho$ fixed and $p\rightarrow0$ and applying this to Eq. (\ref{eq:infoexpand}) yields part iii. of the proposition.



\subsection{Proof of Theorem \ref{thm:infoloss}}
\label{sec:pfinfoloss}

If $\boldsymbol{\sigma}$ is uniformly distributed on $\Pi(n)$, then for any fixed $\tau\in \Pi(n)$, it is immediate that $
\ps(\tau(g_2))
=\ps(g_2).
$
We also have that
\begin{align*}
I(G_1;\boldsymbol{\sigma}(G_2))&=\sum_{g_1,g_2\in\mathcal{G}_n}\ps(g_1,g_2)\log\left(\frac{\ps(g_1,g_2)}{\p(g_1)\ps(g_2)} \right)\\
&=\sum_{g_1,g_2\in\mathcal{G}_n}\left(\sum_{\phi\in \Pi(n)}\frac{1}{| \Pi(n)|}\p(g_1,\phi(g_2))\right)\log\left(\sum_{\tau\in \Pi(n)}\frac{1}{| \Pi(n)|}\frac{\p(g_1,\tau(g_2))}{\p(g_1)\ps(g_2)}\right)\\
&=\sum_{\phi\in \Pi(n)}\frac{1}{| \Pi(n)|}\sum_{g_1,g_2\in\mathcal{G}_n}\p(g_1,g_2)\log\left(\sum_{\tau\in \Pi(n)}\frac{1}{| \Pi(n)|}\frac{\p(g_1,\tau\circ\phi^{-1}(g_2))}{\p(g_1)\ps(\phi^{-1}(g_2))}\right).
\end{align*}
We then have that $I(G_1;G_2)-I(G_1;\boldsymbol{\sigma}(G_2))$ is equal to
\begin{align}
\label{eq:infodiff}
&-\sum_{\phi_\in{\Pi(n)}}\frac{1}{|\Pi(n)|}\sum_{g_1, g_2\in \mathcal{G}_n}\p(g_1,g_2)\log\left(\sum_{\tau\in \Pi(n)}\frac{\p(g_1,\tau\circ\phi^{-1}(g_2))}{|\Pi(n)|\cdot\p(g_1,g_2)} \right)+\sum_{g_2\in \mathcal{G}_n}\p(g_2)
\log \left(
\frac{\ps(g_2)}
{\p(g_2)}
\right).
\end{align}
\begin{proof}[Proof of Theorem \ref{thm:infoloss} part i.]
We begin by proving part $i.$ of the theorem.  Note that 
\begin{align}
\label{eq:relent}
\sum_{g_2\in \mathcal{G}_n}\p(g_2)
\log \left(
\frac{\ps(g_2)}
{\p(g_2)}
\right)
&=
-H(\boldsymbol{\sigma}(G_2))+H(G_2)\leq0.
\end{align}
where the final inequality follows from the concavity of the entropy function $H(\cdot)$.
Applying this to Eq. (\ref{eq:infodiff}) then yields
\begin{align*}
I(G_1;G_2)-I(G_1;\boldsymbol{\sigma}(G_2))\leq&-\sum_{\phi_\in{\Pi(n)}}\frac{1}{|\Pi(n)|}\sum_{g_1, g_2\in \mathcal{G}_n}\p(g_1,g_2)\log\left(\sum_{\tau\in \Pi(n)}\frac{\p(g_1,\tau\circ\phi^{-1}(g_2))}{|\Pi(n)|\p(g_1,g_2)} \right)\notag \\
=&-\sum_{\phi_\in{\Pi(n)}}\frac{1}{|\Pi(n)|}\sum_{\substack{g_1\in \mathcal{G}_n,\\ g_2\in \mathcal{G}_n}}\p(g_1,g_2)\log\left(\frac{1}{|\Pi(n)|}+\sum_{\substack{\tau\in \Pi(n)\\  \tau\neq \phi}}\frac{\p(g_1,\tau\circ\phi^{-1}(g_2))}{|\Pi(n)|\p(g_1,g_2)}\right)\\
\leq&-\sum_{\phi_\in{\Pi(n)}}\frac{1}{|\Pi(n)|}\sum_{\substack{g_1\in \mathcal{G}_n,\\ g_2\in \mathcal{G}_n}}\p(g_1,g_2)\log\left(\frac{1}{|\Pi(n)|}\right)=\log(|\Pi(n)|)\sim(n\log n),
\end{align*}
as desired.
\end{proof}
\noindent\emph{Proof of Theorem \ref{thm:infoloss} part ii.}
Let $(G_1,G_2)\sim\rho$-SBM($K,\vec n,b,\Lambda$).
To prove part ii. of Theorem \ref{thm:infoloss}, we will consider permutations in 
$$\Pi(n)^*:=\{\phi\in \Pi(n)\,|\,b(i)=\phi(b(i))\text{ for all }i\in[n]\},$$
i.e., permutations of the vertex sets of $G_1$ and $G_2$ that fix block assignments.
For $\boldsymbol{\sigma}^*$ uniformly distributed in $\Pi(n)^*$, we will show that 
$$I(G_1;G_2)-I(G_1;\boldsymbol{\sigma}^*(G_2))=\Omega(n\rho^2).$$
To complete the proof, we use the information processing inequality to show that $I(G_1;\boldsymbol{\sigma}(G_2))\leq I(G_1;\boldsymbol{\sigma}^*(G_2))$ (see Proposition \ref{prop:ipi2} for detail).

We now establish some notation and preliminary results.  
For $\phi\in \Pi(n)^*$, write
$\phi=(\phi_1,\phi_2,\ldots,\phi_K),$ where $\phi_i:V_i\mapsto V_i$ is the restriction of $\phi$ to $V_i$.
\begin{defn}
Let $(G_1,G_2)$ be $\rho$-correlated SBM($K,\vec{n},b,\Lambda$) random graphs, and let $x,y\in \mathcal{G}_n$. 
\begin{itemize}
\item[1.]For each of $i=1,2$, and $j=1,2,\ldots,K$, let $G_i^j=G_i\!\!\restriction_{V_j}$ (resp., $x^j=x\!\!\restriction_{V_j}$, $y^j=y\!\!\restriction_{V_j}$) be the induced subgraph of $G_i$ (resp., fixed $x,y\in\mathcal{G}_n$) restricted to $V_j$. 
\item[2.]
For $j,\ell\in[K]$ with $j<
\ell$, let $G_i^{j,\ell}$ (resp., $x^{j,\ell}$, $y^{j,\ell}$) be the induced bipartite subgraph of $G_i$ (resp., $x$, $y$) composed of the edges in $G_i$ (resp., $x$, $y$) between vertex sets $V_j$ and $V_\ell$.  
\end{itemize}
\end{defn}
\noindent The key to restricting our attention to permutations in $\Pi(n)^*$ is the following.  Working with permutations in $\Pi(n)^*$ allows us to split the SBM random graphs along block assignments, and then tackle each block (and each bipartite between-block) subgraph separately.  
Before formalizing this in Claim \ref{claim:key}, we will need to define the analogues of correlated bipartite graphs.  
To this end, if $\mathcal{B}_{m_1,m_2}$ is the set of all labeled bipartite graphs $G=(U,V,E)$ with $|U|=m_1,$ $|V|=m_2,$ and $E\subset U\times V,$ then we define:
\begin{defn}
Two $m_1m_2$-vertex bipartite random graphs $(G_1,G_2)\in\mathcal{B}_{m_1,m_2}\times \mathcal{B}_{m_1,m_2}$ are $\rho$-correlated Bipartite($m_1,m_2,p$) random graphs (abbreviated $\rho$-Bipartite) if 
\begin{itemize}
\item[i.] For each $i=1,2,$ $G_i\in\mathcal{B}_{m_1,m_2}$, and edges between the bipartite sets $U$ and $V$ are independently present with common probability $p$;  
\item[ii.] The random variables
$\left\{\mathds{1}[(j,k)\in E(G_i)]\right\}_{i=1,2; j\in U,k\in V}$
are collectively independent except that for each $(j,k)\in U\times V,$ the correlation between $\mathds{1}[(j,k)\in E(G_1)]$ and $\mathds{1}[(j,k)\in E(G_2)]$ is $\rho\geq 0$.
\end{itemize}
\end{defn}
\noindent The deterministic shuffling of $\rho$-Bipartite graphs can be defined completely analogously to the $\phi$-shuffled graph of Section \ref{S:shuffle}.  To wit, if $x\in\mathcal{B}_{m_1,m_2},$ and $\tau\in S_{m_1}$, $\phi\in S_{m_2},$ we define the {\it $[\tau,\phi]$-shuffled graph}, denoted by $[\tau,\phi](x)=(V,E_{[\tau,\phi](x)})\in \mathcal{B}_{m_1,m_2}$, via
$(i,j)\in E_x\text{ iff }(\tau(i),\phi(j))\in E_{[\tau,\phi](x)};$
i.e., $U$ is shuffled according to $\tau$ and $V$ is shuffled according to $\phi.$
The following claim is immediate.
\begin{claim}
\label{claim:key}
Let $(G_1,G_2)\sim\rho$-SBM($K,\vec n,b,\Lambda$), and let $\phi\in \Pi(n)^*$.
\begin{itemize} 
\item[1.] $\p(G_2=y)=\p(G_2=\phi(y))$ for all $y\in\sG_n$;
\item[2.] For each $j=1,2,\ldots,K$, $(G_1^j,G_2^j)$ are distributed as $\rho$-ER($n_j,B_{j,j}$) random graphs;
\item[3.] For $j,\ell\in[K]$ with $j<\ell$, $(G_1^{j,\ell},G_2^{j,\ell})$ are distributed as $\rho$-Bipartite$(n_j,n_{\ell},B_{j,\ell})$;  
\item[4.] The collection of graph pairs
$$\big\{(G_1^i,G_2^i)\big\}_{i\in[K]}\bigcup\big\{(G_1^{j,\ell},G_2^{j,\ell})\big\}_{j,\ell\in[K],\,j< \ell}$$ is mutually independent.
\end{itemize}
\end{claim}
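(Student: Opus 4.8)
The plan is to exploit the single structural fact that drives everything here: in a $\rho$-SBM the marginal probability of an edge slot depends only on the block memberships of its two endpoints, while Definition \ref{def:rcsbm} permits no dependence among distinct slots beyond the within-slot, cross-graph correlation $\rho$. The organizing device is the partition of the vertex pairs $\binom{V}{2}$ into the $K$ within-block families $\binom{V_j}{2}$ and the $\binom{K}{2}$ between-block families $V_j\times V_\ell$ (for $j<\ell$): every pair $\{u,v\}$ lies in exactly one family, a pair with both endpoints in $V_j$ is an edge slot of the induced graph $G_i^j$, and a pair with endpoints in distinct blocks $V_j,V_\ell$ is an edge slot of the induced bipartite graph $G_i^{j,\ell}$. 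All four assertions then amount to reading off, family by family, the joint law prescribed by Definition \ref{def:rcsbm}.

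For part 2 I would fix a block $j$ and observe that each slot $\{u,v\}\in\binom{V_j}{2}$ carries a Bernoulli($\Lambda_{j,j}$) marginal in each of $G_1,G_2$, that the pair of indicators $(\mathds{1}[\{u,v\}\in E_1],\mathds{1}[\{u,v\}\in E_2])$ has correlation $\rho$, and that distinct slots are collectively independent. This is verbatim the definition of $\rho$-ER($n_j,\Lambda_{j,j}$) on the $n_j$ vertices of $V_j$, giving $(G_1^j,G_2^j)\sim\rho$-ER($n_j,\Lambda_{j,j}$). Part 3 is identical after replacing $\binom{V_j}{2}$ by the bipartite slot set $V_j\times V_\ell$: every such slot carries a Bernoulli($\Lambda_{j,\ell}$) marginal with cross-graph correlation $\rho$ and distinct slots are independent, which matches the $\rho$-Bipartite($n_j,n_\ell,\Lambda_{j,\ell}$) definition.

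Part 4 then follows because the families partition $\binom{V}{2}$ into disjoint slot sets and the only dependence allowed by Definition \ref{def:rcsbm} is the within-slot, cross-graph correlation; since each slot lies in a single family, the object $(G_1^i,G_2^i)$ (resp.\ $(G_1^{j,\ell},G_2^{j,\ell})$) is a measurable function of the indicators of a single family, and distinct families comprise disjoint, hence collectively independent, collections of indicators, so the whole collection of pairs is mutually independent. Finally, for part 1 I would write the SBM likelihood $\p(G_2=y)=\prod_{\{i,j\}}\Lambda_{b(i),b(j)}^{y_{ij}}(1-\Lambda_{b(i),b(j)})^{1-y_{ij}}$ and reindex the analogous product for $\p(G_2=\phi(y))$ by the substitution $(i,j)\mapsto(\phi(i),\phi(j))$; because $\phi\in\Pi(n)^*$ satisfies $b(\phi(i))=b(i)$, every factor is left unchanged and the two likelihoods coincide.

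Since each step is a direct transcription of Definition \ref{def:rcsbm}, there is no substantive obstacle; the only point requiring care, and the reason the block-preserving restriction to $\Pi(n)^*$ is essential, is that the edge-slot partition must be respected by $\phi$. A general $\phi\in\Pi(n)$ would send a within-block slot to a between-block slot, altering its marginal probability and breaking both the clean product decomposition of part 4 and the invariance of part 1; confining attention to $\Pi(n)^*$ is exactly what keeps the families, and their parameters, fixed.
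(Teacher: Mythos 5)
Your proof is correct and matches the paper's intent exactly: the paper states this claim without proof (``The following claim is immediate''), and your argument is precisely the direct definitional verification---partitioning $\binom{V}{2}$ into within-block and between-block slot families, reading off the marginals and correlation from Definition \ref{def:rcsbm}, and using $b(\phi(i))=b(i)$ to reindex the likelihood for part 1---that the paper implicitly relies on.
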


Analogues of Theorem \ref{them:GMSBM} hold in the $\rho$-ER and $\rho$-Bipartite settings as well. 
The following Lemma is proved similarly to Theorem \ref{them:GMSBM}, and so the proof is only briefly sketched.
\begin{lemma} \label{lem:preludeto10}
With notation as above,\\
\noindent i) If $(G_1,G_2)\sim\rho$-ER($m,p$) with respective adjacency matrices $A$ and $B$. If $\rho=\omega\left(\sqrt{\frac{\log m}{m}}\right)$ then there exists a $C>0$ such that 
\begin{align}
\label{eq:restrictedER}
\p(\exists\,\phi\in \Pi(m)\setminus\{\text{id}_m\}\text{ s.t. }\|A-P_\phi BP_\phi^T\|_F^2-\|A-B\|_F^2\leq C m\rho)=O(e^{-3\log m}).
\end{align}
\noindent ii)
If
$(G_1,G_2)\sim\rho$-Bipartite($m_1,m_2,p$)
with respective adjacency matrices $A$ and $B$, and let $m=\min(m_1,m_2)$.  
If $\rho=\omega\left(\sqrt{\frac{\log m}{m}}\right)$ then there exists a $C>0$ such that 
\begin{align}
\label{eq:restrictedBip}
\p(\exists\,&(\phi,\tau)\in \Pi(m_1,m_2)\setminus\{\text{id}_{m_1,m_2}\}\text{ s.t. }\|A-P_\phi BP_\tau^T\|_F^2-\|A-B\|_F^2\leq C m\rho)=O(e^{-3\log m}),
\end{align}
where $\Pi(m_1,m_2)=\Pi(m_1)\times \Pi(m_2)$, and $\text{id}_{m_1,m_2}=(\text{id}_{m_1},\text{id}_{m_2}).$
\end{lemma}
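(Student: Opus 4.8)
The plan is to prove both parts of Lemma \ref{lem:preludeto10} by closely mirroring the proof of Theorem \ref{them:GMSBM}, which is itself the $K=1$ block case handled with $\rho$-SBM machinery. The core object to control is, for a permutation $\phi$ shuffling exactly $k=s(\phi)$ vertices, the increment $\Delta_\phi := \|A-P_\phi B P_\phi^T\|_F^2 - \|A-B\|_F^2$. First I would fix such a $\phi$ and expand this difference edge-by-edge. Writing $A,B$ as the adjacency matrices of $\rho$-ER$(m,p)$ graphs, the quantity $\Delta_\phi$ is a sum over pairs $\{i,j\}$ of terms of the form $(A_{ij}-B_{\phi(i)\phi(j)})^2-(A_{ij}-B_{ij})^2$; only pairs touched by $\phi$ contribute, and each contributing summand is a centered $\{-1,0,1\}$-valued random variable. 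The key distributional fact I would establish is that $\e(\Delta_\phi)=\Theta(k\rho)$: the latent alignment preserves the $\rho$-correlation on the $\Theta(k)$ perturbed edges, whereas any nontrivial shuffle destroys it, and the gain in expected squared error is proportional to $\rho$ times the number of disturbed edges. This is exactly the mechanism driving Theorem \ref{them:GMSBM}, so I would reuse that computation verbatim up to the bipartite bookkeeping.

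Next I would obtain concentration of $\Delta_\phi$ around its mean via a bounded-differences / McDiarmid argument over the independent edge indicators, giving a tail bound of the form $\p(\Delta_\phi \le Cm\rho) \le \exp(-c k^2\rho^2/k)=\exp(-c k\rho^2)$ for shuffles with $s(\phi)=k$. The final step is a union bound over all nontrivial $\phi$, stratified by $k=s(\phi)$: the number of permutations moving exactly $k$ points is at most $\binom{m}{k}k! \le m^k$, contributing $\log$-count $k\log m$, which must be dominated by the exponent $c k\rho^2 \cdot (\text{factor depending on } k)$. Under the hypothesis $\rho=\omega(\sqrt{\log m/m})$ one has $m\rho^2=\omega(\log m)$, and I would check that the per-stratum exponent beats $k\log m$ uniformly in $k$, so summing over $k\in\{2,\dots,m\}$ yields the claimed $O(e^{-3\log m})$ bound once the constant $C$ is chosen small enough. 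The threshold event is the failure to have the $Cm\rho$ margin, so the statement follows.

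For part ii), the bipartite case, the only genuine change is that a shuffle is a pair $(\phi,\tau)\in\Pi(m_1)\times\Pi(m_2)$ acting on the two sides of the bipartition, and $\Delta_{(\phi,\tau)}$ sums over pairs $(i,j)\in U\times V$ with $i$ moved by $\phi$ or $j$ moved by $\tau$. I would set $m=\min(m_1,m_2)$ and observe that any nontrivial $(\phi,\tau)$ disturbs $\Omega(\max(s(\phi),s(\tau))\cdot m)$ bipartite edges — even moving a single vertex on the smaller side perturbs an entire row of $\Theta(m)$ edges — so the expected increment is again $\Theta((\text{edges moved})\cdot\rho)$ and concentrates by the same McDiarmid argument. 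The union bound now runs over $|\Pi(m_1)\times\Pi(m_2)|$, but stratifying by the number of moved vertices and using that each moved vertex forces $\Theta(m)$ perturbed edges makes the exponent scale like $m\rho^2$ against a count of $\log m$ per moved vertex, which is the same favorable comparison as before.

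The main obstacle I anticipate is the union-bound bookkeeping in the regime where $k=s(\phi)$ is small — for instance a single transposition with $k=2$. There the exponent $c k\rho^2$ is only $\Theta(\rho^2)=\omega(\log m/m)$, which is too weak to beat the $\Theta(\log m)$ entropy of even a single stratum. The resolution, exactly as in Theorem \ref{them:GMSBM}, is that the relevant exponent is not $k\rho^2$ but (number of perturbed edges)$\cdot\rho^2$, which for a single transposition is $\Theta(m)\cdot\rho^2=\omega(\log m)$; a transposition in the $\rho$-ER graph disturbs a full pair of rows/columns, i.e. $\Theta(m)$ edges, not $\Theta(k)$. Getting this counting exactly right — carefully tracking how many edge-indicators change as a function of the cycle structure of $\phi$, and ensuring the concentration exponent genuinely reflects the edge count rather than the vertex count — is where the real work lies, and it is why the lemma borrows the full strength of the Theorem \ref{them:GMSBM} argument rather than a naive per-vertex bound. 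Because this machinery is already developed for Theorem \ref{them:GMSBM}, I would only sketch these steps and point to that proof for the detailed constants.
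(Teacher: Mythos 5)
Your proposal is, in its final self-corrected form, a valid argument, but it takes a genuinely different concentration route from the paper's. The paper decomposes the increment as $\tfrac12\left(\|A-P_\phi BP_\phi^T\|_F^2-\|A-B\|_F^2\right)=\tfrac12\|A-P_\phi AP_\phi^T\|_F^2-2F_{\mathcal{A}}-2F_{\mathcal{O}}$, where $F_{\mathcal{A}},F_{\mathcal{O}}$ count the addition/occlusion errors induced on $A$ that are coincidentally ``fixed'' in $B$; it concentrates $\tfrac12\|A-P_\phi AP_\phi^T\|_F^2$ around $2\mathfrak{m}_k p(1-p)$, with $\mathfrak{m}_k=\binom{k}{2}+k(m-k)$, via an external inequality (Kim's Proposition 3.2, needed because those summands are dependent through the cycle structure of $\phi$), and then, conditioning on that disagreement count $\Delta$, the fixed-error counts are exactly Binomial with success probabilities carrying a factor $(1-\rho)$, so Hoeffding gives exponent $\Delta\rho^2/9=\Theta(km\rho^2)$. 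You instead expand the increment pair-by-pair and apply a bounded-differences inequality treating the pairs $(A_{ij},B_{ij})$ as the independent inputs; since only the $\mathfrak{m}_k=\Theta(km)$ disturbed pairs matter and each input moves the sum by $O(1)$, McDiarmid yields the same exponent $\Theta\left((km\rho)^2/(km)\right)=\Theta(km\rho^2)$ against stratum entropy $k\log m$, so your union bound closes under $\rho=\omega(\sqrt{\log m/m})$ just as the paper's does. Your route is more self-contained: it needs no external concentration lemma, and the dependence among summands is absorbed automatically by working with independent underlying variables. The paper's route exposes the exact conditional Binomial structure and uses a variance-sensitive bound, which would matter if $p$ were allowed to decay to zero; for $p$ fixed, as here, the two are equivalent in strength.

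One caution: your second paragraph is internally wrong, and only your last paragraph repairs it. There you assert $\e(\Delta_\phi)=\Theta(k\rho)$ (``$\Theta(k)$ perturbed edges'') and a tail exponent $ck\rho^2$; with that exponent the union bound fails outright, since $k\rho^2=o(1)$ cannot beat the stratum entropy $k\log m$. The correct accounting---which you do state at the end---is that each disturbed pair contributes $2p(1-p)\rho$ in expectation and there are $\mathfrak{m}_k=\Theta(km)$ of them, so $\e(\Delta_\phi)=\Theta(km\rho)$ and the exponent is $\Theta(km\rho^2)=\omega(k\log m)$; a final write-up must carry this from the start rather than as a patch. Also note a gap your sketch shares with the paper's ``mutatis mutandis'' treatment of part ii): the union bound there pits an exponent of order $\max(s(\phi),s(\tau))\,m\rho^2=\omega\left(\max(s(\phi),s(\tau))\log m\right)$ against entropy up to $s(\phi)\log m_1+s(\tau)\log m_2$, which closes only when $\log\max(m_1,m_2)=O(\log m)$---true in the paper's application, where $m_1\asymp m_2\asymp n$, but not for arbitrarily unbalanced bipartitions.
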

\begin{proof}
We will sketch the proof of part $i)$ with part $ii)$ following mutadis mutandis.
For the moment, fix $\phi\in \Pi(m)$ with $s(\phi)=k.$
For $x,y\in\mathcal{G}_n$, define 
\begin{align}
F_{\mathcal{A}}(x,y,\phi)&:=\left\{\{u,v\}\in\binom{V}{2}\text{ s.t. }u\nsim_x v, \, u\sim_{\phi(x)} v, \text{ and }u\nsim_{\phi(y)} v\right\};\label{eq:Fa}\\
F_{\mathcal{O}}(x,y,\phi)&:=\left\{\{u,v\}\in\binom{V}{2}\text{ s.t. }u\sim_x v, \, u\nsim_{\phi(x)} v, \text{ and }u\sim_{\phi(y)} v\right\}.\label{eq:Fo}
\end{align}
We then have (if $x$ and $y$ have adjacency matrices $A$ and $B$)
\begin{align*}
\frac{1}{2}(\|A- P_\phi B P_\phi^T\|_F^2-\|A- B\|_F^2)&=\frac{1}{2}\|A-P_\phi AP_\phi^T\|_F^2-2F_{\mathcal{A}}(x,y,\phi)-2F_{\mathcal{O}}(x,y,\phi).
\end{align*}
With $\mathfrak{m}_k$ defined via $\mathfrak{m}_k=\binom{k}{2}+k(m-k)$, applying \cite[Proposition 3.2]{kim} to $(G_1,G_2)\sim\rho$-ER($m,p$) yields that there exists a constant $c$ (which can be taken to be $\sqrt{48p(1-p)}$) such that for $m$ sufficiently large
\begin{align}
\label{eq:APAPTedgesconcentrate}
\p&\left(G=g\text{ s.t. }\left|\frac{1}{2}\|A_g-P_\phi A_gP_\phi^T\|_F^2-2\mathfrak{m}_k p(1-p)\right|\geq 
2c\sqrt{\mathfrak{m}_k k\log m}\right)\leq  2e^{-3k\log m}. 
\end{align}
Conditioning on $\frac{1}{2}\|A-P_\phi AP_\phi^T\|_F^2=\Delta$, 
$F_{\mathcal{O}}(G_1,G_2,\phi)\sim Bin(\Delta/2,p(1-\rho))$ independent of $F_{\mathcal{A}}(G_1,G_2,\phi)\sim Bin(\Delta/2,(1-p)(1-\rho))$.
Hoeffding's inequality then yields 
\begin{align}
\label{bound2}
\p\left[F_{\mathcal{O}}(G_1,G_2,\phi)\geq \frac{\Delta}{2} \left(p(1-\rho)+\frac{\rho}{3}\right)\bigg|\frac{1}{2}\|A-P_\phi AP_\phi^T\|_F^2=\Delta\right]&\leq e^{-\Delta\rho^2/9};\\
\label{bound3}
\p\left[F_{\mathcal{A}}(G_1,G_2,\phi)\geq \frac{\Delta}{2} \left((1-p)(1-\rho)+\frac{\rho}{3}\right)\bigg|\frac{1}{2}\|A-P_\phi AP_\phi^T\|_F^2=\Delta\right]&\leq e^{-\Delta\rho^2/9}.
\end{align}
Unconditioning (\ref{bound2})--(\ref{bound3}) combined with (\ref{eq:APAPTedgesconcentrate}), and summing over $k$ yields the desired result.
\end{proof}

Next, note that 
\begin{align*}
I(G_1;G_2)-I(G_1;\boldsymbol{\sigma}^*(G_2))&=
-\sum_{\substack{g_1\in \mathcal{G}_n,\\ g_2\in \mathcal{G}_n}}\p(g_1,g_2)\log\left(\sum_{\tau\in \Pi(n)^*}\frac{\p(g_1,\tau(g_2))}{\p(g_1,g_2)| \Pi(n)^*|}\right).
\end{align*}
The utility of Eq. (\ref{eq:restrictedER}) and (\ref{eq:restrictedBip}) in the present $\rho$-SBM setting can be realized as follows.  For each $i,j\in[K]$, we define
$\xi_{i,j}:=1
+\frac{\rho}{(1-\Lambda_{i,j})\Lambda_{i,j}(1-\rho)^2}>1.$
Combining the above yields (where $A_1$ is the adjacency matrix of $g_1$ and $B_2$ the adjacency matrix of $g_2$, and for $j,\ell\in[K]$ $A_1^j$ is $A_1$ restricted to $g_1^j$, and $A_1^{j,\ell}$ is $A_!$ restricted to $g_1^{j,\ell}$; similarly for $B_2$)
\begin{align*}
I&(G_1;G_2)-I(G_1;\boldsymbol{\sigma}^*(G_2))\\
&=-\sum_{\substack{g_1\in \mathcal{G}_n,\\ g_2\in \mathcal{G}_n}}\p(g_1,g_2)
\log\Bigg(\sum_{\tau\in \Pi(n)^*}\frac{1}{|\Pi(n)^*|}\prod_{j\in[K]} 
\text{exp}\left\{\log(\xi_{j,j})
\frac{1}{4}\left(\|A_1^j-B_2^j\|_F^2-\|A_1^j-P_{\tau_j} B_2^jP_{\tau_j}^T\|_F^2\right)  \right\}\notag\\
&\hspace{20mm}
\prod_{\substack{j,\ell\in[K]\\\text{ s.t. }j<\ell}} \text{exp}\left\{\log(\xi_{j,\ell})
\frac{1}{4}\left(\|A_1^{j,\ell}-B_2^{j,\ell}\|_F^2-\|A_1^{j,\ell}-P_{\tau_j} B_2^{j,\ell}P_{\tau_\ell}^T\|_F^2 \right) \right\}\Bigg)
\end{align*}
If $\min_i n_i=\Theta(n)$ and $\rho=\omega\left(\sqrt{\frac{\log n}{n}}\right)$, for $n$ sufficiently large there exists constants $C>0$, $C'>0$ such that for all $\tau\in \Pi(n)^*$,
$$-\frac{1}{4}\left(\|A_1^{j,\ell}-P_{\tau_j} B_2^{j,\ell}P_{\tau_\ell}^T\|_F^2-\|A_1^{j,\ell}-B_2^{j,\ell}\|_F^2 \right)<-C n  \rho$$
and 
$$-\frac{1}{4}\left(\|A_1^{j}-P_{\tau_j} B_2^{j}P_{\tau_j}^T\|_F^2-\|A_1^{j}-B_2^{j}\|_F^2\right)<-C n  \rho$$
with probability at least $1-C'e^{-3\log n}$.
Therefore, for $n$ sufficiently large there exists a constant $C''>0$ such that 
\begin{align}
\label{eq:infolossSstar}
I&(G_1;G_2)-I(G_1;\boldsymbol{\sigma}^*(G_2))\geq
C''n\rho^2- C'e^{-3\log n}n^2=\Omega(n\rho^2)
\end{align}
as desired.
The proof is then completed by applying straightforward application of the information processing inequality which yields the following proposition. 
\begin{proposition}
\label{prop:ipi2}
Let $\boldsymbol{\sigma}$ be uniformly distributed on $\Pi(n)$ independent of $\boldsymbol{\sigma}^*$ uniformly distributed on $\Pi(n)^*$.
With notation as above, 
$I(G_1;\boldsymbol{\sigma}(G_2))\leq I(G_1;\boldsymbol{\sigma}^*(G_2)).$ 
\end{proposition}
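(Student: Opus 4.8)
The plan is to realize the fully uniform shuffle $\boldsymbol{\sigma}$ as a randomized transformation of the block-preserving shuffle $\boldsymbol{\sigma}^*(G_2)$ that does not depend on $G_1$, so that the desired inequality becomes an immediate application of the information processing inequality to the Markov chain $G_1 \to \boldsymbol{\sigma}^*(G_2) \to \boldsymbol{\sigma}(G_2)$. The algebraic input is that $\Pi(n)^*$ is exactly the Young subgroup $S_{n_1}\times\cdots\times S_{n_K}$ of $\Pi(n)$. Fixing a transversal $T$ of the left cosets of $\Pi(n)^*$ in $\Pi(n)$, every $\phi\in\Pi(n)$ has a unique factorization $\phi=\tau\circ\psi$ with $\tau\in T$ and $\psi\in\Pi(n)^*$. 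Consequently, if $\boldsymbol{\tau}$ is drawn uniformly from $T$ and $\boldsymbol{\sigma}^*$ uniformly from $\Pi(n)^*$, independently, then $\boldsymbol{\tau}\circ\boldsymbol{\sigma}^*$ is uniformly distributed on $\Pi(n)$ and hence equal in distribution to $\boldsymbol{\sigma}$.

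First I would set up the construction so that $(G_1,G_2)\sim\rho$-SBM is realized independently of the pair $(\boldsymbol{\tau},\boldsymbol{\sigma}^*)$, with $\boldsymbol{\tau}$ uniform on $T$ and independent of $\boldsymbol{\sigma}^*$. Using that the shuffle action composes as $\tau(\psi(x))=(\tau\circ\psi)(x)$ (since their adjacency matrices satisfy $P_\tau(P_\psi A_x P_\psi^T)P_\tau^T=P_{\tau\psi}A_x P_{\tau\psi}^T$), I would conclude
\[
\boldsymbol{\tau}\big(\boldsymbol{\sigma}^*(G_2)\big)=(\boldsymbol{\tau}\circ\boldsymbol{\sigma}^*)(G_2).
\]
Combining this with the coset factorization and Definition \ref{def:shuffled}, the pair $\big(G_1,\boldsymbol{\tau}(\boldsymbol{\sigma}^*(G_2))\big)$ has exactly the joint law of $\big(G_1,\boldsymbol{\sigma}(G_2)\big)$, because $\boldsymbol{\tau}\circ\boldsymbol{\sigma}^*$ is uniform on $\Pi(n)$ and independent of $(G_1,G_2)$. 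Thus it suffices to bound the mutual information of this concretely coupled version.

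Next I would verify the Markov structure. Since $\boldsymbol{\tau}$ is independent of $(G_1,G_2,\boldsymbol{\sigma}^*)$, conditioning on $\boldsymbol{\sigma}^*(G_2)=w$ makes $\boldsymbol{\sigma}(G_2)=\boldsymbol{\tau}(w)$ a function of $w$ and the external randomness $\boldsymbol{\tau}$ alone; in particular, given $\boldsymbol{\sigma}^*(G_2)$, the variable $\boldsymbol{\sigma}(G_2)$ is conditionally independent of $G_1$. Hence $G_1 \to \boldsymbol{\sigma}^*(G_2) \to \boldsymbol{\sigma}(G_2)$ is a Markov chain, and the information processing inequality yields $I(G_1;\boldsymbol{\sigma}(G_2))\leq I(G_1;\boldsymbol{\sigma}^*(G_2))$, as claimed.

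The routine steps (the information processing inequality and the coset counting) are standard; I expect the only delicate point to be the distributional bookkeeping of the previous paragraph—namely confirming that the coupled variable $\boldsymbol{\tau}\circ\boldsymbol{\sigma}^*$ genuinely reproduces the law of $\boldsymbol{\sigma}$ \emph{jointly} with $G_1$, so that the Markov chain is with respect to the correct joint distribution. This requires care because $\boldsymbol{\sigma}^*(G_2)$ already depends on $G_2$ (and therefore on $G_1$), so the independence that must be isolated is specifically that of the \emph{additional} randomization $\boldsymbol{\tau}$ from $(G_1,G_2,\boldsymbol{\sigma}^*)$; once this is pinned down, the conditional independence of $\boldsymbol{\sigma}(G_2)$ from $G_1$ given $\boldsymbol{\sigma}^*(G_2)$, and hence the conclusion, follows.
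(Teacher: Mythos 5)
Your proposal is correct and follows exactly the route the paper intends: the paper asserts Proposition~\ref{prop:ipi2} as a straightforward application of the information processing inequality and omits the details, and your coset-transversal construction---realizing $\boldsymbol{\sigma}\stackrel{d}{=}\boldsymbol{\tau}\circ\boldsymbol{\sigma}^*$ with $\boldsymbol{\tau}$ uniform on a transversal of the cosets of $\Pi(n)^*$ in $\Pi(n)$ and independent of $(G_1,G_2,\boldsymbol{\sigma}^*)$---is precisely the bookkeeping needed to exhibit the Markov chain $G_1\to\boldsymbol{\sigma}^*(G_2)\to\boldsymbol{\sigma}(G_2)$ and invoke that inequality. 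In short, you have supplied, correctly, the ``straightforward'' argument the paper leaves to the reader.
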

\noindent The proof of Proposition \ref{prop:ipi2} is straightforward and hence is omitted.

\subsection{Proof of Theorem \ref{them:GMSBM}}
\label{sec:proofSBMGM}

Herein, we prove Theorem \ref{them:GMSBM}.
With notation and assumptions as in the Theorem, for $\tau\in \Pi(n)$, define
$X_{\tau,A,B}:=\frac{1}{2}(\|A- \pt B\pt^T\|_F^2-\|A- B\|_F^2).$
Fix $\tau\neq \text{id}_n\in \Pi(n)$, and suppose that $\tau$ permutes the labels of exactly $m\geq2$ vertices (so that $|\{v:\tau(v)=v\}|=n-m$). 
For each pair $1\leq i,j\leq K$, let
$$\epsilon_{i,j}^{\tau}:=\big|\left\{ v\in V_i\text{ s.t. }\tau(v)\in V_j,\, v\neq \tau(v)\right\}\big|,$$
and let 
$$f^{\tau}_i=\big|\left\{ v\in V_i\text{ s.t. }\tau(v)=v\right\}\big|.$$
Note that for each $i\in[K]$ and each $\tau\in \Pi(n)$, we have that
$$n_i-\fo_i=\sum_{j}\eo_{i,j}=\sum_{j}\eo_{j,i}.$$

As in the proof of Lemma \ref{lem:preludeto10}, we note that if $x,y\in \mathcal{G}_n$ with adjacency matrices $A_x$ and $B_y$ respectively,
$
X_{\tau,A_x,B_y}=\frac{1}{2}\|A_x-\pt A_x\pt^T\|_F^2-2F_\mathcal{A}-2F_\mathcal{O},
$
where $F_\mathcal{A}:=F_\mathcal{A}(x,y,\tau)$ and $F_\mathcal{O}:=F_\mathcal{O}(x,y,\tau)$ are defined as in Eq. (\ref{eq:Fa})--(\ref{eq:Fo}).
We call the errors induced by $\tau$ on $x$ of the form $u\nsim_{x}v\text{ and }u\sim_{\tau(x)}v$ \textit{addition errors} (so that $F_\mathcal{A}$ is the number of fixed addition errors), and the errors induced by $\tau$ on $x$ of the form $u\sim_{x}v\text{ and }u\nsim_{\tau(x)}v$ \textit{occlusion errors} (so that $F_\mathcal{O}$ is the number of fixed occlusion errors).

We will first show that if $(G_1,G_2)\sim\rho-$SBM($K,\vec{n},b,\Lambda$) satisfying the assumptions in the theorem, then with sufficiently high probability $(G_1,G_2)=(x,y)$ satisfying
$$\frac{1}{2}\|A_x-\pt A_x\pt^T\|_F^2>2F_\mathcal{A}(x,y,\tau)+2F_\mathcal{O}(x,y,\tau),$$
implying that $X_{\tau,A_x,B_y}>0$.
To this end, with $A$ and $B$ the random adjacency matrices associated with $G_1$ and $G_2$ respectively, note that
\begin{align}
\frac{1}{2}\|A-\pt A^T \pt^T\|_F^2&=\sum_{\{v,v'\}\in\binom{V}{2}}
(A_{v,v'}-A_{\tau(v),\tau(v')})^2\notag\\
\label{eq:withinblock}
&=\sum_{i=1}^K\sum_{\{v,v'\}\in \binom{V_i}{2}}A_{v,v'}(1-A_{\tau(v),\tau(v')})+(1-A_{v,v'})A_{\tau(v),\tau(v')}\\
\label{eq:acrossblock}
&\hspace{5mm}+\sum_{i=1}^K\sum_{j>i}^K\sum_{(v,v')\in V_i\times V_j}A_{v,v'}(1-A_{\tau(v),\tau(v')})+(1-A_{v,v'})A_{\tau(v),\tau(v')}.
\end{align}
Consider the sum in (\ref{eq:withinblock}). 
For each $i\in[K]$, the sum can be further decomposed into three terms:
\begin{itemize}
\item[1.]For each $j\in[K]$ there are 
$\mathfrak{n}_1(i,i,j,j):=\binom{\epsilon^{\tau}_{i,j}}{2}$ terms with both $v\neq \tau(v)$ and $v'\neq \tau(v')$ mapped to $V_j$ by $\tau.$  
The expected number of addition errors (denoted $\mathcal{A}^{(1)}_{i,i,j,j})$ contributed by these terms is
$$\e(\mathcal{A}^{(1)}_{i,i,j,j})=\binom{\epsilon^{\tau}_{i,j}}{2}(1-\Lambda_{i,i})\Lambda_{j,j}  ,$$ 
and the expected number of occlusion errors (denoted $\mathcal{O}^{(1)}_{i,i,j,j}$) contributed by these terms is
$$\e(\mathcal{O}^{(1)}_{i,i,j,j})=\binom{\epsilon^{\tau}_{i,j}}{2}\Lambda_{i,i}(1-\Lambda_{j,j}) .$$ 
Conditioning on $\mathcal{A}^{(1)}_{i,i,j,j}$ and $\mathcal{O}^{(1)}_{i,i,j,j}$, each addition error is independently corrected in $B$ with probability $(1-\rho)(1-\Lambda_{j,j}),$
and each occlusion error is independently corrected (independently also of the corrected addition errors) in $B$ with probability $(1-\rho)\Lambda_{j,j}.$
\item[2.]$\text{ For each }j\in[K],\,\ell\in[K],\,\ell>j, \text{ there are }\mathfrak{n}_2(i,i,j,\ell):=\epsilon^{\tau}_{i,j}\epsilon^{\tau}_{i,\ell}\text{ terms}\text{ with }v\neq \tau(v)\in V_j\text{ and }v'\neq \tau(v')\in V_\ell.$
The expected number of addition errors (denoted $\mathcal{A}^{(2)}_{i,i,j,\ell})$ contributed by these terms is
$$\e(\mathcal{A}^{(2)}_{i,i,j,\ell})=\epsilon^{\tau}_{i,j}\epsilon^{\tau}_{i,\ell}(1-\Lambda_{i,i})\Lambda_{j,\ell}  ,$$ 
and the expected number of occlusion errors (denoted $\mathcal{O}^{(2)}_{i,i,j,\ell})$contributed by these terms is
$$\e(\mathcal{O}^{(2)}_{i,i,j,\ell})=\epsilon^{\tau}_{i,j}\epsilon^{\tau}_{i,\ell}\Lambda_{i,i}(1-\Lambda_{j,\ell}) .$$
Conditioning on $\mathcal{A}^{(2)}_{i,i,j,\ell}$ and $\mathcal{O}^{(2)}_{i,i,j,\ell}$, each addition error is independently corrected in $B$ with probability $(1-\rho)(1-\Lambda_{j,\ell}),$
and each occlusion error is independently corrected (independently also of the corrected addition errors) in $B$ with probability $(1-\rho)\Lambda_{j,\ell}.$
\item[3.]$\text{ For each }j\in[K], \text{ there are }\mathfrak{n}_3(i,i,i,j):=\fo_i \epsilon^{\tau}_{i,j}\text{ terms}\text{ with }v= \tau(v)\in V_i\text{ and }v'\neq \tau(v')\in V_j.$
The expected number of addition errors (denoted $\mathcal{A}^{(3)}_{i,i,i,j})$ contributed by these terms is
$$\e(\mathcal{A}^{(3)}_{i,i,i,j})=\fo_i \epsilon^{\tau}_{i,j}(1-\Lambda_{i,i})\Lambda_{i,j}  ,$$ 
and the expected number of occlusion errors (denoted $\mathcal{O}^{(3)}_{i,i,i,j})$ contributed by these terms is
$$\e(\mathcal{O}^{(3)}_{i,i,i,j})=\fo_i \epsilon^{\tau}_{i,j}\Lambda_{i,i}(1-\Lambda_{i,j}) .$$
Conditioning on $\mathcal{A}^{(3)}_{i,i,i,j}$ and $\mathcal{O}^{(3)}_{i,i,i,j}$, each addition error is independently corrected in $B$ with probability $(1-\rho)(1-\Lambda_{i,j}),$
and each occlusion error is independently corrected (independently also of the corrected addition errors) in $B$ with probability $(1-\rho)\Lambda_{i,j}.$
\end{itemize}
In the sum in (\ref{eq:acrossblock}), for each $i,j\in[K]$ with $j>i$, the sum can be further decomposed into three terms:
\begin{itemize}
\item[4.]$\text{ For each }\ell\in[K],\,h\in[K], \text{ there are }\mathfrak{n}_4(i,j,h,\ell):=\epsilon^{\tau}_{i,h}\epsilon^{\tau}_{j,\ell}\text{ terms}\text{ with }v\in V_i,\,v\neq \tau(v)\in V_h\text{ and }v'\in V_j,\,v'\neq \tau(v')\in V_\ell.$
The expected number of addition errors (denoted $\mathcal{A}^{(4)}_{i,j,h,\ell})$ contributed by these terms is
$$\e(\mathcal{A}^{(4)}_{i,j,h,\ell})=\epsilon^{\tau}_{i,h}\epsilon^{\tau}_{j,\ell}(1-\Lambda_{i,j})\Lambda_{h,\ell}  ,$$ 
and the expected number of occlusion errors (denoted $\mathcal{O}^{(4)}_{i,j,h,\ell})$ contributed by these terms is
$$\e(\mathcal{O}^{(4)}_{i,j,h,\ell})=\epsilon^{\tau}_{i,h}\epsilon^{\tau}_{j,\ell}\Lambda_{i,j}(1-\Lambda_{h,\ell}) .$$
Conditioning on $\mathcal{A}^{(4)}_{i,j,h,\ell}$ and $\mathcal{O}^{(4)}_{i,j,h,\ell}$, each addition error is independently corrected in $B$ with probability $(1-\rho)(1-\Lambda_{h,\ell}),$
and each occlusion error is independently corrected (independently also of the corrected addition errors) in $B$ with probability $(1-\rho)\Lambda_{h,\ell}.$
\item[5.]$\text{ For each }\ell\in[K],\, \text{ there are }\mathfrak{n}_5(i,j,i,\ell):=\fo_i \epsilon^{\tau}_{j,\ell}\text{ terms}\text{ with }v= \tau(v)\in V_i\text{ and }v'\in V_j,\, v'\neq \tau(v')\in V_\ell.$
The expected number of addition errors (denoted $\mathcal{A}^{(5)}_{i,j,i,\ell})$ contributed by these terms is
$$\e(\mathcal{A}^{(5)}_{i,j,i,\ell})=\fo_i \epsilon^{\tau}_{j,\ell}(1-\Lambda_{i,j})\Lambda_{i,\ell}  ,$$ 
and the expected number of occlusion errors (denoted $\mathcal{O}^{(5)}_{i,j,i,\ell})$  contributed by these terms is
$$\e(\mathcal{O}^{(5)}_{i,j,i,\ell})=\fo_i \epsilon^{\tau}_{j,\ell}\Lambda_{i,j}(1-\Lambda_{i,\ell}) .$$
Conditioning on $\mathcal{A}^{(5)}_{i,j,i,\ell}$ and $\mathcal{O}^{(5)}_{i,j,i,\ell}$, each addition error is independently corrected in $B$ with probability $(1-\rho)(1-\Lambda_{i,\ell}),$
and each occlusion error is independently corrected (independently also of the corrected addition errors) in $B$ with probability $(1-\rho)\Lambda_{i,\ell}.$
\item[6.]$\text{ For each }\ell\in[K],\, \text{ there are }\mathfrak{n}_6(i,j,\ell,j):=\fo_j \epsilon^{\tau}_{i,\ell}\text{ terms}\text{ with }v'= \tau(v')\in V_j\text{ and }v\in V_i,\, v\neq \tau(v)\in V_\ell.$
The expected number of addition errors (denoted $\mathcal{A}^{(6)}_{i,j,\ell,j})$ contributed by these terms is
$$\e(\mathcal{A}^{(6)}_{i,j,\ell,j})=\fo_j \epsilon^{\tau}_{i,\ell}(1-\Lambda_{i,j})\Lambda_{\ell,j}  ,$$ 
and the expected number of occlusion errors (denoted $\mathcal{O}^{(6)}_{i,j,\ell,j})$ contributed by these terms is
$$\e(\mathcal{O}^{(6)}_{i,j,\ell,j})=\fo_j \epsilon^{\tau}_{i,\ell}\Lambda_{i,j}(1-\Lambda_{\ell,j}) .$$
Conditioning on $\mathcal{A}^{(6)}_{i,j,\ell,j}$ and $\mathcal{O}^{(6)}_{i,j,\ell,j}$, each addition error is independently corrected in $B$ with probability $(1-\rho)(1-\Lambda_{\ell,j}),$
and each occlusion error is independently corrected (independently also of the corrected addition errors) in $B$ with probability $(1-\rho)\Lambda_{\ell,j}.$
\end{itemize}

For each $s\in[6]$ and each feasible set of indices $(a,b,c,d)\in K^4$, note that $\mathfrak{n}_s(a,b,c,d)=O(mn)$.   
If $\mathfrak{n}_s(a,b,c,d)\geq m \sqrt{n \log n}$, 
then an application of \cite[Proposition 3.2]{kim} yields that there exists a constant $\gamma>0$ such that for $n$ sufficiently large
\begin{equation}
\label{eq:bigadd}
\p\left(|\mathcal{A}^{(s)}_{a,b,c,d}-\e \mathcal{A}^{(s)}_{a,b,c,d}|>\gamma \sqrt{\mathfrak{n}_s(a,b,c,d)}\sqrt{m\log n}   \right)\leq 2e^{-3 m\log n},
\end{equation}
and
\begin{equation}
\label{eq:bigocc}
\p\left(|\mathcal{O}^{(s)}_{a,b,c,d}-\e \mathcal{O}^{(s)}_{a,b,c,d}|>\gamma \sqrt{\mathfrak{n}_s(a,b,c,d)}\sqrt{m\log n}   \right)\leq 2e^{-3 m\log n}.
\end{equation}
Alternatively, if $\mathfrak{n}_s(a,b,c,d)<m\sqrt{n\log n}$, then it is immediate that 
\begin{equation}
\label{eq:small}
|\mathcal{A}^{(s)}_{a,b,c,d}-\e \mathcal{A}^{(s)}_{a,b,c,d}|\leq m\sqrt{n\log n},\text{ and  }|\mathcal{O}^{(s)}_{a,b,c,d}-\e \mathcal{O}^{(s)}_{a,b,c,d}|\leq m\sqrt{n\log n}.
\end{equation}
Denote by $\mathcal E$ the event that $A$ satisfies
\begin{align*}
&|\mathcal{A}^{(s)}_{a,b,c,d}-\e \mathcal{A}^{(s)}_{a,b,c,d}|\leq\gamma \sqrt{\mathfrak{n}_s(a,b,c,d)}\sqrt{m\log n},\\
&|\mathcal{O}^{(s)}_{a,b,c,d}-\e \mathcal{O}^{(s)}_{a,b,c,d}|\leq\gamma \sqrt{\mathfrak{n}_s(a,b,c,d)}\sqrt{m\log n} 
\end{align*}
for all $s\in[6]$ and feasible indices $(a,b,c,d)\in K^4$ satisfying $\mathfrak{n}_s(a,b,c,d)\geq m \sqrt{n \log n}$.
Eq. (\ref{eq:bigadd})--(\ref{eq:bigocc}) and a simple union bound imply that 
\begin{equation}
\label{eq:boundonE}
\p(\mathcal E^c)\leq 24K^4 e^{-3 m\log n}.
\end{equation} 
At each sample point $\varepsilon$ in $\mathcal{E}$, recalling $\mathfrak{n}_s(a,b,c,d)=O(mn)$, we have that there exists a constant $C>0$ (which can be chosen independent of $\varepsilon$) such that 
\begin{align*}
\left|\frac{\|A-\pt A^T \pt^T\|_F^2}{2}-\e\left(\frac{\|A-\pt A^T\pt ^T\|_F^2}{2}\right)\right|\leq Cm\sqrt{n\log n}.
\end{align*}
A brief calculation then yields
\begin{align}
\label{eq:bigugly}
&\e\left(F_\mathcal{A}+F_\mathcal{O}\,\,\bigg|\,\,\varepsilon\right)\leq(1-\rho)\Bigg[\sum_{i=1}^K\sum_{j=1}^K \binom{\epsilon^{\tau}_{i,j}}{2}\Lambda_{j,j}(1-\Lambda_{j,j})+\sum_{i=1}^K\sum_{j=1}^K\sum_{\ell>j}^K \epsilon^{\tau}_{i,j}\epsilon^{\tau}_{i,\ell}\Lambda_{j,\ell}(1-\Lambda_{j,\ell})\notag\\
&\hspace{5mm}+\sum_{i=1}^K\sum_{j=1}^K \fo_i \epsilon^{\tau}_{i,j}\Lambda_{i,j}(1-\Lambda_{i,j})+\sum_{i=1}^K\sum_{j>i}^K\sum_{\ell=1}^K\sum_{h=1}^K \epsilon^{\tau}_{i,h}\epsilon^{\tau}_{j,\ell}\Lambda_{h,\ell}(1-\Lambda_{h,\ell})\notag\\
&\hspace{5mm}+\sum_{i=1}^K\sum_{j>i}^K\sum_{\ell=1}^K \fo_i \epsilon^{\tau}_{j,\ell}\Lambda_{i,\ell}(1-\Lambda_{i,\ell})+\sum_{i=1}^K\sum_{j>i}^K\sum_{\ell=1}^K \fo_j \epsilon^{\tau}_{i,\ell}\Lambda_{\ell,j}(1-\Lambda_{\ell,j})\Bigg]+Cm\sqrt{n\log n},
\end{align}
where conditioning on $\varepsilon$ is understood to mean conditioning on a value of $A$ that satisfies the conditions imposed by $\mathcal{E}$.
Note that the term in the brackets in Eq. (\ref{eq:bigugly}) is equal to
\begin{align}
&\sum_{i=1}^K\sum_{j=1}^K \binom{\epsilon^{\tau}_{i,j}}{2}\Lambda_{i,i}(1-\Lambda_{i,i})+\sum_{i=1}^K\sum_{j=1}^K\sum_{\ell>j}^K \epsilon^{\tau}_{i,j}\epsilon^{\tau}_{i,\ell}\Lambda_{i,i}(1-\Lambda_{i,i})+\sum_{i=1}^K\sum_{j=1}^K \fo_i \epsilon^{\tau}_{i,j}\Lambda_{i,i}(1-\Lambda_{i,i})\notag\\
&\hspace{10mm}+\sum_{i=1}^K\sum_{j>i}^K\sum_{\ell=1}^K\sum_{h=1}^K \epsilon^{\tau}_{i,h}\epsilon^{\tau}_{j,\ell}\Lambda_{i,j}(1-\Lambda_{i,j})+\sum_{i=1}^K\sum_{j>i}^K\sum_{\ell=1}^K \fo_i \epsilon^{\tau}_{j,\ell}\Lambda_{i,j}(1-\Lambda_{i,j})\notag\\
\label{eq:equiv2}
&\hspace{10mm}+\sum_{i=1}^K\sum_{j>i}^K\sum_{\ell=1}^K \fo_j \epsilon^{\tau}_{i,\ell}\Lambda_{j,i}(1-\Lambda_{j,i}).
\end{align}
Also note that for any indices $i,j,k,\ell\in[K]$,
\begin{align}
\label{eq:observe}
\Lambda_{i,j}(1-\Lambda_{i,j})+\Lambda_{k,\ell}(1-\Lambda_{k,\ell})\leq
\Lambda_{i,j}(1-\Lambda_{k,\ell})+\Lambda_{k,\ell}(1-\Lambda_{i,j}).\end{align}
Combining Eqs. (\ref{eq:bigugly}) and (\ref{eq:equiv2}) with Eq. (\ref{eq:observe}) then yields that for $\varepsilon\in\mathcal{E}$
\begin{align*}
\e\left(F_\mathcal{A}+F_\mathcal{O}\,\,|\,\,\varepsilon\right)
\leq \frac{1-\rho}{2} \e\left(\frac{1}{2}\|A-\pt A^T\pt^T\|_F^2\right)+C m\sqrt{n\log n}.
\end{align*}
%
Applying Hoeffding's inequality to 
$F:=F_\mathcal{A}+F_\mathcal{O}$
yields that there exists constants $c_1>0$, $c_2$, and $c_3$, such that for $n$ sufficiently large
\begin{align*}
\p\left(2F\geq \frac{1}{2}\|A-\pt A^T\pt^T\|_F^2\,\,\Big|\,\,\varepsilon\right)
&= \p\left(2F-2\e (F|\varepsilon)\geq \frac{1}{2}\|A-\pt A^T\pt^T\|_F^2-2\e  (F|\varepsilon)\,\,\Big|\,\,\varepsilon\right)\\
&\leq \p\left(F-\e  (F|\varepsilon)\geq \frac{\rho}{4}\e\left(\|A-\pt A^T\pt^T\|_F^2\right)\!-\!3Cm\sqrt{n\log n}\,\,\Big|\,\,\varepsilon\right)\\
&\leq \text{exp}\bigg\{ -c_1\rho^2 mn+c_2\rho m\sqrt{n\log n}+c_3 m\log n\bigg\},
\end{align*}
where the last inequality follows from the fact that that $F$ is the sum of 
at most $\binom{m}{2}+m(n-m)$
independent Bernoulli random variables.  
Therefore, there exists a constant $\alpha>0$ such that if $\rho>\sqrt{\alpha\frac{\log n}{n}}$ then 
$$\p\left(2F\geq \frac{1}{2}\|A-\pt A^T\pt^T\|_F^2\,\,\bigg|\,\,\varepsilon\right)\leq e^{-3 m\log n}.$$
Combined with $\p(\mathcal E^c)\leq24K^4 e^{-3 m\log n}$, we have that, unconditionally,
$$\p\left(2F\geq \frac{1}{2}\|A-\pt A^T\pt^T\|_F^2\right)\leq (24K^4+1) e^{-3 m\log n}.$$
Summing over $\tau$ and $n$ yields that
\begin{align*}
\p\big(\exists\, \tau\in \Pi(n) \text{ with }X_{\tau,A,B}\leq -1  \big)
&\leq (24K^4+1)  e^{- 3\log n},
\end{align*}
as desired.


\subsection{Proof of Theorem \ref{thm:no_match}}
\label{sec:suppthm2}
In this section, we will prove Theorem \ref{thm:no_match}.  
Recall that for $\tau\in \Pi(n)$, we define
$X_{\tau,A,B}:=\frac{1}{2}(\|A- \pt B\pt^T\|_F^2-\|A- B\|_F^2).$
The proof of Theorem \ref{thm:no_match} will proceed as follows. 
For each $i=1,2,\ldots,N$, let $E_{\tau_i,A,B}$ be the event $\{X_{\tau_i,A,B}\leq -1 \}$, and let $X=\sum_{i}\mathds{1}\{E_{\tau_i,A,B}\}$.
We first show in Lemma \ref{lem:lowerboundonp} that for judiciously chosen $\beta$,
$$\e(X)=\Omega\left( \frac{N}{n^{1/5}\sqrt{\log n}}\right).$$
In Lemmas \ref{lem:covbnd}--\ref{lem:varbnd}, we show that 
$$\text{Var}(X)=O\left(\frac{N^2}{\sqrt{n}}  \right).$$
As $X$ is a nonnegative integer-valued random variable, we apply the second moment method \cite[Theorem 4.3.1]{alon2015probabilistic} to derive
$
\p(X=0)\leq\frac{\text{Var}(X)}{\e(X)^2}=O\left( \frac{\sqrt{\log n}}{n^{1/10}}\right)=o(1)$
as desired.
We now establish the supporting Lemmas \ref{lem:lowerboundonp}--\ref{lem:varbnd}.
We begin by noting that if $\tau=i\leftrightarrow j$ is a within-block transposition, i.e., $b(i)=b(\tau(i))=b(j)=b(\tau(j)),$
then
\begin{align}
X_{\tau}:=X_{\tau,A,B}:&=\|A- \po B\po^T\|_F^2-\|A- B\|_F^2=
2\sum_{\substack{\ell,k\text{ s.t. }\tau(\ell)\neq \ell\\\text{ or }\tau(k)\neq k}}\left(A_{\ell,k}B_{\ell,k}-A_{\ell,k}B_{\tau(\ell),\tau(k)}\right)\notag\\
&=4\sum_{k\neq i,j}(A_{i,k}B_{i,k}-A_{i,k}B_{j,k})+4\sum_{k\neq i,j}(A_{j,k}B_{j,k}-A_{j,k}B_{i,k})\notag
\end{align} 
From this, we immediately arrive at 
\begin{align}
\label{eq:errortrans}
X_{\tau}&=4\sum_{k\neq i,j}(A_{i,k}-A_{j,k})(B_{i,k}-B_{j,k}). 
\end{align}
For each $k\neq i,j$, the terms $X_{\tau}^{(k)}=X_{\tau,A,B}^{(k)}:=4(A_{i,k}-A_{j,k})(B_{i,k}-B_{j,k})$ in Eq. (\ref{eq:errortrans}) are independent
with mean
$$\mu_k:=\e\left(4(A_{i,k}-A_{j,k})(B_{i,k}-B_{j,k})\right)=8\Lambda_{b(i),b(k)}(1-\Lambda_{b(i),b(k)})\rho,$$ 
and variance
\begin{align*}
\sigma^2_k&=\text{Var}\left(4(A_{i,k}-A_{j,k})(B_{i,k}-B_{j,k})\right)\\
&=16\big(2\Lambda_{b(i),b(k)}(1-\Lambda_{b(i),b(k)})\rho+4\Lambda_{b(i),b(k)}^2(1-\Lambda_{b(i),b(k)})^2+8(1-\Lambda_{b(i),b(k)})\Lambda_{b(i),b(k)}^3\rho\\
&\hspace{5mm}-8\Lambda_{b(i),b(k)}^2(1-\Lambda_{b(i),b(k)})\rho+4\rho^2\Lambda_{b(i),b(k)}^2(1-\Lambda_{b(i),b(k)})^2\big)
-\left(8\Lambda_{b(i),b(k)}(1-\Lambda_{b(i),b(k)})\rho\right)^2.
\end{align*}
Note here that for all $k$,
$$\lim_{\rho\rightarrow 0}\mu_k=0,\hspace{5mm} \lim_{\rho\rightarrow 0}\sigma^2_k=48\Lambda_{b(i),b(k)}^2(1-\Lambda_{b(i),b(k)})^2>0.$$
Next, note that
$
\xi_k:=\e\left(\big|4(A_{i,k}-A_{j,k})(B_{i,k}-B_{j,k})-\mu_k\big|^3\right)\leq 8^3
$  
and is bounded away from $0$ as $\rho\rightarrow 0$.
We define
$$\mu_\tau=\sum_{k\neq i,j}\mu_k,\hspace{3mm} \sigma^2_\tau=\sum_{k\neq i,j}\sigma^2_k,\hspace{3mm} \xi_\tau=\sum_{k\neq i,j}\xi_k.$$
The classic Berry-Esseen theorem \cite[Theorem XVI.5.2]{feller2008introduction} yields
$$\sup_{x}\left|\p\bigg(\frac{X_{\tau}-\mu_\tau}{\sigma_\tau} \leq x\bigg)-\Phi(x)   \right|\leq \frac{6\xi_\tau}{\sigma_\tau^3},$$
where $\Phi(\cdot)$ is the standard normal cumulative distribution function.
Using the inequality \cite[Eq. 7.1.13]{abramowitz1964handbook}
$$\frac{1}{x+\sqrt{x^2+2}}e^{-x^2}<\int_{x}^{\infty}e^{-t^2}dt\leq \frac{1}{x+\sqrt{x^2+\frac{4}{\pi}}}e^{-x^2}\text{ for all }x\geq0,$$
which is equivalent to
\begin{equation}
\label{eq:gaussianbounds}
\frac{1}{\sqrt{\pi}\left(x+\sqrt{x^2+2}\right)}e^{-x^2}<1-\Phi\left(\sqrt{2} x\right)\leq \frac{1}{\sqrt{\pi}\left(x+\sqrt{x^2+\frac{4}{\pi}}\right)}e^{-x^2} \text{ for all }x\geq0,
\end{equation}
we have that (with $x:=\frac{1+\mu_\tau}{\sqrt{2}\sigma_\tau})$
\begin{align}
\label{eq:bebound1}
\p(X_{\tau}\leq -1)=\p\left(\frac{X_{\tau}-\mu_\tau}{\sigma_\tau}\leq -\sqrt{2} x\right)&\leq 
\frac{1}{\sqrt{\pi}\left(x+\sqrt{x^2+\frac{4}{\pi}}\right)}e^{-x^2}+\frac{6\xi_\tau}{\sigma_\tau^3},
\end{align}
and
\begin{align}
\label{eq:bebound2}
\p(X_{\tau}\leq -1)=\p\left(\frac{X_{\tau}-\mu_\tau}{\sigma_\tau}\leq -\sqrt{2} x\right)
&\geq \frac{1}{\sqrt{\pi}\left(x+\sqrt{x^2+2}\right)}e^{-x^2}-\frac{6\xi_\tau}{\sigma_\tau^3}.
\end{align}
\begin{lemma}
\label{lem:lowerboundonp}
With notation as above, there exists a constant $\beta$ such that if $\rho\leq \sqrt{\frac{\beta\log n}{n}}$,
then 
$\p(X_{\tau}\leq -1)=\Omega\left( \frac{1}{n^{1/4}\sqrt{\log n}}\right)$
\end{lemma}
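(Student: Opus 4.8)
The plan is to substitute the already-computed orders of $\mu_\tau$, $\sigma_\tau^2$, and $\xi_\tau$ into the Berry-Esseen lower bound (\ref{eq:bebound2}) and to choose $\beta$ small enough that the Gaussian tail term dominates the Berry-Esseen defect. First I would pin down the orders of the three moments. Since $\tau$ is a within-block transposition and $\Lambda\in[\eta,1-\eta]^{K\times K}$, each summand $\mu_k=8\Lambda_{b(i),b(k)}(1-\Lambda_{b(i),b(k)})\rho$ satisfies $8\eta(1-\eta)\rho\le\mu_k\le 2\rho$, so summing the $n-2$ independent contributions gives $\mu_\tau=\Theta(n\rho)$. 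Likewise, because the hypothesis $\rho\le\sqrt{\beta\log n/n}$ forces $\rho\to 0$, the recorded limit $\sigma^2_k\to 48\Lambda_{b(i),b(k)}^2(1-\Lambda_{b(i),b(k)})^2$ is bounded away from both $0$ and $\infty$ uniformly in $k$ (using $\Lambda\in[\eta,1-\eta]$), whence $\sigma^2_\tau=\Theta(n)$ and $\sigma_\tau=\Theta(\sqrt n)$. Finally $\xi_k\le 8^3$ yields $\xi_\tau=O(n)$, so the Berry-Esseen error term obeys $\frac{6\xi_\tau}{\sigma_\tau^3}=O\!\left(n/n^{3/2}\right)=O(n^{-1/2})$.

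Next I would control $x=\frac{1+\mu_\tau}{\sqrt 2\,\sigma_\tau}$. Using $\mu_\tau\le 2n\rho$, the uniform lower bound $\sigma_\tau\ge c\sqrt n$ for a constant $c$ depending only on $\eta$, and $\rho\le\sqrt{\beta\log n/n}$, the additive $1$ in the numerator is asymptotically negligible and $x^2\le(1+o(1))\frac{2\beta}{c^2}\log n$. Choosing $\beta$ small enough that $\frac{2\beta}{c^2}\le\frac14$ then gives $x^2\le\frac14\log n$ for $n$ large, hence $e^{-x^2}\ge n^{-1/4}$; simultaneously $x+\sqrt{x^2+2}=O(\sqrt{\log n})$.

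Plugging these estimates into (\ref{eq:bebound2}) produces
$$\p(X_{\tau}\leq -1)\geq \frac{e^{-x^2}}{\sqrt{\pi}\left(x+\sqrt{x^2+2}\right)}-\frac{6\xi_\tau}{\sigma_\tau^3}\geq \frac{n^{-1/4}}{C\sqrt{\log n}}-O(n^{-1/2}),$$
and since $n^{-1/2}=o\!\left(n^{-1/4}/\sqrt{\log n}\right)$ the first term dominates, yielding the claimed bound $\Omega\!\left(n^{-1/4}/\sqrt{\log n}\right)$.

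The main obstacle will be the interplay between the two competing error sources: the Gaussian exponent $x^2$ must be held below $\frac14\log n$, which constrains $\beta$ from above through the constants hidden in $\mu_\tau$ and $\sigma_\tau$, while the additive Berry-Esseen defect $6\xi_\tau/\sigma_\tau^3$ must be verified to be of strictly smaller order than the surviving Gaussian tail. The delicate point is that the constant $c$ feeding into the exponent bound depends on $\eta$ via the uniform lower bound on $\sigma_k^2$, so $\beta$ must be selected as an explicit function of $\eta$ (and of $\Lambda$ only through $\eta$) to guarantee the exponent estimate simultaneously for every within-block transposition $\tau$.
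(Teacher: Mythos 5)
Your proof is correct and follows essentially the same route as the paper's: both substitute the moment orders $\mu_\tau=\Theta(n\rho)$, $\sigma_\tau^2=\Theta(n)$, $\xi_\tau=O(n)$ into the Berry--Esseen lower bound (\ref{eq:bebound2}), use the Gaussian tail inequality (\ref{eq:gaussianbounds}), and choose $\beta$ small enough that the exponent $x^2$ stays below a fixed fraction of $\log n$ while the $O(n^{-1/2})$ Berry--Esseen defect is of smaller order. The only cosmetic difference is that you cap $x^2\le\tfrac{1}{4}\log n$ to land exactly on the stated $n^{-1/4}$ rate, whereas the paper caps $x\le\sqrt{\log n/5}$ and in fact concludes the slightly stronger bound $\Omega\left(\frac{1}{n^{1/5}\sqrt{\log n}}\right)$, which implies the lemma as stated.
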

\begin{proof}
We first note that if $\rho\leq \sqrt{\frac{\beta\log n}{n}}$, then there exists a constant $c_1>0$ such that for $n$ sufficiently large, $\sigma^2_\tau\geq c_1 n$.
Therefore, 
\begin{align}
\label{boundxhigh}
x&=\frac{1+\mu_\tau}{\sqrt{2}\sigma_\tau}\leq \frac{1+2\rho n}{\sqrt{2c_1n}}
\leq \frac{1+2\sqrt{\beta n \log n}}{\sqrt{2c_1n}}
=\sqrt{\frac{2\beta\log n}{c_1}} +\frac{1}{\sqrt{2c_1n}},
\end{align}
and $\beta$ can be chosen so that $\rho\leq \sqrt{\frac{\beta\log n}{n}}$ implies that $x\leq \sqrt{\frac{\log n}{5}}$.
With this choice of $\beta$, 
The lower bound in Eq. (\ref{eq:bebound2}) is then bounded by
\begin{align*}
\frac{1}{\sqrt{\pi}\left(x+\sqrt{x^2+2}\right)}e^{-x^2}-\frac{6\xi_\tau}{\sigma_\tau^3}\geq\frac{\text{exp}\left\{ -\frac{\log n}{5}\right\}}{\sqrt{\pi}\left(\sqrt{\frac{\log n}{5}}+\sqrt{\frac{\log n}{5}+2}\right)}-\Theta\left( n^{-1/2}\right)
=\Omega\left( \frac{1}{n^{1/5}\sqrt{\log n}}\right),
\end{align*}
as desired.
\end{proof}
Recalling that $E_{\tau_i}=E_{\tau_i,A,B}$ is the event $\{X_{\tau_i}\leq -1 \},$ Lemma \ref{lem:lowerboundonp} is equivalent to 
$$\e(\mathds{1}\{E_{\tau_i}\})=\p(E_{\tau_i})=\p(X_{\tau_i}\leq -1)=\Omega\left( \frac{1}{n^{1/5}\sqrt{\log n}}\right).$$
It follows immediately that (where $N$ is as defined in Theorem \ref{thm:no_match})
$$\e(X)=\Omega\left( \frac{N}{n^{1/5}\sqrt{\log n}}\right).$$
We now turn our attention to bounding $\text{Var}(X).$
\begin{lemma}
\label{lem:covbnd}
With notation as above and assumptions as in Theorem \ref{thm:no_match}, let $\tau_1=i\leftrightarrow j$ and $\tau_2=h\leftrightarrow\ell$ be two disjoint, within-block transpositions.  
There exists constants $C_1>0$ and $C_2>0$,
\begin{align*}
\text{Cov}(\mathds{1}\{E_{\tau_i}\},\mathds{1}\{E_{\tau_j}\})
&\leq C_2\left(\text{exp}\left\{-C_1\rho^2 n\right\}
 +\Theta\left( \frac{1}{\sqrt{n}}\right)\right)
 \Theta\left( \frac{1}{\sqrt{n}}\right)
\end{align*}
\end{lemma}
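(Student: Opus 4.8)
The plan is to exploit the near-independence of $X_{\tau_1}$ and $X_{\tau_2}$. Since $\tau_1 = i\leftrightarrow j$ and $\tau_2 = h\leftrightarrow\ell$ are disjoint, the representation in Eq.~(\ref{eq:errortrans}) shows that $X_{\tau_1} = 4\sum_{k\neq i,j}(A_{i,k}-A_{j,k})(B_{i,k}-B_{j,k})$ and $X_{\tau_2} = 4\sum_{k\neq h,\ell}(A_{h,k}-A_{\ell,k})(B_{h,k}-B_{\ell,k})$ share only the terms indexed by $k\in\{h,\ell\}$ and $k\in\{i,j\}$ respectively, and these involve solely the eight edge-indicators $\{A_{a,b},B_{a,b} : a\in\{i,j\},\, b\in\{h,\ell\}\}$. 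I would write $X_{\tau_1} = Y_1 + Z_1$ and $X_{\tau_2} = Y_2 + Z_2$, where $Z_1,Z_2$ are the bounded ($\{-8,-4,0,4,8\}$-valued) shared contributions depending only on these edges, collected in a vector $S$, and $Y_1 = 4\sum_{k\neq i,j,h,\ell}(A_{i,k}-A_{j,k})(B_{i,k}-B_{j,k})$, $Y_2 = 4\sum_{k\neq i,j,h,\ell}(A_{h,k}-A_{\ell,k})(B_{h,k}-B_{\ell,k})$. As $i,j,h,\ell$ are distinct, the edge sets underlying $Y_1$, $Y_2$, and $S$ are pairwise disjoint, so by the edge-independence of the SBM the triple $Y_1,Y_2,S$ is mutually independent.

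First I would condition on $S$. Writing $p_t(S) := \p(E_{\tau_t}\mid S)$, the conditional independence of $Y_1,Y_2$ given $S$ yields $\p(E_{\tau_1}\cap E_{\tau_2}) = \e_S[p_1(S)p_2(S)]$ while $\p(E_{\tau_t}) = \e_S[p_t(S)]$, and hence
\[
\text{Cov}(\mathds{1}\{E_{\tau_1}\},\mathds{1}\{E_{\tau_2}\}) = \text{Cov}_S(p_1(S),p_2(S)) = \e_S[\delta_1\delta_2],
\]
where $\delta_t := p_t(S) - \p(E_{\tau_t})$. Because $Y_t \perp S$, we moreover have $p_t(S) = g_t(Z_t(S))$ with $g_t(c) := \p(Y_t \leq -1 - c)$, so each $p_t(S)$ takes only finitely many values as $c = Z_t$ ranges over $\{-8,-4,0,4,8\}$.

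The estimate then factors as $|\text{Cov}_S(p_1,p_2)| \leq \|\delta_1\|_\infty\,\e_S|\delta_2|$, and I would bound the two factors separately. For the first, $\|\delta_1\|_\infty \leq \max_c g_1(c) - \min_c g_1(c) = \p(-9 < Y_1 \leq 7)$, the probability that $Y_1$ lands in a window of bounded length. Since $Y_1$ has the same structure as $X_{\tau_1}$ (a sum of $n-O(1)$ independent terms with $\sigma_{Y_1} = \Theta(\sqrt n)$ and third-moment sum $\xi_{Y_1} = O(n)$), applying Berry--Esseen exactly as in Eqs.~(\ref{eq:bebound1})--(\ref{eq:bebound2}) bounds this window probability by a difference of normal CDFs over an interval of length $16$, namely $O(\sigma_{Y_1}^{-1}) = O(n^{-1/2})$, plus twice the Berry--Esseen error $6\xi_{Y_1}/\sigma_{Y_1}^3 = O(n^{-1/2})$; hence $\|\delta_1\|_\infty = O(n^{-1/2})$. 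For the second factor, $\e_S|\delta_2| \leq \e_S[p_2(S)] + \p(E_{\tau_2}) = 2\,\p(E_{\tau_2})$, and the already-established Berry--Esseen upper bound (\ref{eq:bebound1}) gives $\p(E_{\tau_2}) = O(e^{-x^2} + \xi_{\tau_2}/\sigma_{\tau_2}^3)$; with $x = \Theta(\rho\sqrt n)$ and $\xi_{\tau_2}/\sigma_{\tau_2}^3 = \Theta(n^{-1/2})$ this is $O(e^{-C_1\rho^2 n} + n^{-1/2})$. Multiplying the two factors yields $\text{Cov} = O\big((e^{-C_1\rho^2 n} + n^{-1/2})\,n^{-1/2}\big)$, as claimed.

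The main obstacle is the anti-concentration estimate $\|\delta_1\|_\infty = O(n^{-1/2})$: one must show the conditional probabilities $g_1(c)$ differ by at most $O(n^{-1/2})$ across the bounded range of threshold shifts, i.e.\ that $Y_1$ places mass $O(n^{-1/2})$ on any fixed-length interval. This is delicate because $Y_1$ is a lattice-supported variable and the relevant window sits $\Theta(\rho\sqrt n)$ standard deviations into the tail, but the Berry--Esseen bounds already in play resolve it, since the Gaussian increment over an $O(1)$-length interval is automatically $O(1/\sigma_{Y_1})$ irrespective of the window's location, and the $n^{-1/2}$ Berry--Esseen error controls the remaining discrepancy.
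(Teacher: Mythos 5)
Your proposal is correct, and its analytic core coincides with the paper's: both arguments reduce the covariance to a (tail probability) $\times$ (bounded-window probability) product and then apply Berry--Esseen twice --- once as the tail bound $\p(E_\tau) \le \tfrac{1}{2}e^{-C_1\rho^2 n} + \Theta(n^{-1/2})$ (the paper's Eq.~(\ref{eq:cov1}), via Eq.~(\ref{eq:bebound1}) and the lower bound (\ref{boundxlow}) on $x$), and once as the anti-concentration estimate that a sum of $n-O(1)$ independent bounded terms places mass $O(n^{-1/2})$ on any interval of bounded length. What differs is the decoupling mechanism. The paper is asymmetric: it strips the two shared terms only from $X_{\tau_2}$, forming $\widetilde X_{\tau_2}=X_{\tau_2}-X^{(i)}_{\tau_2}-X^{(j)}_{\tau_2}$, which is independent of $X_{\tau_1}$, and then uses the shift inequalities $\p(X_{\tau_1}\le -1,\,X_{\tau_2}\le -1)\le \p(X_{\tau_1}\le -1)\,\p(\widetilde X_{\tau_2}\le 7)$ and $\p(X_{\tau_2}\le -1)\ge \p(\widetilde X_{\tau_2}\le -9)$, so that $\mathrm{Cov}\le \p(X_{\tau_1}\le -1)\,\p(\widetilde X_{\tau_2}\in[-8,7])$ --- exactly your product bound with the roles of $\tau_1$ and $\tau_2$ interchanged. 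You are symmetric: you strip the shared terms from \emph{both} statistics, condition on the eight shared edge indicators $S$, and exploit the exact identity $\mathrm{Cov}=\mathrm{Cov}_S\big(p_1(S),p_2(S)\big)$ followed by H\"older, bounding $\|\delta_1\|_\infty$ by the oscillation of $c\mapsto\p(Y_1\le -1-c)$ over $c\in\{-8,-4,0,4,8\}$. Your route buys an exact covariance representation and makes transparent that only the $O(1)$ shared, bounded terms can contribute; the paper's route buys brevity, needing no conditioning machinery --- just two one-line monotonicity inequalities. Both correctly lean on the hypothesis $\Lambda\in[\eta,1-\eta]^{K\times K}$ from Theorem \ref{thm:no_match} (to get $\sigma_\tau^2=\Theta(n)$ and $\mu_\tau=\Theta(\rho n)$, hence your $x=\Theta(\rho\sqrt n)$), and both land on the claimed $O\big((e^{-C_1\rho^2 n}+n^{-1/2})\,n^{-1/2}\big)$.
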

\begin{proof}
Note
\begin{align*}
\text{Cov}(\mathds{1}\{E_{\tau_1}\},\mathds{1}\{E_{\tau_2}\})&=\p(X_{\tau_1}\leq -1,X_{\tau_2}\leq -1)-\p(X_{\tau_1}\leq -1)\p(X_{\tau_2}\leq -1)
\end{align*}
Observe that $X_{\tau_1}$ and $X_{\tau_2}$ are each then the sum of $n-2$ independent terms ($\{X^{(k)}_{\tau_1}\}_{k\neq i,j}$ and $\{X^{(k)}_{\tau_2}\}_{k\neq h,\ell}$ resp.) which are collectively independent except for the four terms
$$X^{(h)}_{\tau_1}=4(A_{i,h}-A_{j,h})(B_{i,h}-B_{j,h}),\,X^{(\ell)}_{\tau_1}=4(A_{i,\ell}-A_{j,\ell})(B_{i,\ell}-B_{j,\ell}),$$
$$X^{(i)}_{\tau_2}=4(A_{h,i}-A_{\ell,i})(B_{h,i}-B_{\ell,i}),\,X^{(j)}_{\tau_2}=4(A_{h,j}-A_{\ell,j})(B_{h,j}-B_{\ell,j}).$$
Let $\widetilde X_{\tau_2}=X_{\tau_2}-X^{(i)}_{\tau_2}-X^{(j)}_{\tau_2}$, so that $X_{\tau_1}$ and $\widetilde X_{\tau_2}$ are independent.
Noting that $|\widetilde X_{\tau_2}-X_{\tau_2}|\leq 8$, we have 
$$\p(X_{\tau_1}\leq -1,X_{\tau_2}\leq -1)\leq\p(X_{\tau_1}\leq -1,\widetilde X_{\tau_2}\leq 7)=\p(X_{\tau_i}\leq -1)\p(\widetilde X_{\tau_j}\leq 7).$$
Therefore,
\begin{align}
\label{eq:cov}
\text{Cov}(\mathds{1}\{E_{\tau_1}\},\mathds{1}\{E_{\tau_2}\})&\leq \p(X_{\tau_1}\leq -1)\left(\p(\widetilde X_{\tau_2}\leq 7)-\p(X_{\tau_2}\leq -1)\right)\notag\\
&\leq \p(X_{\tau_1}\leq -1)\left(\p(\widetilde X_{\tau_2}\leq 7)-\p(\widetilde X_{\tau_2}\leq -9)\right)
\end{align}
As in Eq. (\ref{boundxhigh}), there is a constant $c_2>0$ such that $\sigma_\tau^2\leq c_2n$ for $n$ sufficiently large, and
\begin{align}
\label{boundxlow}
x_1&:=\frac{1+\mu_{\tau_1}}{\sqrt{2}\sigma_{\tau_1}}\geq \frac{1+8\eta(1-\eta)\rho n}{\sqrt{2c_2n}}\geq \frac{8\eta(1-\eta)\rho n}{\sqrt{2c_2n}}=\frac{8\eta(1-\eta)\rho\sqrt{n}}{\sqrt{2c_2}},
\end{align}
and from Eq. (\ref{eq:bebound1}) we have that
\begin{align}
\label{eq:cov1}
\p(X_{\tau_1}\leq -1)&\leq
\frac{\text{exp}\left\{-\left(\frac{8\eta(1-\eta)\rho \sqrt{n}}{\sqrt{2c_2}}\right)^2\right\}}{2}
 +\Theta\left( \frac{1}{\sqrt{n}}\right)\notag\\
 &=\frac{1}{2}\text{exp}\left\{-C_1\rho^2 n\right\}
 +\Theta\left( \frac{1}{\sqrt{n}}\right).
\end{align}
for a constant $C_1>0$.  
Define $\tilde\mu_{\tau_2}=\e(\widetilde X_{\tau_2})$, and  
$\tilde\sigma^2_{\tau_2}=\text{Var}(\widetilde X_{\tau_2})$.
By the same approach used above, the Berry-Esseen theorem yields (suppressing the details)
$
\p\left(\widetilde X_{\tau_2}\in[-8,7]\right)=
\Theta\left(\frac{1}{\sqrt{n}}\right).
$
Combined with Eq. (\ref{eq:cov1}), this yields the desired result.
\end{proof}
We now combine Lemmas \ref{lem:lowerboundonp} and \ref{lem:covbnd} to bound $\text{Var}(X_\tau)$ where $\tau$ is a within-block transposition.
\begin{lemma}
\label{lem:varbnd}
With notation as above and assumptions as in Theorem \ref{thm:no_match}, let $\tau$ be a within-block transposition.  We have that
\begin{align*}\text{Var}(X_\tau)=O\left(\frac{N^2}{\sqrt{n}}  \right)
\end{align*}
\end{lemma}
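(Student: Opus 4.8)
The plan is to expand the variance of the count $X=\sum_{i=1}^N\mathds{1}\{E_{\tau_i}\}$ (the random variable appearing in the proof of Theorem~\ref{thm:no_match}) along the usual decomposition
$$\text{Var}(X)=\sum_{i=1}^N\text{Var}\big(\mathds{1}\{E_{\tau_i}\}\big)+\sum_{i\neq j}\text{Cov}\big(\mathds{1}\{E_{\tau_i}\},\mathds{1}\{E_{\tau_j}\}\big),$$
and to control the diagonal and off-diagonal sums separately, leaning on Lemma~\ref{lem:covbnd} for the covariances.

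First I would dispense with the diagonal terms. Each $\mathds{1}\{E_{\tau_i}\}$ is Bernoulli, so $\text{Var}(\mathds{1}\{E_{\tau_i}\})=\p(E_{\tau_i})(1-\p(E_{\tau_i}))\leq 1/4$, whence $\sum_{i=1}^N\text{Var}(\mathds{1}\{E_{\tau_i}\})=O(N)$. Since $K$ is fixed and $N=\sum_i\lfloor n_i/2\rfloor\geq (n-K)/2$, we have $N=\Theta(n)$, and in particular $N=\Omega(\sqrt n)$; hence $O(N)=O(N^2/\sqrt n)$ and the diagonal contribution is of strictly lower order than the claimed bound.

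Next I would bound the covariance sum. By hypothesis the transpositions $\{\tau_i\}_{i=1}^N$ are pairwise disjoint, so for every $i\neq j$ the pair $(\tau_i,\tau_j)$ consists of two disjoint within-block transpositions and Lemma~\ref{lem:covbnd} applies directly:
$$\text{Cov}\big(\mathds{1}\{E_{\tau_i}\},\mathds{1}\{E_{\tau_j}\}\big)\leq C_2\left(\exp\{-C_1\rho^2 n\}+\Theta(1/\sqrt n)\right)\Theta(1/\sqrt n).$$
Because $\exp\{-C_1\rho^2 n\}\leq 1$ for every value of $\rho$, the first factor is $O(1)$, so each covariance is $O(1/\sqrt n)$ uniformly over the pair. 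Summing over the fewer than $N^2$ ordered pairs then gives $\sum_{i\neq j}\text{Cov}=O(N^2/\sqrt n)$.

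Combining the two estimates yields $\text{Var}(X)=O(N)+O(N^2/\sqrt n)=O(N^2/\sqrt n)$, as desired. The argument is essentially bookkeeping once Lemma~\ref{lem:covbnd} is in hand; the only points requiring care are (i) checking that every distinct pair of transpositions is disjoint, so that the covariance bound is applicable to all of the off-diagonal terms, and (ii) confirming $N=\Theta(n)$, so that the $O(N)$ diagonal contribution is genuinely absorbed into $O(N^2/\sqrt n)$. I do not expect a substantive obstacle beyond the work already carried out in Lemma~\ref{lem:covbnd}, where the near-independence of the counts for disjoint transpositions is exploited to drive each covariance down to $O(1/\sqrt n)$.
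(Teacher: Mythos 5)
Your proof is correct and follows essentially the same route as the paper: decompose $\text{Var}(X)$ into diagonal and off-diagonal terms, bound the covariances via Lemma~\ref{lem:covbnd} (noting $e^{-C_1\rho^2 n}\leq 1$), and absorb the diagonal contribution using $N=\Theta(n)$. The only cosmetic difference is that you bound each diagonal term by the trivial Bernoulli bound $1/4$, whereas the paper reuses the sharper estimate from Eq.~(\ref{eq:cov1}); both suffice since $O(N)=O(N^2/\sqrt{n})$.
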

\begin{proof}
From Eq. (\ref{eq:cov1}) there exists a constant $C_1>0$ such that
\begin{align*}
\text{Var}(\mathds{1}\{X_\tau\leq -1\})=(1-\p(X_\tau\leq -1))\p(X_\tau\leq -1)\leq \frac{1}{2}\text{exp}\left\{-C_1\rho^2 n\right\}
 +\Theta\left( \frac{1}{\sqrt{n}}\right).  
\end{align*}
Combined with Lemma \ref{lem:covbnd}, 
\begin{align*}
\text{Var}(X)\leq \frac{N}{2}e^{-C_1\rho^2 n}+\Theta\left( \frac{N}{\sqrt{n}}\right)+
N^2C_2\left(e^{-C_1\rho^2 n}+\Theta\left( \frac{1}{\sqrt{n}}\right)\right)
\Theta\left( \frac{1}{\sqrt{n}}\right)
= O\left(\frac{N^2}{\sqrt{n}}  \right)
\end{align*}
as desired.
\end{proof}


\subsection{Proof of Theorem \ref{thm3}}
\label{sec:recover}
The key to the proof of Theorem \ref{thm3} is the following consequence of Theorem \ref{them:GMSBM}:  If ($G_1,G_2)\sim\rho-$correlated SBM($K,\vec{n},b,\Lambda$) with $\rho=\omega(\sqrt{\log n/n})$ and respective adjacency matrices $A$ and $B$, then $\text{argmin}_{P\in \Pi(n)}\|AP-PB\|_F=\{I_n\}$ with high probability.
\begin{proposition}
Let $(G_1,\boldsymbol{\sigma}(G_2))\sim\boldsymbol{\sigma},\rho-$correlated SBM$(K,\vec{n},b,\Lambda)$ with $\rho=\omega(\sqrt{\log n/n})$.
Under the assumptions of Theorem \ref{them:GMSBM}, we have that
$\p\big[\GMs\neq G_2\big]=O(e^{-3\log n}).$
\end{proposition}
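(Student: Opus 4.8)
The plan is to reduce the assertion about $\GMs$ to the matchability of $(G_1,G_2)$ established in Theorem \ref{them:GMSBM}. First I would invoke the distributional identity recorded immediately after Definition \ref{def:GMM}, namely $\p[(G_1,G_2,\GMs)=(x,y,z)]=\p[(G_1,G_2,\GM)=(x,y,z)]$ for all $(x,y,z)$. Summing over the diagonal $z=y$ gives $\p[\GMs=G_2]=\p[\GM=G_2]$, so it suffices to bound $\p[\GM\neq G_2]$ in the unshuffled model, where Theorem \ref{them:GMSBM} applies directly.

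Next I would condition on $(G_1,G_2)=(x,y)$ and use Definition \ref{def:GMM}: conditionally, $\GM$ is uniform on $P^*_{x,y}(y)=\{\phi(y)\,:\,P_\phi\in P^*_{x,y}\}$. The key observation is that if $(x,y)$ are matchable, i.e. $P^*_{x,y}=\{I_n\}$, then $P^*_{x,y}(y)=\{y\}$ is a single graph, so conditionally $\GM=y=G_2$ almost surely. Integrating the conditional bound $\p[\GM\neq G_2\mid (G_1,G_2)=(x,y)]\leq\mathds{1}\{P^*_{x,y}\neq\{I_n\}\}$ over $(x,y)$ then yields $\p[\GM\neq G_2]\leq\p[P^*_{G_1,G_2}\neq\{I_n\}]$.

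Finally I would bound the non-matchability probability via the proof of Theorem \ref{them:GMSBM}. The event $P^*_{G_1,G_2}\neq\{I_n\}$ coincides with the existence of some $\tau\neq\text{id}_n$ for which $\|A-P_\tau BP_\tau^T\|_F\leq\|A-B\|_F$, i.e. $X_{\tau,A,B}\leq 0$. For each fixed $\tau$ shuffling $m\geq 2$ vertices, the proof of Theorem \ref{them:GMSBM} bounds exactly this event, $\p(X_{\tau,A,B}\leq 0)=\p(2F\geq \frac{1}{2}\|A-P_\tau AP_\tau^T\|_F^2)\leq(24K^4+1)e^{-3m\log n}$; a union bound over the at most $n^m$ permutations with $s(\tau)=m$, followed by summation over $m\geq 2$, gives $\p[P^*_{G_1,G_2}\neq\{I_n\}]=O(e^{-3\log n})$. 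Chaining the three steps delivers $\p[\GMs\neq G_2]=O(e^{-3\log n})$.

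The one point demanding care is the distinction between $I_n$ merely being a minimizer and being the \emph{unique} minimizer. Definition \ref{defn:matchable} requires uniqueness, whereas the \emph{statement} of Theorem \ref{them:GMSBM} only forbids a strict improvement over $I_n$ (it controls $X_{\tau,A,B}\leq -1$); since $X_{\tau,A,B}$ is integer-valued this leaves room for ties $X_{\tau,A,B}=0$, and a tie with $\tau(y)\neq y$ would put a second graph into $P^*_{x,y}(y)$ and make $\GM=G_2$ fail with positive probability. The resolution, and the crux of the argument, is that the proof of Theorem \ref{them:GMSBM} actually controls the non-strict event $X_{\tau,A,B}\leq 0$ rather than $X_{\tau,A,B}\leq -1$; this is precisely the stronger ``$\text{argmin}=\{I_n\}$ with high probability'' conclusion flagged just before the Proposition, and it is what upgrades ``no improving permutation'' to genuine matchability.
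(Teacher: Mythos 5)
Your proposal is correct and follows essentially the same route as the paper: reduce $\p\big[\GMs\neq G_2\big]$ to the unshuffled model via the distributional identity after Definition \ref{def:GMM}, bound it by the probability that $P^*_{G_1,G_2}\neq\{I_n\}$, and control that event using Theorem \ref{them:GMSBM}. Your closing observation---that the theorem's \emph{statement} only forbids strict improvements while matchability requires a unique argmin, so one must appeal to the fact that its \emph{proof} actually bounds the non-strict event $X_{\tau,A,B}\leq 0$---is a legitimate refinement of a step the paper dismisses as an ``immediate consequence,'' and it is exactly the stronger claim the paper asserts without justification in the sentence preceding the proposition.
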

%
%
\begin{proof}
To ease notation, define
\begin{align*}
\p(x,y,z)&=\p\big[(G_1,G_2,\GMs)=(x,y,z)\big],\\ 
\p(x,z)&=\p\big[(G_1,\GMs)=(x,z)\big],\\
\p(x,y)&=\p\big[(G_1,G_2)=(x,y)\big].
\end{align*}
Note that
\begin{align*}
\p\big[\GMs\neq G_2\big]&=
\sum_{\substack{(x,y,z)\\\text{s.t. }z\neq y}}\p(x,y,z)=
\sum_{\substack{(x,y,z)\\ \text{s.t. }z\neq y}}\p(x,y)
\frac{\mathds{1}\{z\in P^*_{x,y}(y)\}}{|P^*_{x,y}(y)|}\\
&=\sum_{\substack{(x,y,z)\text{ s.t. }z\neq y,\\z\in P^*_{x,y}(y) }}\frac{\p(x,y)}{|P^*_{x,y}(y)|}
=\sum_{x,y}\sum_{\substack{z\in P^*_{x,y}(y)\\\text{s.t. }z\neq y}}\frac{\p(x,y)}{|P^*_{x,y}(y)|}
\leq \sum_{\substack{x,y \text{ s.t. }\\P^*_{x,y}(y)\neq\{y\}}}\p(x,y).
\end{align*}
As 
$P^*_{x,y}(y)\neq\{y\}\Rightarrow P^*_{x,y}\neq \{I\},$
this implies
$$\sum_{\substack{x,y \text{ s.t. }\\P^*_{x,y}(y)\neq\{y\}}}\p(x,y)\leq \sum_{\substack{x,y\text{ s.t.}\\P^*_{x,y}\neq \{I\}}}\p(x,y)=O(e^{-3\log n}),$$
where the last equality is an immediate consequence of Theorem \ref{them:GMSBM}.
\end{proof}
\noindent Theorem \ref{thm3} is then a straightforward application of Fano's inequality, which yields that 
\begin{align*}
&H\big[\GMs|G_2\big]=o(1),\text{ and }H\big[G_2|\GMs\big]=o(1).
\end{align*}
By the chain rule for entropy, it follows immediately that 
$H\big[G_2|G_1\big]=H\big[\GMs|G_1\big]+o(1),$ and  
$H\big[\GMs\big]=H\big[ G_2\big]+o(1).$
Combined, this yields that 
$I(G_1;G_2)-I(G_1;\GMs)=o(1),$
as desired.

\bibliographystyle{plain}
\bibliography{biblio}
\end{document}